\newcommand{\citesupp}[1]{Appendix~\ref{#1}}
\newcommand{\squishlisttwo}{
 \begin{list}{$\bullet$}
  { \setlength{\itemsep}{1pt}
     \setlength{\parsep}{0pt}
    \setlength{\topsep}{0pt}
    \setlength{\partopsep}{0pt}
    \setlength{\leftmargin}{1em}
    \setlength{\labelwidth}{1.5em}
    \setlength{\labelsep}{0.5em} } }
\newcommand{\squishend}{
  \end{list}  }
\newtheorem{theorem}{Theorem}
\newtheorem{corollary}{Corollary}
\newtheorem{lemma}{Lemma}
\newtheorem{proposition}{Proposition}
\newtheorem{definition}{Definition}
\newenvironment{manualproposition}[1]{%
  \manualtheoreminner
}{\endmanualtheoreminner}
\newenvironment{manualcorollary}[1]{%
  \manualcoroinner
}{\endmanualcoroinner}
\newcommand{\expect}{\mathbb{E}}
\newcommand{\brac}[1]{\left( #1 \right)}
\newcommand{\logicaland}{\mathbin{\wedge}}
\newcommand{\mbf}[1]{\mathbf{#1}}
\newcommand{\inn}[2]{\left\langle #1, #2 \right\rangle}
\newcommand{\norm}[1]{\left\lVert#1\right\rVert}
\title{Incentivizing Collaboration in Machine Learning via Synthetic Data Rewards}
\author{
    Sebastian Shenghong Tay,\textsuperscript{\rm 1}\textsuperscript{\rm 2}
    Xinyi Xu,\textsuperscript{\rm 1}\textsuperscript{\rm 2}
    Chuan Sheng Foo,\textsuperscript{\rm 2}
    Bryan Kian Hsiang Low\textsuperscript{\rm 1}
}
\begin{document}
\maketitle
\begin{abstract}
This paper presents a novel \emph{collaborative generative modeling} (CGM) framework that incentivizes collaboration among self-interested parties to contribute data to a pool for training a generative model (e.g., GAN), from which synthetic data are drawn and distributed to the parties as rewards commensurate to their contributions. Distributing synthetic data as rewards (instead of trained models or money) offers task- and model-agnostic benefits for downstream learning tasks and is less likely to violate data privacy regulation. To realize the framework, we firstly propose a data valuation function using \emph{maximum mean discrepancy} (MMD) that values data based on its quantity and quality in terms of its closeness to the true data distribution and provide theoretical results guiding the kernel choice in our MMD-based data valuation function. Then, we formulate the reward scheme as a linear optimization problem that when solved, guarantees certain incentives such as fairness in the CGM framework. We devise a weighted sampling algorithm for generating synthetic data to be distributed to each party as reward such that the value of its data and the synthetic data combined matches its assigned reward value by the reward scheme. We empirically show using simulated and real-world datasets that the parties' synthetic data rewards are commensurate to their contributions.
\end{abstract}

\section{Introduction}
For the state-of-the-art deep learning models, training with a large quantity of data is important to prevent overfitting and achieve good generalization. 
So, when there are multiple parties with each owning a dataset sampled from the same distribution, 
pooling their datasets and training on the pooled dataset would yield an improved \emph{machine learning} (ML) model for every participating party. For example, banks that use ML models to predict their customers' credit ratings~\cite{TSAI2010374} would benefit from pooling their datasets as every bank can now train its ML model on a much larger dataset with more unique customers. This benefit would be even more pronounced in applications where data is difficult/costly to obtain and every party has limited data, such as in medical imaging~\cite{sandfort2019data}. 

However, data sharing/pooling is challenging in practice due to issues of data privacy~\cite{devereaux2016} and possibly inequitable benefits from such a form of collaboration~\cite{lo2016}. To elaborate on the latter, parties would be more willing to participate in the collaboration if \emph{fairness} is guaranteed. For example, if only one party contributes  high-quality data 
but all parties receive equal rewards, then clearly the other parties benefit disproportionately from the collaboration and that contributing party has no incentive to participate, especially when all parties are self-interested. One may define fairness as every party receiving a reward commensurate to its contribution (however contribution is measured), which incentivizes not only participation but also non-trivial contributions from all parties.

To resolve the above issues, the notion of \emph{collaborative ML} (also referred to as \emph{multi-party ML}~\cite{chen2020mechanism}) allows multiple self-interested parties to mutually benefit from collaboration 
in data sharing/pooling
by incentivizing non-trivial contributions from them while accounting for fairness and data privacy. 
A prior work of collaborative ML~\cite{simcollaborative}  has focused on the supervised learning setting where every party contributes training data and receives a model as reward with predictive performance commensurate to its contribution, while another work~\cite{ohrimenko2019collaborative} has developed a marketplace where parties pay money for better performing ML models on their specific learning tasks and receive money when their contributed data improve the ML models of other parties. 
A key limitation of these works is that trained ML models are distributed to the parties as rewards, which limits each party's flexibility to experiment with different model architectures and hyperparameters. If more competitive architectures emerge in the future, the parties cannot take advantage of these new architectures without reinitiating the collaboration. Another limitation of distributing trained ML models as rewards is that it precludes the possibility of performing a different learning task on the same dataset as the ML model is tied to a specific task.

One way of overcoming the above limitations is to distribute synthetic data to the parties as rewards (in short, \emph{synthetic data rewards}) instead of trained models. It has been demonstrated that  augmenting real data with synthetic data can improve model performance: For example, some works~\cite{bowles2018gan,frid2018gan,sandfort2019data} have used \emph{generative adversarial networks} (GANs) for data augmentation to improve classification performance on various medical imaging applications such as liver lesions or brain scan segmentations. 
Distributing synthetic data as rewards is less likely to violate data privacy regulations, unlike sharing real data directly.
Also, there is no assumption on whether all parties share a common downstream learning task, the task of interest to each party  (e.g., supervised or unsupervised, classification or regression), or the type of ML model used by each party. In particular, with the synthetic data reward, each party can now optimize over model architectures and hyperparameters, train new model architectures emerging in the future, and train separate ML  models for different learning tasks. 

As a departure from the restriction to supervised learning, this paper presents a novel \emph{collaborative generative modeling} (CGM) framework that incentivizes collaboration among self-interested parties to contribute data to a pool for training an unsupervised generative model (e.g., a GAN), from which 
synthetic data are drawn and distributed to the parties as rewards (i.e., commensurate to their contributions)
instead of 
sharing real data directly. Like previous works on collaborative ML~\cite{ohrimenko2019collaborative, simcollaborative}, our CGM framework only requires a trusted mediator to train the generative model on the pooled dataset but differs in offering the above-mentioned task- and model-agnostic benefits of synthetic data rewards.
Our framework does not consider monetary payment and hence enables participation from parties such as startups or non-profit organizations with data but limited/no funds. 
Our work here provides the following specific novel contributions:
\squishlisttwo
    \item We propose a task- and model-agnostic data valuation function 
    using \emph{maximum mean discrepancy} (MMD)
    that values (real and/or synthetic) data based on its quantity and quality in terms of its closeness to the true data distribution
    (Sec.~\ref{junkyard}), and provide theoretical results guiding the choice of the kernel in our MMD-based data valuation function (Sec.~\ref{sect:kernelselection});
    \item We formulate the reward scheme as a linear optimization problem that when solved, guarantees certain incentives such as fairness in the CGM framework
    (Sec.~\ref{lalaland});
    \item We devise a weighted sampling algorithm for generating synthetic data to be distributed to each party as reward such that the value of its data and the synthetic data combined matches its assigned reward value by the reward scheme (Sec.~\ref{force}),   
    and empirically show using simulated and real-world datasets that 
    the parties' synthetic data rewards are commensurate to their contributions 
    (Sec.~\ref{expt}).\vspace{0.5mm}
\squishend

\noindent
\textbf{Related Work.} Collaborative ML is a rich and novel field which uses solution concepts from cooperative game theory and mechanism design. 
The Shapley value is a commonly adopted solution concept to formalize a notion of fairness in quantifying the contributions of self-interested parties~(e.g., via their shared data)~\cite{ohrimenko2019collaborative, simcollaborative}.
This line of research inspires several data valuation methods using the Shapley value~\cite{ghorbani2020distributional, Ghorbani2019, Jia2020, Wang2020-SV-in-FL}, the core~\cite{yan2021core}, influence functions~\cite{richardson2019rewarding}, and volume~\cite{xu2021validation}.
Previous works have used concepts from mechanism design to elicit truthful reporting~\cite{chen2020mechanism, Richardson2020-FL-incentive} and to incentivize sharing data and/or model parameters in \emph{federated learning}~\cite{Cong2020-FL-incentive, kang2019incentive2, kang2019incentive, lyu2020-CFFL,Yu-et-al:2020AIES, zhan2020learning, xu2021gradient}.
Other works have addressed data privacy~\cite{ding2021differentially,Hu2019-FDML-collaborative-on-features}, adversarial robustness~\cite{ hayes2018contamination, so2020scalable}, communication efficiency~\cite{ding2021differentially}, and fairness in Bayesian optimization~\cite{sim2021collaborative}.
Compared to existing works which have mainly focused on supervised learning, our work investigates a novel task- and model-agnostic setting through the CGM framework that distributes synthetic data as rewards, which to the best of our knowledge has not been considered in the literature.

\section{Problem Statement and Notations}
\label{yoda}
The CGM framework comprises a set of honest, non-malicious parties $N := \{1, \ldots, n\}$ and their corresponding datasets $D_1,\ldots,D_n$. 
Let $\mathcal D$ be the true data distribution s.t.~each party $i$ may only be able to sample its dataset $D_i$ from a restricted subset of the support of $\mathcal D$. 
Every party $i$ sends $D_i$ to a trusted mediator who trains a generative model (e.g., GAN, variational autoencoder, or flow-based model) on the pooled dataset $\bigcup_{i\in N}D_i$ to produce a distribution $\mathcal G$ from which the mediator is able to draw samples. Informally, $\mathcal G$ represents an approximation of $\mathcal D$. The mediator then generates a large 
\emph{synthetic dataset} $G$ s.t.~each synthetic data point in $G$ is drawn i.i.d.~from $\mathcal G$. The reward to each party $i$ will be a subset $G_i$ (of synthetic data points) of $G$ and is thus said to be \emph{freely replicable}.\footnote{\label{mojo}Like digital goods, model or data reward can be replicated at no marginal cost and given to more parties~\cite{simcollaborative}.}
In this paper, we use the following definitions from cooperative game theory \cite{chalkiadakis2011computational}: A \emph{coalition} $C$ is a subset of parties (i.e., $C \subseteq N$). The \emph{grand coalition} is the set $N$ of all parties. A \emph{coalition structure} $CS$ is a partition of the parties into disjoint coalitions s.t.~$\bigcup_{C \in CS} C = N$, $C \cap C' = \emptyset$ for all $C, C' \in CS$ and $C \neq C'$,
and each party cooperates only with parties in the same coalition. 
A \emph{characteristic function} $v_c: 2^N \rightarrow \mathbb{R}$ maps each coalition to a (real) value of the coalition.
Finally, the \emph{reward vector} $(r_1, \ldots,r_n) \in \mathbb R^{n}$ denotes the final reward values assigned to parties $1,\ldots,n$.

Our work here considers the problem of CGM defined as follows: 
Given the parties' datasets $D_1,\ldots,D_n$ and an appropriate \emph{data valuation} function $v$ (quantitatively capturing the practical assumptions A, B, and C in Sec.~\ref{junkyard} on the desired qualities of a dataset), determine the reward vector $(r_1, \ldots,r_n)$
that guarantees certain incentives (Sec.~\ref{sect:rewardscheme}),
and then distribute subsets of synthetic data points $G_1,\ldots,G_{n} \subseteq G$ to the respective parties $1,\ldots,n$ as rewards s.t.~$v(D_i \cup G_i) = r_i$ (Sec.~\ref{force}).

\section{Data Valuation with Maximum Mean Discrepancy (MMD)}
\label{junkyard}
Existing metrics for evaluating the approximation quality of generative models do so by measuring some form of distance between the generated and the true distributions~\cite{borji2019pros}. One such distance measure is the \emph{maximum mean discrepancy} (MMD) which is a statistic to test whether two distributions $\mathcal D'$ and $\mathcal D$ are different by measuring the difference of their expected function values based on samples drawn from these distributions~\cite{gretton2012kernel}:
\begin{equation*}
\textstyle    \text{MMD}(\mathcal F, \mathcal D', \mathcal D)\! :=\! \sup_{f \in \mathcal F}\left(\expect_{x \sim \mathcal D}[f(x)]\! - \!\expect_{x' \sim \mathcal D'}[f(x')]\right)  
\end{equation*}
where $\mathcal F$ is the 
class of functions $f$ 
in the unit ball of 
the reproducing kernel Hilbert space associated with a kernel function $k$. We defer the discussion on kernels appropriate for use with MMD to \citesupp{app:kernel}, and will discuss the choice of kernel function $k$ in  Sec.~\ref{sect:kernelselection}. Note that $\text{MMD}(\mathcal F, \mathcal D', \mathcal D) = 0$ iff $\mathcal D' = \mathcal D$ \cite{gretton2012kernel}. 
Let the \emph{reference dataset} $T := D_1 \cup\ldots \cup D_n \cup G$ denote a union of the pooled dataset with the synthetic dataset and hence represents all available data in our problem setting. 
Let $t := |T|$ and $S$ be any arbitrary subset of $T$ where $s := |S|$.
The unbiased estimate ${\text{MMD}}^2_u(\mathcal F, S, T)$ and biased estimate ${\text{MMD}}^2_b(\mathcal F, S, T)$ of the squared MMD can be obtained in the form of matrix Frobenius inner products, as shown in~\cite{gretton2012kernel}:  
\begin{equation}
\begin{array}{l}
    \displaystyle{\text{MMD}}^2_u(\mathcal F, S, T)= \displaystyle\langle (s(s-1))^{-1} \boldsymbol1_{[x, x' \in S, x \neq x']}\ - \vspace{1mm}\\ 
    \quad 2(st)^{-1} \boldsymbol1_{[x \in S, x'\in T]} + (t(t-1))^{-1}\boldsymbol1_{[x, x'\in T, x \neq x']}, \mbf K \rangle \vspace{2mm}\\
    \displaystyle{\text{MMD}}^2_b(\mathcal F, S, T)= \displaystyle\langle s^{-2} \boldsymbol1_{[x, x' \in S]} \ - \vspace{1mm}\\
   \quad 2(st)^{-1} \boldsymbol1_{[x \in S, x'\in T]} + t^{-2}\boldsymbol1_{[x, x'\in T]}, \mbf K \rangle\vspace{-4mm}
\end{array}
\label{seb}\vspace{2mm}
\end{equation}
where $\boldsymbol 1_{A}$ is a matrix with components $1(x,x')$ for all $x,x'\in T$ such that $1(x,x')$ is an indicator function of value $1$ if condition $A$ holds and $0$ otherwise,
and $\mbf K$
is a matrix with components $k(x,x')$ for all $x,x'\in T$.

Our \emph{data valuation} function exploits the negative  ${\text{MMD}}^2_b(\mathcal F, S, T)$~\eqref{seb} w.r.t.~reference dataset $T$:\footnote{A similar form to~\eqref{eq:valuefunc} is  considered in another work with a different focus on interpretable ML~\cite{kim2016examples}.}
\begin{equation}
\begin{array}{rl}
    v(S) :=&\hspace{-2.4mm} \displaystyle\left\langle t^{-2}\boldsymbol1_{[x, x'\in T]}, \mbf K \right\rangle - {\text{MMD}}^2_b(\mathcal F, S, T) \vspace{0.5mm}\\
    =&  \hspace{-2.4mm}\displaystyle\left\langle {2}(st)^{-1} \boldsymbol1_{[x \in S, x'\in T]} - s^{-2} \boldsymbol1_{[x, x' \in S]}, \mbf K \right\rangle 
\end{array}    
    \label{eq:valuefunc}
\end{equation}
which is a reasonable choice for our problem setting under the following practical assumptions:\vspace{0.5mm}

\noindent
(A) Every party benefits from having 
data drawn from $\mathcal D$ 
besides having just its dataset $D_i$ since $D_i$ may only be sampled from a restricted subset of the support of $\mathcal D$ (Sec.~\ref{yoda}). We discuss its validity in \citesupp{app:assumptionA}. \vspace{0.5mm}

\noindent
(B) The empirical distribution associated with the reference dataset $T$ (i.e., the pooled dataset and synthetic dataset) approximates the true data distribution $\mathcal D$ well.
This principle of approximating the ground truth with an aggregate has precedence in multi-party ML~\cite{blanchard2017machine}.\vspace{0.5mm} 

\noindent
(C) Having more data is at least never worse off, which is generally true for ML problems (precluding cases such as excessively noisy data or adversarial data)
and investigated in computational learning theory in the form of sample complexity~\cite{bousquet2003introduction}.\vspace{0.5mm}

We will now show that under such practical assumptions, $v(S)$~\eqref{eq:valuefunc} w.r.t.~reference dataset $T$ is a reasonable choice for data valuation: 
\begin{proposition}
\label{prop:valuefunc}
Let $k^*$ be the value of every diagonal component of $\mathbf K$ s.t.~$k^* := k(x, x)\geq k(x, x')$ for all $x,x'\in T$, and 
$\sigma_S := \left \langle s^{-2} \boldsymbol1_{[x, x' \in S]}, \mbf K \right\rangle$. Then,  $v(S)$~\eqref{eq:valuefunc} can be re-expressed as
\begin{equation}
    v(S) = (s-1)^{-1}(\sigma_S-k^*)-{\textup{MMD}}^2_u(\mathcal F, S, T) + c
\label{seb2}    
\end{equation}
where $c$ is a constant (i.e., independent of $S$).
\end{proposition}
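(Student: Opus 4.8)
The plan is to expand both $v(S)$ from~\eqref{eq:valuefunc} and $\text{MMD}^2_u(\mathcal F, S, T)$ from~\eqref{seb} as Frobenius inner products against $\mbf K$ and match $S$-dependent terms, exploiting the fact that the only structural difference between the biased and unbiased estimators is whether the diagonal entries (where $x=x'$) are included. First I would introduce shorthand for the three inner products that actually appear: the within-$S$ off-diagonal sum $A_S := \langle \boldsymbol1_{[x,x'\in S,\,x\neq x']}, \mbf K\rangle$, the cross sum $B := \langle \boldsymbol1_{[x\in S,\,x'\in T]}, \mbf K\rangle$, and the within-$T$ off-diagonal sum $A_T := \langle \boldsymbol1_{[x,x'\in T,\,x\neq x']}, \mbf K\rangle$. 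The last of these depends only on $T$, not on $S$, and will be absorbed into the constant $c$.

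The crucial step is to bridge the diagonal-inclusive indicator (used by $v(S)$ and $\sigma_S$) with the off-diagonal one (used by $\text{MMD}^2_u$). Since every diagonal entry of $\mbf K$ equals $k^*$ and $|S|=s$, I have $\langle \boldsymbol1_{[x,x'\in S]}, \mbf K\rangle = A_S + s\,k^*$, whence $\sigma_S = s^{-2}(A_S + s\,k^*)$ and, inverting, $A_S = s^2\sigma_S - s\,k^*$. This single identity is what lets me convert the off-diagonal quantity in $\text{MMD}^2_u$ into the diagonal-inclusive $\sigma_S$ appearing in the claim.

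With this in hand I would evaluate the right-hand side of~\eqref{seb2}. From~\eqref{eq:valuefunc} one reads off $v(S) = 2(st)^{-1}B - \sigma_S$, while $\text{MMD}^2_u = (s(s-1))^{-1}A_S - 2(st)^{-1}B + (t(t-1))^{-1}A_T$. Substituting $A_S = s^2\sigma_S - s\,k^*$ into the candidate expression $(s-1)^{-1}(\sigma_S - k^*) - \text{MMD}^2_u$, the $2(st)^{-1}B$ contribution reproduces the one in $v(S)$; the $k^*$ terms cancel; and the $\sigma_S$ coefficients combine as $(s-1)^{-1} - s(s-1)^{-1} = -1$, leaving exactly $-\sigma_S$. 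The only residue is $-(t(t-1))^{-1}A_T$, which is independent of $S$, so setting $c := (t(t-1))^{-1}A_T = (t(t-1))^{-1}\langle \boldsymbol1_{[x,x'\in T,\,x\neq x']}, \mbf K\rangle$ closes the identity.

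I expect the main obstacle to be purely bookkeeping rather than any conceptual hurdle: separating diagonal from off-diagonal contributions in each indicator matrix and tracking the $s$-dependent prefactors so that the $k^*$ terms and the competing $\sigma_S$ versus $s\sigma_S$ terms cancel exactly instead of leaving an $S$-dependent remainder. Notably, the hypothesis $k^* \geq k(x,x')$ is not actually needed for the identity itself (only that all diagonal entries share the common value $k^*$); the inequality matters for the later interpretation of $v(S)$, not for this algebraic rearrangement. Hence the whole argument reduces to an exact substitution hinging on the bridge $A_S = s^2\sigma_S - s\,k^*$.
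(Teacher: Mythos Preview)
Your proposal is correct and follows essentially the same approach as the paper: both hinge on the diagonal/off-diagonal bridge (your $A_S = s^2\sigma_S - sk^*$, the paper's equivalent $\sigma'_S = \tfrac{s}{s-1}\sigma_S - \tfrac{k^*}{s-1}$), and both identify the constant as $c = (t(t-1))^{-1}\langle \boldsymbol1_{[x,x'\in T,\,x\neq x']},\mbf K\rangle$. The only cosmetic difference is that the paper routes the computation through the difference $\text{MMD}^2_u - \text{MMD}^2_b$ before invoking $v(S) = \sigma_T - \text{MMD}^2_b$, whereas you substitute directly into the right-hand side of~\eqref{seb2}; your observation that the inequality $k^* \geq k(x,x')$ is unused in the identity itself is also accurate.
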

Since $\sigma_S$ is an average of kernel components $k(x,x')$ for all $x,x'\in S$, 
$\sigma_S \leq k^*$. It follows that the value $v(S)$~\eqref{seb2} of dataset $S$ appears to weakly increase as $s$ increases (hence satisfying assumption C) and/or as ${\text{MMD}}^2_u(\mathcal F, S, T)$ decreases (thus satisfying assumptions A and B, since ${\text{MMD}}^2_u(\mathcal F, S, T)$ is an unbiased estimate of the squared
MMD between the distributions associated with $S$ and $T$). But, this interpretation is not entirely correct as
the value of $\sigma_S$ may fluctuate with an increasing $s$, which depends on what data points are added to $S$. The result below gives a more precise interpretation if the value of every off-diagonal component of $\mathbf K$ can be bounded:

\begin{corollary}
\label{coro:valuefunc}
Suppose that there exist some constants $\gamma$ and $\eta$ s.t.~$\gamma\leq k(x,x')\leq \eta\leq k^*$ for all $x,x'\in T$ and $x\neq x'$.
\begin{multline}
    s^{-1}(\gamma - k^*)-{\textup{MMD}}^2_u(\mathcal F, S, T) + c \leq v(S) \\ \leq  s^{-1}(\eta - k^*)-{\textup{MMD}}^2_u(\mathcal F, S, T) + c\ .
\label{rara}    
\end{multline}
\end{corollary}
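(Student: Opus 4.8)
The plan is to build directly on the reformulation from Proposition~\ref{prop:valuefunc}, which expresses $v(S) = (s-1)^{-1}(\sigma_S - k^*) - {\textup{MMD}}^2_u(\mathcal F, S, T) + c$. Since the ${\textup{MMD}}^2_u$ and $c$ terms already match those in the corollary verbatim, the whole task collapses to proving the two-sided bound $s^{-1}(\gamma - k^*) \leq (s-1)^{-1}(\sigma_S - k^*) \leq s^{-1}(\eta - k^*)$; the corollary then follows by substitution.

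First I would unfold $\sigma_S = \langle s^{-2} \boldsymbol1_{[x, x' \in S]}, \mbf K \rangle = s^{-2} \sum_{x, x' \in S} k(x,x')$ and split the double sum over ordered pairs into its $s$ diagonal entries and its $s(s-1)$ off-diagonal entries. Each diagonal entry equals $k^*$ by the hypothesis of Proposition~\ref{prop:valuefunc}, contributing $s\,k^*$ in total, while each off-diagonal entry satisfies $\gamma \leq k(x,x') \leq \eta$. This sandwiches the sum as $s\,k^* + s(s-1)\gamma \leq \sum_{x, x' \in S} k(x,x') \leq s\,k^* + s(s-1)\eta$.

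Next I would divide through by $s^2$ and subtract $k^*$. The lower side simplifies as $s^{-1}k^* + s^{-1}(s-1)\gamma - k^* = s^{-1}(s-1)(\gamma - k^*)$, and symmetrically the upper side becomes $s^{-1}(s-1)(\eta - k^*)$, giving $s^{-1}(s-1)(\gamma - k^*) \leq \sigma_S - k^* \leq s^{-1}(s-1)(\eta - k^*)$. Multiplying all three parts by the positive factor $(s-1)^{-1}$ cancels the $(s-1)$ cleanly and yields exactly the two-sided bound on $(s-1)^{-1}(\sigma_S - k^*)$ stated above. Plugging this back into the expression from Proposition~\ref{prop:valuefunc} finishes the proof.

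The argument is purely algebraic, so I anticipate no genuine obstacle; the only place demanding care is the combinatorial bookkeeping. The off-diagonal entries must be counted as the $s(s-1)$ ordered pairs (not $\binom{s}{2}$ unordered ones), and this very factor of $s-1$ is what cancels against the $(s-1)^{-1}$ coming from Proposition~\ref{prop:valuefunc}; getting either count wrong would break the clean $s^{-1}$ scaling that makes the bound meaningful as $s$ grows.
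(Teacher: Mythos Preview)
Your proposal is correct and follows essentially the same approach as the paper: both start from Proposition~\ref{prop:valuefunc}, split $\sigma_S$ into its $s$ diagonal terms (each equal to $k^*$) and $s(s-1)$ off-diagonal terms (each bounded between $\gamma$ and $\eta$), and then simplify so that the factor $(s-1)$ cancels against the $(s-1)^{-1}$ from Proposition~\ref{prop:valuefunc}. The only difference is cosmetic ordering of the algebra.
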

Since $\gamma \leq \eta \leq k^*$, as $s$ increases and/or ${\text{MMD}}^2_u(\mathcal F, S, T)$ decreases, the upper and lower bounds of $v(S)$ in~\eqref{rara} both weakly increase. 
So,
given that the above practical assumptions 
hold, $v(S)$ is a reasonable choice for data valuation as it accounts for both the dataset quantity $s$ and quality in terms of 
closeness to the empirical distribution associated with  reference dataset $T$ via ${\text{MMD}}^2_u(\mathcal F, S, T)$.
Also, $v(S)$ is downstream \emph{task-agnostic} (i.e., no assumption on how each party uses its synthetic data reward) and \emph{model-agnostic} (i.e., no restriction to the type of ML model adopted by each party) which are desirable properties as they afford flexibility
to the parties.
We will discuss in Sec.~\ref{sect:kernelselection} how $\gamma$ and $\eta$ can be set to guarantee a non-negative and monotone $v(S)$.

Finally, our \emph{characteristic function} for data valuation is defined as $v_c(C):= v(\bigcup_{i\in C} D_i)$ which will be used to determine the expected marginal contributions of parties $1,\ldots,n$ to CGM via the Shapley value and in turn their reward values $(r_1,\ldots,r_n)$ (Sec.~\ref{yoda}), as detailed next.

\section{Reward Scheme for Guaranteeing Incentives in CGM Framework}
\label{sect:rewardscheme}
To incentivize collaboration among all parties in the grand coalition, their assigned rewards have to satisfy certain incentive conditions established in cooperative game theory. However, classical cooperative game theory cannot be directly applied to our problem setting involving freely replicable synthetic data reward\cref{mojo}. 
Inspired by the reward scheme of~\citet{simcollaborative} for Bayesian supervised learning that is designed to guarantee certain incentives under freely replicable model reward\cref{mojo},
we will propose here a novel reward scheme that meets \emph{appropriately modified} incentive conditions to suit our CGM framework. 

We begin by considering the \emph{Shapley value} $\phi_i$ of party $i$, which quantifies its expected \emph{marginal contribution} when it joins the other parties preceding it in any permutation:
\begin{equation}
\textstyle
    \phi_i := ({1}/{n!}) \sum_{\pi \in \Pi_N} 
    \left[v_c(C_{\pi,i} \cup \{i\}) - v_c(C_{\pi,i})\right]
    \label{shapley}
\end{equation}
where the characteristic function $v_c$ for data valuation is previously defined in Sec.~\ref{junkyard}, $\Pi_N$ is the set of all possible permutations of $N$, and
$C_{\pi,i}$ is the coalition of parties preceding $i$ in permutation $\pi$~\cite{chalkiadakis2011computational}. 
The notion of marginal contribution (and hence Shapley value) plays a significant role in the properties of ({\bf F3}) strict desirability and ({\bf F4}) strict monotonicity 
that define the ({\bf R5}) fairness incentive in~\cite{simcollaborative}:\footnote{The other two properties: ({\bf F1}) uselessness and ({\bf F2}) symmetry defining R5 in~\cite{simcollaborative} are standard axioms of Shapley value~\cite{shapley1953value} and commonly used in works on data valuation~\cite{Ghorbani2019,Jia2020,ohrimenko2019collaborative}. 
Due to lack of space, we have reproduced the formal definitions of properties F1 to F4 in \citesupp{app:rewardincentives}.} 
In our work, the implication of F3 is that if the marginal contributions of parties $i$ and $j$ only differ for coalition $C$  (i.e., $v_c(C\cup \{i\}) > v_c(C\cup \{j\})$), then it is only fair for party $i$ to be assigned a larger reward value $r_i$; its effect on our modified F4 will be discussed later in Sec.~\ref{lalaland}.

Besides R5, the reward scheme of~\citet{simcollaborative} has considered other desirable incentives when forming the grand coalition $N$:
({\bf R1}) Non-negativity: $\forall i \in N\ \ r_i \geq 0$;
({\bf R2}) Feasibility: ${\forall i \in N}\ \ {r_i \leq v_{c}(N)}$;
({\bf R3}) Weak efficiency: ${\exists i \in N}\ \ {r_i = v_{c}(N)}$; 
({\bf R4}) Individual rationality: $\forall i \in N\ \ r_i \geq v_{c}(\{i\})$;
({\bf R6}) Stability: $\forall C \subseteq N \ \  \forall i \in C \ \ (\phi_i = \max_{j \in C} \phi_j) \Rightarrow\ v_c(C) \leq r_i$;
and ({\bf R7}) Group welfare involves maximizing $\sum_{i \in N} r_i$. 
Intuitively, R4 says that the reward value assigned to each party $i$ should be at least the value of its dataset $D_i$, which makes it prefer collaboration in $N$ than working alone. R6 states that the grand coalition is stable if for every coalition $C\subseteq N$, the reward value assigned to the party with largest Shapley value is at least the value  of datasets $\bigcup_{i\in C} D_i$, which prevents all parties in coalition $C$ from simultaneously breaking away and obtaining larger reward values.
We will describe the  intuition underlying our modified R2 and R3 in Sec.~\ref{lalaland}.

Given that $v_c$ is non-negative and monotonically increasing  (see Sec.~\ref{sect:kernelselection} for sufficient conditions that guarantee these properties), the reward scheme of~\citet{simcollaborative} exploits the notion of \emph{$\rho$-Shapley fair reward values} $r_i := \brac{{\phi_i}/{\phi^*}}^\rho \times v_c(N)$ for each party $i\in N$ with an adjustable parameter $\rho$ to trade off between satisfying the above incentive conditions. For your convenience, we've reproduced their main result and full definitions of incentive conditions in \citesupp{app:rewardincentives} and
consolidated our discussion of the key differences with the work of~\citet{simcollaborative} in \citesupp{app:comparison}.

\subsection{A Modified Reward Scheme with Rectified $\rho$-Shapley Fair Reward Values}
\label{lalaland}
Under the CGM framework, each party $i$ initially has dataset $D_i$ (Sec.~\ref{yoda}) and would thus be assigned at least a reward value of $r_i = v_c(\{i\}) = v(D_i)$, i.e., when $G_i = \emptyset$. 
This is a subtle yet important difference with the reward scheme of~\citet{simcollaborative}, the latter of which allows a party to be assigned a reward value of $0$. So, we introduce a \emph{rectified} form of the above $\rho$-Shapley fair reward values:
\begin{equation}
    r_i := \max \left\{v_c(\{i\}), \brac{{\phi_i}/{\phi^*}}^\rho \times v^*\right\}
\label{eq:rewarddef}    
\end{equation}
for each party $i\in N$ where $v^*$ is the maximum reward value (i.e., $v^*\geq r_i$ for any party $i\in N$), 
as discussed below (notice from Theorem~1 that $v^* = v_c(N)$ in~\cite{simcollaborative}). When the grand coalition $N$ forms, R4 is trivially satisfied since each party $i$ has at least its dataset $D_i$, hence distinguishing our modified reward scheme from that of~\citet{simcollaborative} whose R4 may be violated.
So, for our reward scheme, no party will be worse off by participating in the collaboration.
However, other non-trivial issues ensue: 

\begin{proposition}\label{prop:etalesser}
If $v^* = v_c(N)$ and $\rho$ satisfies  $\brac{{\phi_i}/{\phi^*}}^\rho \times v^* < v_c(\{i\})$ for some party $i\in N$, then $(r_1,\ldots,r_n)$~\eqref{eq:rewarddef} may not satisfy R5 due to possibly violating F3.
\end{proposition}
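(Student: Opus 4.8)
The plan is to show that the rectification in~\eqref{eq:rewarddef} can sever the strictly monotone link between Shapley value and reward on which F3 (and hence R5) rests. First I would recall why the \emph{unrectified} $\rho$-Shapley values $\brac{\phi_i/\phi^*}^\rho \times v^*$ satisfy F3. If party $q$ is strictly more desirable than party $p$, then the standard pairwise decomposition $\phi_q - \phi_p = \sum_{C \subseteq N \setminus \{p,q\}} w_{|C|}\,[v_c(C \cup \{q\}) - v_c(C \cup \{p\})]$ with strictly positive weights $w_{|C|}$ forces $\phi_q > \phi_p$; and since $x \mapsto \brac{x/\phi^*}^\rho \times v^*$ is strictly increasing for $\rho>0$, the unrectified values inherit $\phi_q > \phi_p \Rightarrow r_q > r_p$, exactly the ordering F3 demands.

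Next I would isolate the failure mechanism. Under the hypothesis, some party $p$ satisfies $\brac{\phi_p/\phi^*}^\rho \times v^* < v_c(\{p\})$, so the maximum in~\eqref{eq:rewarddef} is attained by the singleton term and $r_p = v_c(\{p\})$. The reward of $p$ is thereby pinned to its \emph{standalone} value, which carries no monotone relation to $\phi_p$. To produce a violation I would exhibit an instance containing a party $q$ that is strictly more desirable than $p$ (so F3 requires $r_q > r_p$) but for which the rectification is \emph{inactive}, giving $r_q = \brac{\phi_q/\phi^*}^\rho \times v^*$; if this falls at or below $v_c(\{p\})$, then $r_q \leq v_c(\{p\}) = r_p$, directly contradicting F3 and hence R5.

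The construction is where the real work lies, and it is the step I expect to be the main obstacle: I need a valid (non-negative, monotone) characteristic function in which the ordering by standalone value disagrees with the ordering by desirability, i.e.\ $p$ has a large $v_c(\{p\})$ but a small $\phi_p$, while $q$ has a smaller standalone value yet $\phi_q > \phi_p$. For $n=2$ this is impossible, since a direct computation gives $\phi_1 - \phi_2 = v_c(\{1\}) - v_c(\{2\})$, forcing the Shapley order to coincide with the singleton order. I would therefore take $n \geq 3$ and exploit the diminishing-returns/redundancy behaviour inherent in a distributional closeness measure like the MMD-based $v_c$ of Sec.~\ref{junkyard}: let $p$ own a large dataset that is largely redundant with data held by the other parties, so its average marginal contribution $\phi_p$ is small despite a large $v_c(\{p\})$, and let $q$ own a smaller but highly complementary dataset, so that $\phi_q > \phi_p$ while $v_c(\{q\}) < v_c(\{p\})$.

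Finally I would close the loop on the parameters. Keeping $v^* = v_c(N)$ fixed, I would note that both target inequalities, namely activation of the rectification for $p$ ($\brac{\phi_p/\phi^*}^\rho \times v^* < v_c(\{p\})$, the stated hypothesis) and suppression of the more desirable party's reward ($\brac{\phi_q/\phi^*}^\rho \times v^* \leq v_c(\{p\})$), sharpen as $\rho$ grows, since the left-hand sides shrink towards $0$ while $v_c(\{p\})$ stays fixed. Hence a sufficiently large $\rho$ secures both at once, yielding $r_q \leq r_p$ with $q$ strictly more desirable than $p$. This exhibits the F3 violation and therefore the failure of R5, establishing the proposition.
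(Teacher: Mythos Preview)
Your plan contains a genuine gap at the construction step. You want a pair $p,q$ with $q$ strictly more desirable than $p$ in the sense of F3, yet with $v_c(\{q\}) < v_c(\{p\})$. But F3's hypothesis is a coalition-wise dominance condition: it demands $v_c(B\cup\{q\}) \geq v_c(B\cup\{p\})$ for \emph{every} $B \subseteq N\setminus\{p,q\}$, and taking $B=\emptyset$ forces $v_c(\{q\}) \geq v_c(\{p\})$. So the two features you are trying to combine are incompatible, and no such instance exists. The slip is that you conflate ``$q$ strictly more desirable than $p$'' with ``$\phi_q > \phi_p$''; the Shapley ordering is a \emph{consequence} of F3's hypothesis, not a substitute for it, and producing $\phi_q > \phi_p$ together with $v_c(\{q\}) < v_c(\{p\})$ does not trigger F3 at all (the hypothesis fails, so the implication is vacuously true).

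Once you recognise this, the only way to land in F3's hypothesis while having the rectification bite is to take $v_c(\{q\}) = v_c(\{p\})$ and make $q$ strictly dominate $p$ at some nonempty coalition $C \subseteq N\setminus\{p,q\}$. Then pick $\rho$ so that $\brac{\phi_q/\phi^*}^\rho v^* < v_c(\{q\}) = v_c(\{p\})$; since $\phi_q > \phi_p$, the same holds for $p$ (so the stated hypothesis is met), and both rectifications activate at the common floor, giving $r_q = r_p$ in violation of F3. This is exactly what the paper does: it exhibits a concrete monotone $v_c$ on four parties with $v_c(\{3\})=v_c(\{4\})=10$, party $4$ strictly dominating party $3$ at coalition $\{1\}$, and $\rho=1$, yielding $r_3=r_4=10$. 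Your redundancy/complementarity intuition is fine for arranging strict dominance at a larger coalition, but you must keep the singleton values tied rather than reversed.
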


Furthermore, recall from Sec.~\ref{yoda} that under the CGM framework, the mediator generates a synthetic dataset $G$ from which subsets of synthetic data points are sampled to distribute to the parties as rewards. This leads to a few important implications. Firstly, since every party can at most be rewarded the entire synthetic dataset $G$,
the largest possible reward value $v(D_i \cup G)$ may differ across parties $i=1,\ldots,n$.
In contrast, for the reward scheme of~\citet{simcollaborative}, the largest possible reward value $v_c(N)$ is the same across all parties.
Note that in our work, $v(D_i \cup G) > v_c(N)$ is possible. All these motivate the need to consider a generalized notion of the maximum reward value $v^*$ 
(i.e., $v^*\geq r_i$ for any party $i\in N$)
in our modified reward scheme; we will discuss below
how $v^*$ can be optimized via a linear program.
As a result, R2 and R3 have to be redefined to reflect the possibility of $v(D_i \cup G) > v_c(N)$ and ensure at least one party being assigned the maximum reward value $v^*$ instead of the possibly smaller $v_c(N)$, respectively:

\begin{definition}[\bf R2: CGM Feasibility]
\label{def:cgmfeasibility}
\emph{No party in the grand coalition should be assigned a reward value larger than that of its dataset and the synthetic dataset combined:\vspace{-1mm} $$\forall i \in N\ \ r_i \leq v(D_i \cup G)\ .$$
}
\end{definition}

\begin{definition}[\bf R3: CGM Weak Efficiency]
\label{def:cgmweakefficiency}
\emph{At least a party in the grand coalition should be assigned the maximum reward value: $\exists i \in N\ \ r_i = v^*\ .$}
\end{definition}

We need to redefine property F4 defining R5 to account for the notion of 
maximum reward value $v^*$:
\begin{definition}[\bf F4: CGM Strict Monotonicity]
\label{def:cgmstrictmonotonicity}
\emph{Let $v_c$ and $v'_c$ denote any two characteristic functions for data valuation with the same domain $2^N$, $r_i$ and $r'_i$ be the corresponding reward values assigned to party $i$, and $v'^*$ be the maximum reward value under $v'_c$.
If the marginal contribution of party $i$ is larger under $v'_c$ than $v_c$ (e.g., by including a larger dataset) for at least a coalition, \emph{ceteris paribus}, then party $i$ should be assigned a larger reward value under $v'_c$ than $v_c$:\vspace{1mm}\\
$\forall i\in N\ \   
    [\exists C \subseteq N \setminus \{i\}\ \  v'_c(C\cup\{i\}) > v_c(C\cup\{i\})]
    \vspace{0.5mm}\\ \logicaland\ 
    [\forall B \subseteq N \setminus \{i\}\ \   v'_c(B\cup\{i\}) \geq v_c(B\cup\{i\})] \vspace{0.5mm} \\
\logicaland
[\forall A \subseteq N \setminus \{i\}\ \ v'_c(A) = v_c(A)] \logicaland (v'^* > r_i) 
 \Rightarrow r'_i > r_i\ .$
} 
\end{definition}

The following result verifies that the rectified $\rho$-Shapley fair reward values~\eqref{eq:rewarddef} in our modified reward scheme satisfy the above redefined incentive conditions R2, R3, R5 and  previously defined ones by selecting appropriate $\rho$ and $v^*$:

\begin{proposition}
\label{prop:cgmincentives}
Let $0 \leq \rho \leq 1$.
Using the new definitions of R2, R3, and F4 in Definitions~\ref{def:cgmfeasibility}, \ref{def:cgmweakefficiency}, and~\ref{def:cgmstrictmonotonicity}, 
the rectified $\rho$-Shapley fair reward values $(r_1,\ldots,r_n)$~\eqref{eq:rewarddef}
satisfy\vspace{1mm}\\
(a) R1 to R4 if $\rho$ and $v^*$ are set to satisfy\vspace{0.5mm}\\ $\forall i \in N\ \ (v_c(\{i\})\leq v^*) \wedge  (\brac{{\phi_i}/{\phi^*}}^\rho \times v^* \leq v(D_i \cup G))\ ,$\vspace{1mm}\\ 
(b) R1 to R5 if $\rho > 0$ and $v^*$ are set to satisfy\vspace{0.5mm}\\
$\forall i \in N\ \ v_c(\{i\}) \leq \brac{{\phi_i}/{\phi^*}}^\rho \times v^* \leq v(D_i \cup G)\ ,$ and\vspace{1mm}\\
(c) R1 to R6 if $\rho > 0$ and $v^*$ are set to satisfy\vspace{0.5mm}\\
$\forall i \in N\ \ v_c(C_i) \leq \brac{{\phi_i}/{\phi^*}}^\rho \times v^* \leq v(D_i \cup G)\ .$
\end{proposition}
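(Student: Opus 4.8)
The plan is to treat parts (a), (b), (c) as progressively stronger refinements of the single reward definition~\eqref{eq:rewarddef}, verifying each incentive condition by unpacking its definition and invoking the non-negativity and monotonicity of $v_c$ (guaranteed in Sec.~\ref{sect:kernelselection}) together with the standard Shapley axioms from \citesupp{app:rewardincentives}. Throughout I would write $\phi^* := \max_{j\in N}\phi_j$, so that $\brac{\phi_i/\phi^*}^\rho \le 1$, and record the key observation that the lower-bound hypotheses in (b) and (c) force the rectifying $\max$ in~\eqref{eq:rewarddef} to be inactive, i.e.\ $r_i = \brac{\phi_i/\phi^*}^\rho v^*$ for every $i$; this is precisely what Proposition~\ref{prop:etalesser} warns must be ensured.

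For part (a) I would derive R1--R4 directly from the closed form of $r_i$. R4 is immediate because $r_i$ is a maximum containing $v_c(\{i\})$, and R1 follows since both arguments of that maximum are non-negative. For R2 (Definition~\ref{def:cgmfeasibility}) I bound the two arguments separately: $v_c(\{i\}) = v(D_i) \le v(D_i\cup G)$ by monotonicity, while $\brac{\phi_i/\phi^*}^\rho v^* \le v(D_i\cup G)$ is the assumed condition, so their maximum is $\le v(D_i\cup G)$. For R3 (Definition~\ref{def:cgmweakefficiency}) I take the party $i^*$ attaining $\phi_{i^*}=\phi^*$; then $\brac{\phi_{i^*}/\phi^*}^\rho v^* = v^*$, and since $v_c(\{i^*\}) \le v^*$ by assumption, $r_{i^*} = v^*$.

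For part (b), the stronger bound $v_c(\{i\}) \le \brac{\phi_i/\phi^*}^\rho v^*$ makes rectification inactive, so $r_i = \brac{\phi_i/\phi^*}^\rho v^*$; this implies the weaker hypotheses of (a), hence R1--R4 are inherited, and it remains to establish the four properties defining R5. F1 and F2 follow from the corresponding Shapley axioms, since $r_i$ is now a fixed monotone transform $x\mapsto x^\rho v^*$ of $\phi_i$. F3 is exactly where the inactive rectification matters: strict desirability yields $\phi_i > \phi_j$, and because $\rho>0$ makes $x\mapsto x^\rho$ strictly increasing, $r_i = \brac{\phi_i/\phi^*}^\rho v^* > \brac{\phi_j/\phi^*}^\rho v^* = r_j$. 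Part (c) is then quick: the condition $v_c(C_i) \le \brac{\phi_i/\phi^*}^\rho v^*$, where $C_i$ is the value-maximal coalition containing $i$ in which $i$ holds the top Shapley value, keeps rectification inactive (take $C=\{i\}$) so R1--R5 persist, and R6 follows because for any $C \ni i$ with $\phi_i = \max_{j\in C}\phi_j$ we have $v_c(C) \le v_c(C_i) \le \brac{\phi_i/\phi^*}^\rho v^* = r_i$.

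The main obstacle is F4 (Definition~\ref{def:cgmstrictmonotonicity}). First I would show the hypotheses force $\phi'_i > \phi_i$: since $v'_c$ and $v_c$ agree on every coalition not containing $i$, each Shapley summand $v_c(C_{\pi,i}\cup\{i\}) - v_c(C_{\pi,i})$ in~\eqref{shapley} keeps its subtracted term fixed and has a weakly larger added term under $v'_c$, with at least one summand strictly larger because every $C \subseteq N\setminus\{i\}$ arises as some $C_{\pi,i}$; averaging gives the strict inequality. The delicate point is that passing from $v_c$ to $v'_c$ also shifts the normaliser $\phi'^*$ and the maximum reward $v'^*$, so $\phi'_i > \phi_i$ alone does not yield $r'_i > r_i$. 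This is where I expect the real work to lie, and where the hypothesis $v'^* > r_i$ is the lever: in the sub-case where $i$ attains the top Shapley value under $v'_c$, inactive rectification gives $r'_i = v'^* > r_i$ outright, while in the remaining sub-case I would have to control $\brac{\phi'_i/\phi'^*}^\rho$ against $\brac{\phi_i/\phi^*}^\rho$ and close the gap by a comparison that again invokes $v'^* > r_i$. Tracking this moving normaliser is the step I expect to be the crux of the argument.
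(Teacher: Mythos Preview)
Your proposal is correct and follows the same route as the paper: verify R1--R4 directly for (a), observe that the lower-bound hypotheses in (b) and (c) deactivate the rectifying $\max$ so that $r_i = (\phi_i/\phi^*)^\rho v^*$, then establish R5 and R6 on top of the inherited R1--R4. The paper handles F1--F4 more tersely by noting that the unrectified reward equals the $\rho$-Shapley reward of \citet{simcollaborative} times the constant $v^*/v_c(N)$ and simply citing their proof (with the substitution $v_c(N)\mapsto v^*$, $v'_c(N)\mapsto v'^*$ for F4), so your explicit arguments for F1--F3 and your correct identification of the moving normaliser in F4 as the crux actually go a step beyond what the paper itself spells out.
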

On the other hand, R7 (i.e., group welfare) may not be achieved since  $\sum_{i\in N}r _i$ is maximized by $r_i = v(D_i\cup G)$ for each party $i \in N$ which may not be satisfied by any pair of feasible values of $\rho$ and $v^*$ given some  synthetic dataset $G \neq \emptyset$.
We will instead do our best to increase $\sum_{i\in N}r _i$ while giving precedence to satisfying the other incentive conditions in Proposition~\ref{prop:cgmincentives}, as detailed next.\vspace{0.5mm} 

\label{sect:linprog}

\noindent
{\bf Optimizing $\rho$ and $v^*$ via a Linear Program.} 
After computing the Shapley value $\phi_i$ of each party $i$~\eqref{shapley},
we have to optimize the values of 
$\rho$ and $v^*$ before assigning the resulting rectified $\rho$-Shapley fair reward values $(r_1,\ldots,r_n)$~\eqref{eq:rewarddef} to parties $1,\ldots,n$.
Let $\alpha_i := {\phi_i}/{\phi^*}$ denote the normalized Shapley value of party $i$, $v^{\text{min}}_i := v_c(\{i\})$, and $v^{\text{max}}_i := v(D_i \cup G)$.
We desire $v^*$ to be as large as possible to increase $\sum_{i\in N}r _i$ (group welfare).
Also, if we like $(r_1,\ldots,r_n)$~\eqref{eq:rewarddef} to be closer in proportion to  $(\alpha_1,\ldots,\alpha_n)$ (i.e., expected marginal contributions of parties $1,\ldots,n$) or purely Shapley fair (i.e., $\rho=1$),
then $\rho$ should be as close to $1$ as possible.\footnote{Alternatively, one may consider decreasing $\rho$ to increase $\sum_{i\in N}r _i$ (i.e., group welfare).}
Together with Proposition~\ref{prop:cgmincentives}b, it follows that the optimization problem can be framed as $\max_{v^*, \rho} (\log v^* + \epsilon \rho)$ subject to the constraints of $\forall i \in N\ \ v^{\text{min}}_i \leq v^* \alpha^\rho_i \leq v^{\text{max}}_i$ and $0 \leq \rho \leq 1$ where $\epsilon$ is a weight controlling the relative importance of $\rho$.\footnote{We consider $\log v^*$ instead of $v^*$ and the constraint of $\rho >0$ in  Proposition~\ref{prop:cgmincentives}b is relaxed to $\rho \geq 0$ in our optimization problem to facilitate its reformulation as a linear program. In our experiments, we have never observed $\rho=0$ since this term in the objective function is to be maximized.}
To additionally satisfy R6 (i.e., Proposition~\ref{prop:cgmincentives}c), we can set $v^{\text{min}}_i := v_c(C_i)$ instead.
Such a problem can be formulated as a  \emph{linear program} (LP) in inequality form that can be solved using standard LP solvers: $\min_{\mbf x} {\mbf c}^\top \mbf x$ subject to the constraint of $\mbf A\mbf x \preceq \mbf b$ where
$\mbf x := (\log v^*, \rho)^{\top}$, $\mbf c := (-1, -\epsilon)^{\top}$, $\mbf b := (\log v^{\text{max}}_1, \ldots, \log v^{\text{max}}_n, -\log v^{\text{min}}_1,\ldots, -\log v^{\text{min}}_n, 1, 0)^{\top}$, and $\mbf A$ is a matrix of size $2n+2$ by $2$ with the first column $(1,\ldots,1,-1,\ldots,-1, 0, 0)^{\top}$ and the second column $(\log\alpha_1,\ldots,\log\alpha_n,-\log\alpha_1,\ldots,-\log\alpha_n,1,-1)^{\top}$.
This formulation also informs us of a suitable choice of the synthetic dataset $G$: A sufficient but not necessary condition for the feasible set of the LP to be non-empty is $\min_{i\in N} v^{\text{max}}_i \geq \max_{i\in N} v^{\text{min}}_i$. When generating the synthetic dataset $G$, we may thus increase the size of $G$ until this condition is satisfied; we provide an intuition for why this works in \citesupp{app:increaseG}. 

\subsection{Distributing Synthetic Data Rewards to Parties via Weighted Sampling}
\label{force}
After assigning the rectified $\rho$-Shapley reward value $r_i$ to each party $i\in N$ (Sec.~\ref{lalaland}), we 
greedily sample
synthetic data points from $G$ to be distributed to each party $i$ as reward until the resulting $v(D_i \cup G_i)$ reaches the reward value $r_i$ (Sec.~\ref{yoda}).\footnote{Though $v(D_i \cup G_i)$ may slightly exceed the assigned reward value $r_i$ when sampling terminates due to  discreteness of synthetic data points,
such a margin diminishes when sufficiently large $|G_i|$ and $|G|$ are considered, as observed in our experiments (Sec.~\ref{expt}).}  
Specifically, let $\Delta_x := v(D_i \cup G_i \cup \{x\}) - v(D_i \cup G_i)$ denote the marginal increase in the value $v(D_i \cup G_i)$ of its dataset $D_i$ combined with its current synthetic dataset $G_i$ by sampling the synthetic data point $x$. 
In each iteration of our weighted sampling algorithm for distributing synthetic data reward to party $i$ (Algo.~\ref{alg:rewardrealization} in \citesupp{app:alg}),
we firstly perform min-max normalization to rescale $\Delta_x$ to $\bar \Delta_x$ for all synthetic data points $x\in G\setminus G_i$
to lie within the  $[0,1]$ interval. 
We compute the probability of each synthetic data point $x$ being sampled using the softmax function: $p(x) = {\exp{({\beta \bar \Delta_x}})}/{\sum_{x'\in G\setminus G_i} \exp{({\beta \bar \Delta_{x'}})}}$ where $\beta \in [0, \infty)$ is the inverse temperature hyperparameter.
Finally, we sample $x$ based on $p(x)$ and add it to $G_i$.
We repeat this process until $v(D_i \cup G_i)$ reaches  $r_i$.

As $\beta \rightarrow \infty$, the synthetic data points $x$ sampled by our algorithm tend to have larger $\Delta_x$. This leads to fewer sampled synthetic points $G_i$ as reward and thus a smaller $|D_i \cup G_i|$ when the resulting $v(D_i \cup G_i)$ reaches the assigned reward value $r_i$ and the sampling ends. 
This in turn results in a smaller   ${\text{MMD}}^2_u(\mathcal F, D_i \cup G_i, T)$,
by Proposition~\ref{prop:valuefunc}.
As $\beta \rightarrow 0$, the sampled synthetic points tend to have smaller $\Delta_x$; at $\beta = 0$, our algorithm performs random sampling since all synthetic points are weighted equally. By the same reasoning, this leads to a larger $|D_i \cup G_i|$ and thus a larger ${\text{MMD}}^2_u(\mathcal F, D_i \cup G_i, T)$.
So, $\beta$ implicitly controls the trade-off between the no.~of sampled synthetic points $G_i$ vs.~closeness to the distribution associated with reference dataset $T$.

Computing $v$ using~\eqref{eq:valuefunc} incurs $\mathcal{O}(s(s+t))$
time. Instead of naively recomputing $v$ for every synthetic data point $x$, the time needed to compute $\Delta_{x}$ can be reduced by performing a sequential update of $v$. By storing the values of $\inn{\boldsymbol 1_{[x \in S, x'\in T]}}{\mbf K}$ and $\inn{\boldsymbol 1_{[x, x'\in S]}}{\mbf K}$ at every iteration where $S=D_i \cup G_i$ (i.e., $s=|D_i \cup G_i|$),  $\Delta_{x}$ can be recomputed for each $x$ in $\mathcal{O}(s+t)$ time. The weighted sampling algorithm overall incurs $\mathcal O(n|G|^2(s+t)d)$ time. For computational details, refer to \citesupp{app:computation}.

\section{Kernel Selection}
\label{sect:kernelselection}

Recall from Sec.~\ref{junkyard} that our data valuation function~\eqref{eq:valuefunc} depends on the choice of kernel function $k$ which we will discuss here.
The $\log$ on $v(S)$ for different subsets $S\subseteq T$ being used in the LP (Sec.~\ref{sect:linprog}) requires $v(S)$ to be non-negative for all such subsets $S$.
The result below gives a sufficient condition on $k$ to guarantee the non-negativity of $v(S)$:

\begin{proposition}[\bf Lower bound of $k$ for non-negative $v(S)$]
\label{prop:nonneg}
Suppose that there exist some constants $\gamma$ and $\eta$ s.t.~$\gamma\leq k(x,x')\leq \eta\leq k^*$ for all $x,x'\in T$ and $x\neq x'$. Then,
\begin{multline}
\forall S\subseteq T\ \ [\gamma = ({t-2s})(k^* + (s-1)\eta)/(2s(t-s))] \Rightarrow \\ v(S)\geq 0 \ .
\label{esso}
\end{multline}
\end{proposition}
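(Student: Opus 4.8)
The plan is to work directly from the closed form of $v(S)$ in~\eqref{eq:valuefunc}, rather than routing through Corollary~\ref{coro:valuefunc}, since the latter would leave a residual ${\textup{MMD}}^2_u$ term that I would have to bound separately. Writing $A := \inn{\boldsymbol 1_{[x, x'\in S]}}{\mbf K}$ and $B := \inn{\boldsymbol 1_{[x \in S, x'\in T]}}{\mbf K}$, equation~\eqref{eq:valuefunc} reads $v(S) = 2(st)^{-1}B - s^{-2}A$. First I would split the reference set as $T = S \cup (T\setminus S)$ to decompose $B = A + B'$, where $B' := \inn{\boldsymbol 1_{[x\in S, x'\in T\setminus S]}}{\mbf K}$ collects only the cross terms between $S$ and its complement. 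Substituting gives the key identity
\begin{equation*}
v(S) = \frac{2s-t}{s^2 t}\,A + \frac{2}{st}\,B'\ .
\end{equation*}

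The crucial structural observation is that $A$ and $B'$ depend on disjoint blocks of $\mbf K$: $A$ involves only the within-$S$ entries $k(x,x')$ with $x,x'\in S$, whereas $B'$ involves only the $S$-to-$(T\setminus S)$ entries. Hence the two can be extremized independently within the ranges permitted by the hypothesis $\gamma \leq k(x,x') \leq \eta \leq k^*$. Of the $s^2$ entries contributing to $A$, the $s$ diagonal ones are fixed at $k^*$ while the $s(s-1)$ off-diagonal ones lie in $[\gamma,\eta]$, so $A$ ranges over $[\,sk^* + s(s-1)\gamma,\ sk^* + s(s-1)\eta\,]$; all $s(t-s)$ entries of $B'$ are off-diagonal, so $B'$ ranges over $[\,s(t-s)\gamma,\ s(t-s)\eta\,]$. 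The main obstacle, and the step that decides which endpoints to plug in, is the sign of the coefficient $(2s-t)/(s^2t)$. In the regime $2s < t$ (the relevant one, since $T$ contains the large synthetic set $G$ while the subsets $S$ to be valued are the comparatively small pooled datasets), this coefficient is negative, so $v(S)$ is minimized by taking $A$ at its maximum $sk^* + s(s-1)\eta = s(k^*+(s-1)\eta)$ and $B'$ at its minimum $s(t-s)\gamma$.

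Plugging in these worst-case values yields the lower bound
\begin{equation*}
v(S) \geq \frac{(2s-t)(k^*+(s-1)\eta)}{st} + \frac{2(t-s)\gamma}{t}\ ,
\end{equation*}
and I would finish by requiring the right-hand side to be non-negative and solving for $\gamma$: rearranging $2(t-s)\gamma/t \geq (t-2s)(k^*+(s-1)\eta)/(st)$ gives precisely $\gamma \geq (t-2s)(k^*+(s-1)\eta)/(2s(t-s))$, so setting $\gamma$ to the threshold value in~\eqref{esso} forces $v(S)\geq 0$. I would read the $\forall S$ quantifier in~\eqref{esso} per-subset, i.e.~for each $S$ the displayed equation fixes the threshold in terms of $s=|S|$; the remaining design question of selecting a single $\gamma$ that dominates this threshold across all relevant $|S|$ (by taking the maximum over $s$) is a monotonicity check I would relegate to the surrounding discussion rather than to the proof of the bound itself.
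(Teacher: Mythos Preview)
Your proposal is correct and follows essentially the same route as the paper's own proof: both split $T=S\cup(T\setminus S)$ to separate the within-$S$ block from the cross block, bound the within-$S$ block above by $s k^*+s(s-1)\eta$ and the cross block below by $s(t-s)\gamma$, and then solve the resulting inequality for $\gamma$. Your write-up is slightly more explicit about why the regime $2s<t$ is the one that forces the upper bound on the within-$S$ block (the paper leaves this sign check implicit), and your ``disjoint blocks so extremize independently'' remark is a clean justification the paper does not spell out, but the argument is the same.
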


Ideally, we also want $v(S)$ to be monotonically increasing as the addition of a data point to a dataset should not decrease its value, as discussed in assumption C (Sec.~\ref{junkyard}).
The work of~\citet{kim2016examples} provides a sufficient condition on $k$ for $v$ to be a monotonic function:
\begin{theorem}[\bf Upper bound of $k$ for monotone $v(S)$~\cite{kim2016examples}]
\label{thm:monotone}
Suppose that there exists some constant $\eta$ s.t.~$k(x,x')\leq \eta\leq k^*$ for all $x,x'\in T$ and $x\neq x'$. Then,
\begin{multline}
\label{brat}
\forall S\subseteq T\ \
[\eta =  {tk^*}/({(s+1)(s(t-2)+t)})]\Rightarrow \\ [\forall x\in T\setminus S\ \ v(S\cup\{x\})\geq v(S)]\ .
\end{multline}
\end{theorem}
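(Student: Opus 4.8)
The plan is to prove the implication directly by bounding the marginal gain $m(x) := v(S\cup\{x\}) - v(S)$ and showing $m(x)\ge 0$ for every $x\in T\setminus S$ once $\eta$ equals the stated value. Write $s := |S|$ and introduce the kernel sums $a_S := \sum_{y\in S}\sum_{z\in T}k(y,z)$, $q_S := \sum_{y,z\in S}k(y,z)$, $K_x := \sum_{z\in T}k(x,z)$ and $L_x := \sum_{z\in S}k(x,z)$, so that by \eqref{eq:valuefunc} we have $v(S) = 2(st)^{-1}a_S - s^{-2}q_S$. Since $k$ is symmetric and $k(x,x)=k^*$, appending $x\notin S$ gives $a_{S\cup\{x\}} = a_S + K_x$ and $q_{S\cup\{x\}} = q_S + 2L_x + k^*$.

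First I would substitute these into $v(S\cup\{x\})-v(S)$ and collect over a common denominator to obtain the compact form
\[ m(x) = \frac{2\,(sK_x - a_S)}{ts(s+1)} + \frac{(2s+1)\,q_S}{s^2(s+1)^2} - \frac{2L_x + k^*}{(s+1)^2}. \]
Here each of the four sums $a_S, q_S, K_x, L_x$ enters with a single, fixed sign ($a_S$ and $L_x$ negatively, $q_S$ and $K_x$ positively), which is what makes the next step clean.

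Next I would lower-bound $m(x)$ uniformly over all admissible kernels. Splitting each sum into its diagonal part (value $k^*$) and its off-diagonal part, and using $k(y,z)\le\eta$ together with non-negativity of the kernel (off-diagonal entries lie in $[0,\eta]$, as for the normalised kernels used with MMD; cf.\ \citesupp{app:kernel}), I would replace each sum by the scalar extreme that can only decrease $m(x)$: the positively-weighted $q_S \ge s k^*$ and $K_x \ge k^*$, and the negatively-weighted $a_S \le s k^* + s(t-1)\eta$ and $L_x \le s\eta$. Because $m(x)$ is affine in the kernel entries and each sum appears with a fixed sign, these four scalar inequalities may be applied independently (they need not be simultaneously attainable by one kernel), yielding a genuine lower bound. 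The $s k^*$ terms in $sK_x - a_S$ cancel, and after simplification the $k^*$ contributions combine to a positive multiple while the $\eta$ contributions combine to a negative multiple, giving
\[ m(x) \ \ge\ \frac{k^*}{s(s+1)} - \frac{2\,(2st + t - s - 1)}{t\,(s+1)^2}\,\eta. \]

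Finally I would check that the stated $\eta = tk^*/((s+1)(s(t-2)+t))$ makes this right-hand side non-negative, so that $m(x)\ge 0$; since the bound is uniform in $x\in T\setminus S$ and depends only on $s=|S|$, the implication follows. The main work is the bookkeeping in the two algebraic steps: obtaining the compact form for $m(x)$, and verifying that the stated threshold --- a sufficient, hence conservative, value (the right-hand side above in fact vanishes only at a somewhat larger $\eta$) --- clears the final inequality. The one conceptual point to get right is the justification for worst-casing the four overlapping sums independently, which rests on $m(x)$ being affine with each sum entering at a fixed sign, and on the kernel non-negativity used to bound the positively-weighted sums below by their diagonal mass.
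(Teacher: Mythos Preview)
The paper does not give its own proof of this theorem; it is quoted from \cite{kim2016examples}. So there is no ``paper proof'' to compare against, and the question is simply whether your argument establishes the stated bound.

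Your decomposition of $m(x)$ is correct, and the idea of lower-bounding $m(x)$ by worst-casing the kernel sums is sound in principle. The gap is in the last step. You assert that the theorem's threshold $\eta = tk^*/((s+1)(s(t-2)+t))$ is ``conservative'' and ``the right-hand side above in fact vanishes only at a somewhat larger $\eta$''. This is not true for every $(s,t)$. Setting the right-hand side of your displayed lower bound to zero gives the vanishing point
\[
\eta_{\mathrm{tight}} \;=\; \frac{t(s+1)\,k^*}{2s\bigl(2st+t-s-1\bigr)},
\]
and one can check that $\eta_{\mathrm{theorem}} \le \eta_{\mathrm{tight}}$ fails in at least one admissible case: for $s=2,\ t=3$ one gets $\eta_{\mathrm{theorem}}=k^*/5$ but $\eta_{\mathrm{tight}}=3k^*/16<k^*/5$, and indeed your lower bound evaluates to $-k^*/90<0$ there. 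So the final ``bookkeeping'' verification you defer would not go through as written.

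The reason your bound is loose is exactly the point you flag but then dismiss: the four sums $a_S,\ q_S,\ K_x,\ L_x$ share kernel entries. For example, every off-diagonal entry counted in $q_S$ also sits inside $a_S$, and the $L_x$ entries sit inside both $K_x$ and (via $r_S:=a_S-q_S$) inside $a_S$. Worst-casing them independently is a valid lower bound but throws away these couplings, and that slack is what costs you the corner case. If instead you write $m(x)$ directly as an affine function of the individual off-diagonal entries (partition them by whether both endpoints lie in $S$, one lies in $S$ and one equals $x$, etc.), each entry gets a single signed coefficient and you can bound it by $0$ or $\eta$ accordingly; this tighter per-entry analysis is what recovers the threshold in \cite{kim2016examples}.
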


We can thus set an upper bound $\eta$~\eqref{brat} and a lower bound $\gamma$~\eqref{esso} of every off-diagonal component of $\mathbf K$ to guarantee the monotonicity and non-negativity of $v(S)$,  respectively. Unfortunately, no kernel exists to satisfy both sufficient conditions in  Theorem~\ref{thm:monotone} and Proposition~\ref{prop:nonneg} at the same time if the size of $S$ is less than half of that of the reference dataset $T$:
\begin{proposition}
\label{prop:impossible}
Let $\gamma$ and $\eta$ be set according to~\eqref{esso} and~\eqref{brat}. If $s < (t/2-1)$, then $\gamma >\eta\ .$
\end{proposition}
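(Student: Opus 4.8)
The plan is to treat $\gamma$ and $\eta$ as explicit rational expressions in $s$, $t$, and $k^*$, substitute the value of $\eta$ from \eqref{brat} into the formula for $\gamma$ from \eqref{esso}, and reduce the claimed inequality $\gamma>\eta$ to a single polynomial inequality in $s$ and $t$. Since both $\gamma$ and $\eta$ are positive multiples of $k^*$ (under the standing assumption $k^*>0$ with $1\le s<t$), it suffices to show $\gamma/\eta>1$; working with the ratio is the convenient route because the factor $k^*$ together with a large common denominator will cancel.

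First I would substitute $\eta = tk^*/\bigl((s+1)(s(t-2)+t)\bigr)$ into $\gamma=(t-2s)\bigl(k^*+(s-1)\eta\bigr)/\bigl(2s(t-s)\bigr)$ and collect over the common denominator $(s+1)(s(t-2)+t)$. A short computation gives $k^*+(s-1)\eta = k^*\, s\,(st-2s+3t-2)/\bigl((s+1)(s(t-2)+t)\bigr)$, after which the factor $s$ in the numerator cancels the $2s$ in the denominator of $\gamma$. Forming the ratio then collapses both $k^*$ and the common factor $(s+1)(s(t-2)+t)$, leaving the clean expression
\[
\frac{\gamma}{\eta} \;=\; \frac{(t-2s)\,(st-2s+3t-2)}{2t\,(t-s)}\,.
\]
Hence $\gamma>\eta$ is equivalent to $P(s,t) := (t-2s)(st-2s+3t-2) - 2t(t-s) > 0$.

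The crux is the sign of $P$. Expanding and grouping by powers of $t$ yields the quadratic (in $t$) $P(s,t) = (s+1)t^2 - 2(s^2+3s+1)t + 4s(s+1)$, and I expect the factorization of this quadratic to be the main obstacle. The observation that makes it tractable is that its discriminant is a perfect square: $4(s^2+3s+1)^2 - 16s(s+1)^2 = 4(s^2+s+1)^2$. Consequently the two roots are exactly $t=2s+2$ and $t=2s/(s+1)$, giving the factorization
\[
P(s,t) = (s+1)\,\bigl(t-(2s+2)\bigr)\,\Bigl(t-\tfrac{2s}{s+1}\Bigr)\,.
\]

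Finally I would invoke the hypothesis. The condition $s<t/2-1$ is precisely $t>2s+2$, so the middle factor is strictly positive; the factor $s+1$ is positive, and since $2s/(s+1)<2<2s+2<t$ the third factor is positive as well. Therefore $P(s,t)>0$, i.e.\ $\gamma>\eta$, as claimed. As a sanity check that no constant has been mislaid, note that at the boundary $t=2s+2$ one computes $st-2s+3t-2 = 2(s+1)(s+2)$ and $t(t-s)=2(s+1)(s+2)$ while $t-2s=2$, so $\gamma/\eta=1$ exactly there — the threshold $s=t/2-1$ in the statement is precisely the equality case, which is exactly what the root $t=2s+2$ of $P$ reflects.
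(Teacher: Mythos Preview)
Your proof is correct, and it takes a genuinely different route from the paper's. The paper reparametrises by setting $s=at$ with $a\in[0,1]$, writes $\eta-\gamma$ as an explicit rational function of $a$, $t$, and $k^*$, and then proves two lemmas: that $\eta-\gamma=0$ at $a=\tfrac12-\tfrac1t$, and that $\tfrac{d}{da}(\eta-\gamma)>0$ on $[0,1]$. Monotonicity in $a$ plus the zero at $a=\tfrac12-\tfrac1t$ then gives $\eta-\gamma<0$ for $a<\tfrac12-\tfrac1t$, i.e.\ $s<t/2-1$. Your approach instead forms the ratio $\gamma/\eta$, cancels $k^*$ and the common denominator $(s+1)(s(t-2)+t)$ to obtain the clean closed form $\gamma/\eta=(t-2s)(st-2s+3t-2)/\bigl(2t(t-s)\bigr)$, and reduces the claim to the sign of the quadratic $P(s,t)=(s+1)t^2-2(s^2+3s+1)t+4s(s+1)$, which you factor exactly via its perfect-square discriminant. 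Your argument is purely algebraic, avoids calculus entirely, and makes the threshold transparent: the root $t=2s+2$ of $P$ is visibly the equality case $s=t/2-1$. The paper's calculus route, by contrast, identifies the threshold first and then establishes monotonicity separately; it is longer but perhaps more discoverable if one does not spot the factorisation. One small remark: for the ratio step you only need $\eta>0$ (which you verify), not $\gamma>0$, though the latter is also true under the hypothesis and does no harm.
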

We prefer to guarantee the non-negativity of $v(S)$ (over monotonicity) for implementing the LP
and hence only satisfy the lower bound of $k$ (Proposition~\ref{prop:nonneg}). Trivially setting all components of $\mathbf K$ to $k^*$ satisfies this lower bound but is not useful as 
it values all datasets $S$ of the same size $s$ to be the same. 
Also, when the off-diagonal components of $\mathbf K$ are large, a non-monotonic behavior of $v(S)$ has been empirically observed, which agrees with our intuition formalized in Theorem~\ref{thm:monotone} that a monotone $v(S)$ is guaranteed by an upper bound $\eta$~\eqref{brat} of every off-diagonal component of $\mathbf K$. 
To strike a middle ground, we use a simple binary search algorithm to find the min.~length-scale of a kernel s.t.~$v(D_1),\ldots,v(D_n)$ are non-negative. We have observed in our experiments that this results in an approximately monotone $v$ and roughly $76\%$ of all synthetic data points added causing an increase in $v$. We have also empirically observed that the synthetic data points are more likely to result in a decrease in $v$ as more data points are added and $s$ increases, which aligns with our intuition given by Theorem~\ref{thm:monotone} that the upper bound $\eta$~\eqref{brat} to guarantee a monotone $v(S)$ decreases with a growing $s$ and thus becomes harder to satisfy.

\section{Experiments and Discussion}
\label{expt}
\begin{figure}
\begin{tabular}{cc}
\hspace{-2mm}\includegraphics[width=0.22\textwidth]{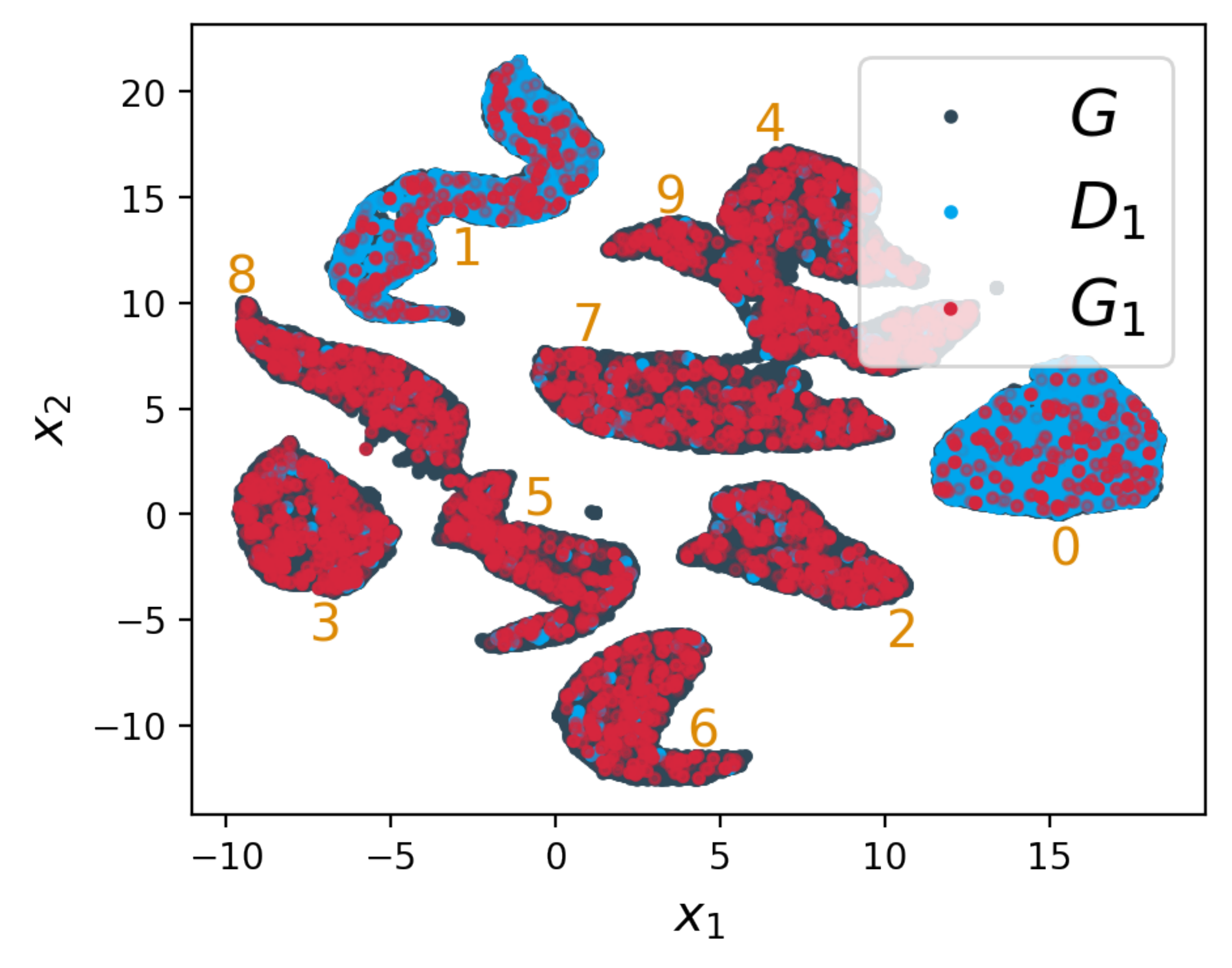} & \hspace{-3mm}\includegraphics[width=0.22\textwidth]{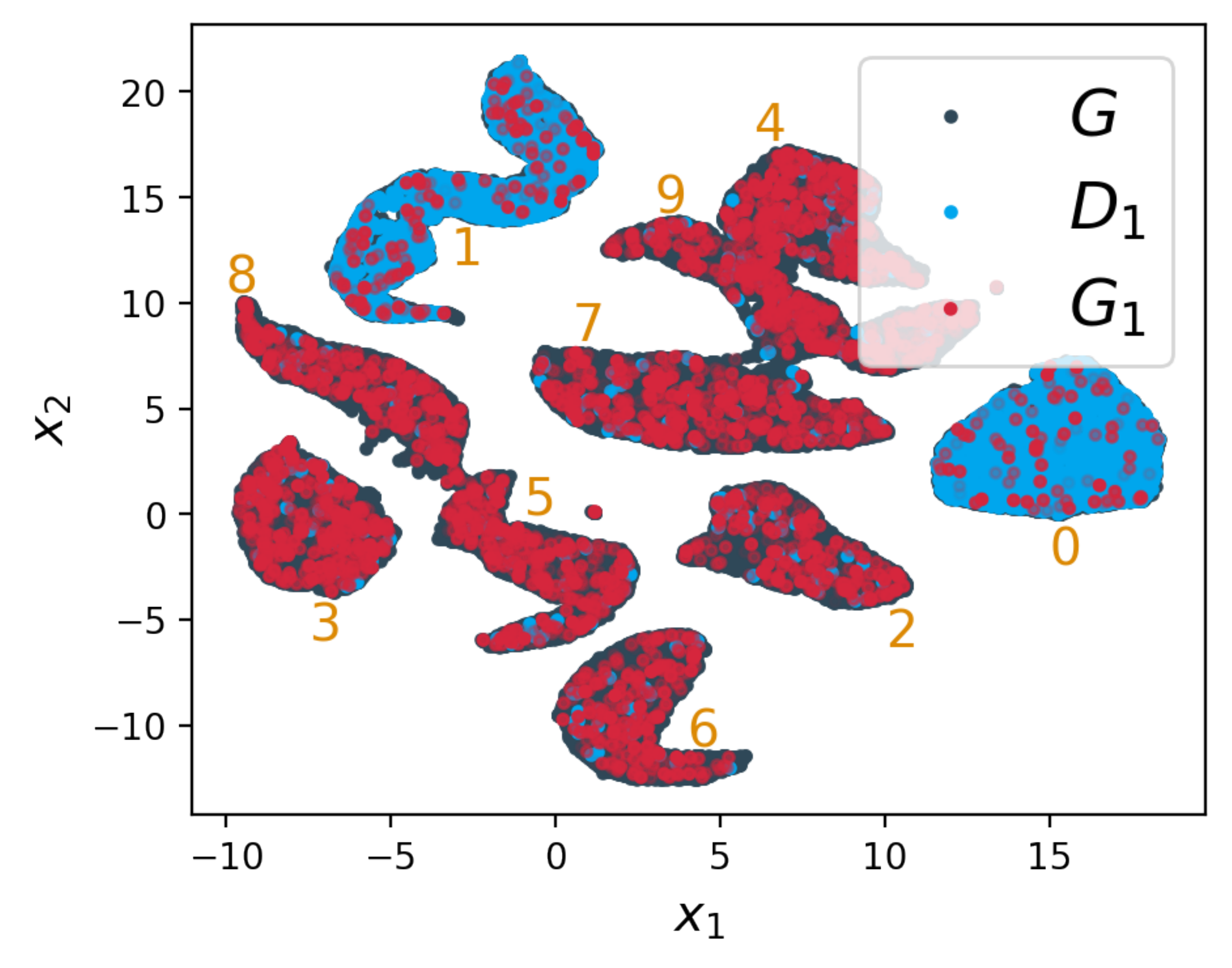}\vspace{-1.4mm}\\
\hspace{3mm}\small{$\beta=1$} & \hspace{2mm}\small{$\beta=2$}\\
\hspace{-2mm}\includegraphics[width=0.22\textwidth]{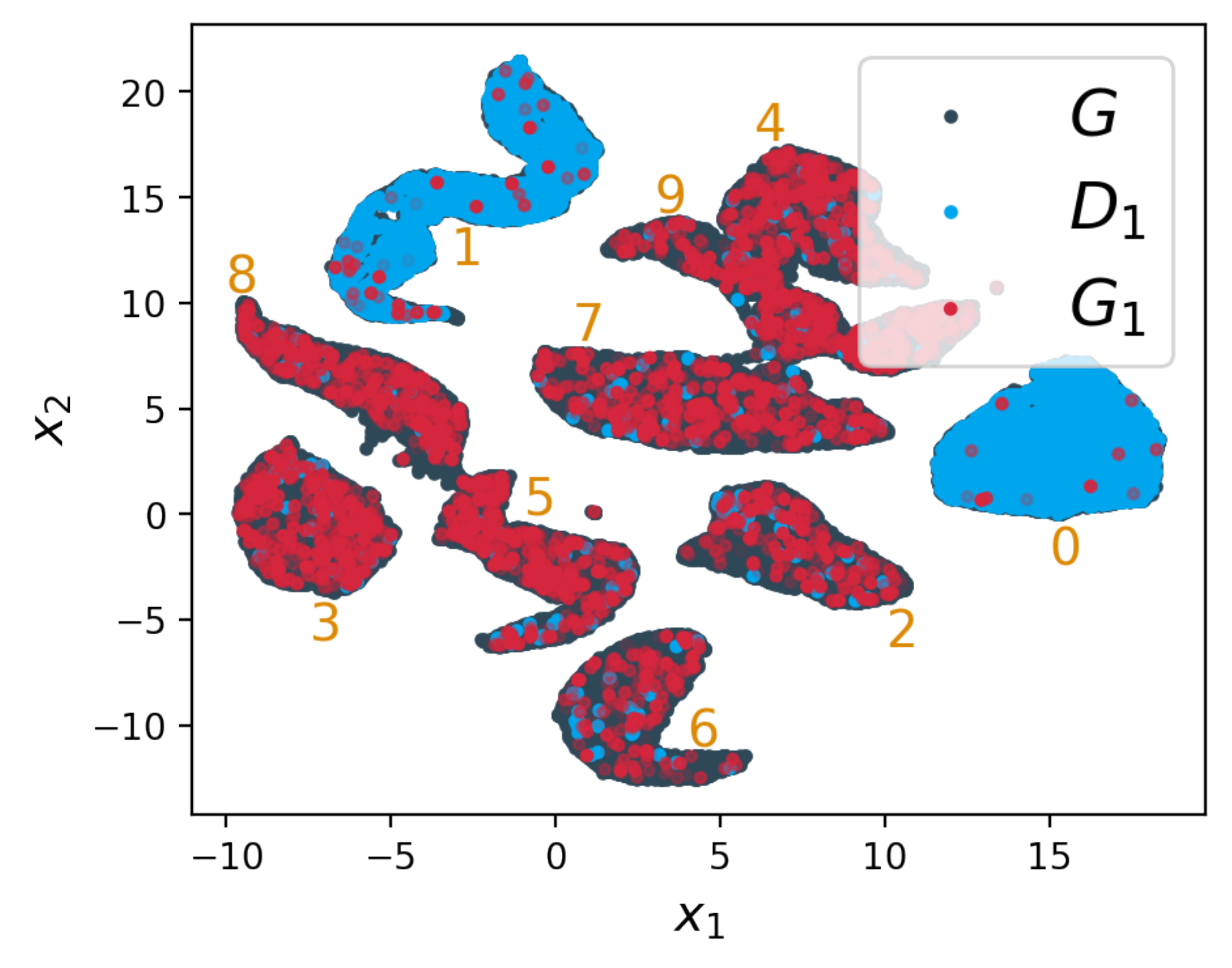} & \hspace{-3mm}\includegraphics[width=0.22\textwidth]{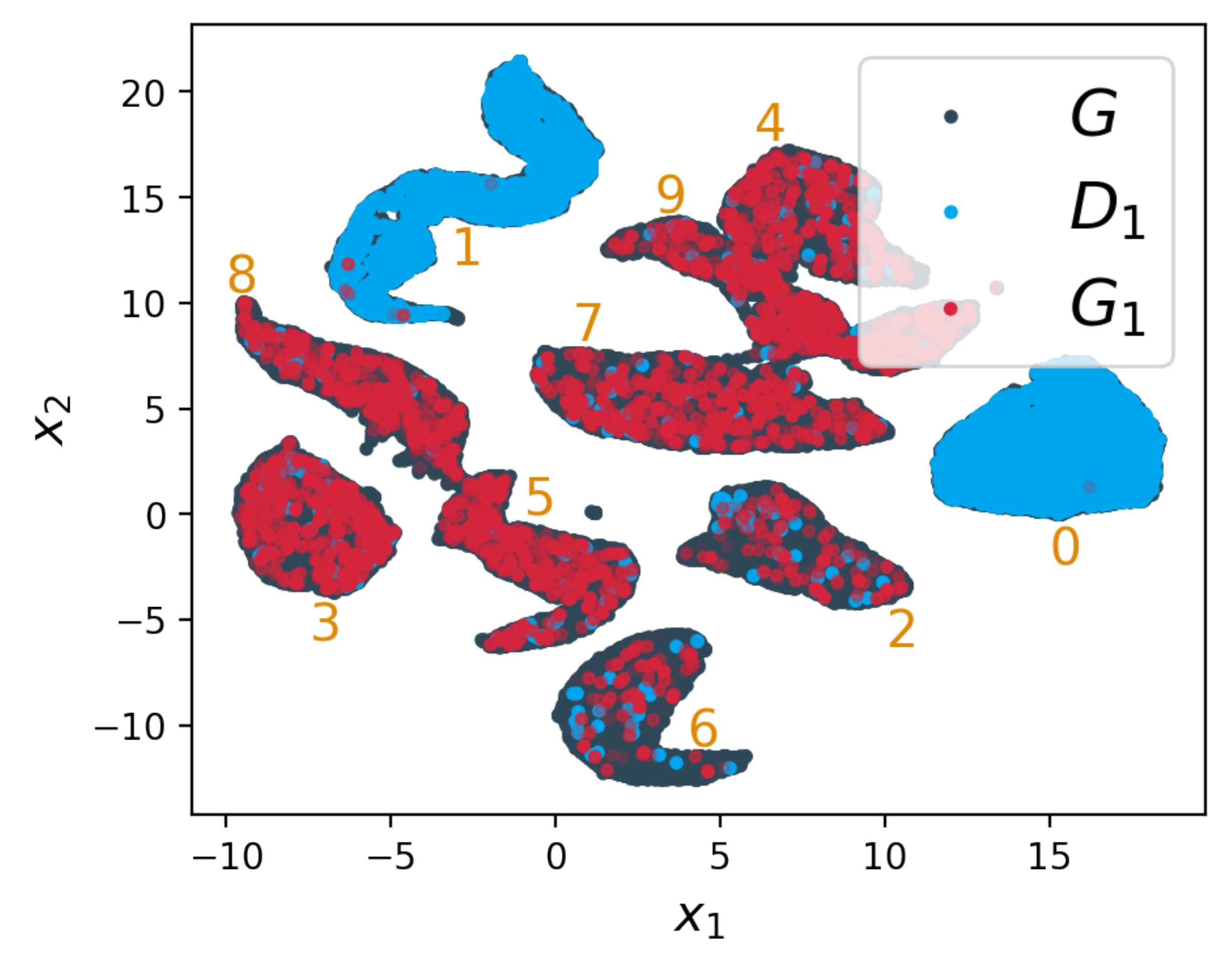}\vspace{-1.4mm}\\
\hspace{3mm}\small{$\beta=4$} & \hspace{2mm}\small{$\beta=8$}\vspace{-3mm}
\end{tabular}
     \caption{Synthetic data points $G_1$ (visualized in $2$-D embedding using UMAP~\cite{mcinnes2018umap}) as reward to party $1$ with varying $\beta$ in equal disjoint split. Each cluster has majority of the MNIST digit in yellow.}
     \label{fig:mnistp1}\vspace{-3mm}
\end{figure}
This section empirically evaluates the performance of our CGM framework using simulated and real-world datasets:\vspace{1mm}

\noindent
(a) \textbf{Simulated credit ratings.} We simulate a scenario where banks collaborate and share customer's \emph{credit ratings} (CR) indirectly to improve their predictions on the likelihood of default~\cite{TSAI2010374}. The banks serve different regions and hence own different subsets of the overall data distribution, but would like to predict well on the entire population for future expansion. Credit ratings are simulated using a $2$-D Gaussian mixture model dataset with $5$ clusters (classes) where the first dimension is the credit score and the second dimension is a measure of the likelihood of default.\vspace{0.5mm}

\noindent
(b) \textbf{Credit card fraud dataset.} We use the real-world \emph{credit card} (CC) fraud dataset \cite{dal2015calibrating} containing European credit card transactions such that most variables are transformed using PCA to yield $28$ principal components as features and an `Amount' variable denoting the amount transacted. We select the first $4$ principal components to create a $4$-D dataset, and separate the dataset into $5$ classes according to Amount percentiles so as to simulate collaborating banks serving different populations that tend to make transactions within certain ranges of amounts. Synthetic data are obtained by sampling from a distribution fit to the CC dataset with kernel density estimation.\vspace{0.5mm}

\noindent
(c) \textbf{Simulated medical imaging.} 
Synthetic image data is commonly used to improve performance on downstream ML tasks such as in medical imaging~\cite{bowles2018gan, frid2018gan, sandfort2019data}. 
We simulate a scenario where hospitals serving different populations share patients' data indirectly to improve predictions on medical imaging classification tasks on the whole population using the real-world MNIST \cite{lecun1998gradient} and CIFAR-10 \cite{krizhevsky2009learning} image datasets as surrogates. Synthetic data are obtained by sampling from pre-trained MMD GANs~\cite{binkowski2018demystifying}. We perform dimensionality reduction on the surrogate MNIST and CIFAR-10 image datasets to create $8$-D datasets, detailed in \citesupp{app:expdetails}.

CR and CC have $5$ classes, while MNIST and CIFAR-10 have $10$ classes. For all datasets, we simulate $5$ parties, and split the data among the $5$ parties in $2$ ways to simulate different settings of data sharing. The first split, which we refer to as `equal disjoint', is when each party has a large majority of data in $1$ class for CR and CC ($2$ for MNIST and CIFAR-10) and a small quantity of data in the other classes, and these majority classes are non-overlapping to simulate real-world settings where every party contributes data from a different restricted subset of the support of the data distribution. 
The second split, which we refer to as `unequal', is when the first $2$ parties have a uniform distribution of data over all classes while the remaining $3$ parties have a large majority of data in $3$ classes ($6$ for MNIST and CIFAR-10) and a small quantity of data in the rest of the classes to simulate real-world settings where some parties have `higher-quality' data than the other parties in terms of the coverage of the support of the data distribution. However, our CGM framework is \emph{not given these class labels} to simulate real-world scenarios where the class differences among parties are unknown. We use the squared exponential kernel with its length-scale computed using the binary search algorithm in Sec.~\ref{sect:kernelselection}. Our full CGM framework, which includes computing the normalized Shapley values $\alpha_1,\ldots,\alpha_n$ (i.e., expected marginal contributions) of parties $1,\ldots,n$, solving the LP to obtain their assigned rectified $\rho$-Shapley fair reward values $(r_1,\ldots,r_n)$, and running the weighted sampling algorithm for generating synthetic data points $G_1,\ldots,G_n$ to be distributed to them as rewards (Sec.~\ref{sect:rewardscheme}), 
is applied across all datasets and splits. \citesupp{app:expdetails} provides full details of the experimental settings, additional results,
and visualizations of the synthetic data rewards. As none of the prior work has previously considered synthetic data rewards, our results below set the baseline for future work.\vspace{0.5mm}

\begin{figure}
\begin{tabular}{cc}       \hspace{-3.5mm} \includegraphics[width=0.23\textwidth]{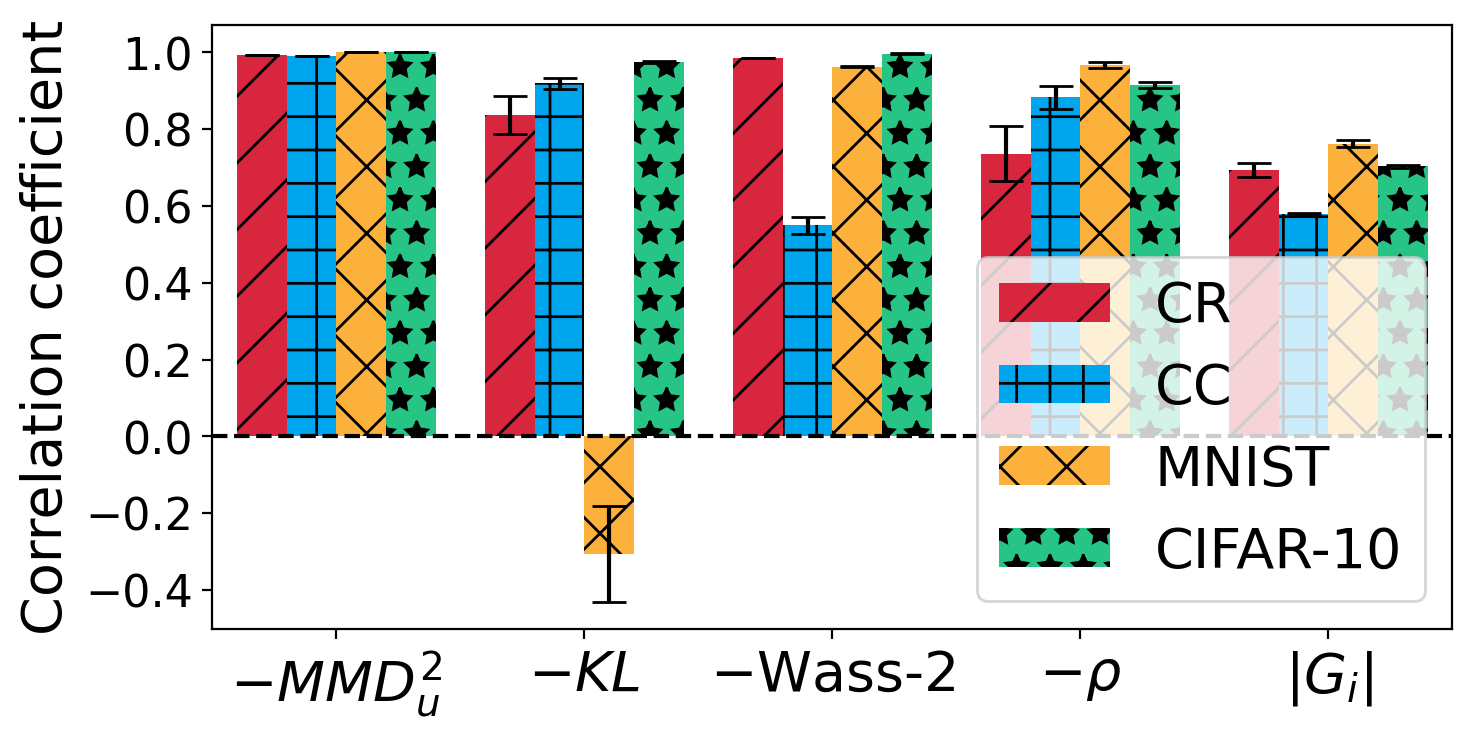} & \hspace{-4.5mm} \includegraphics[width=0.23\textwidth]{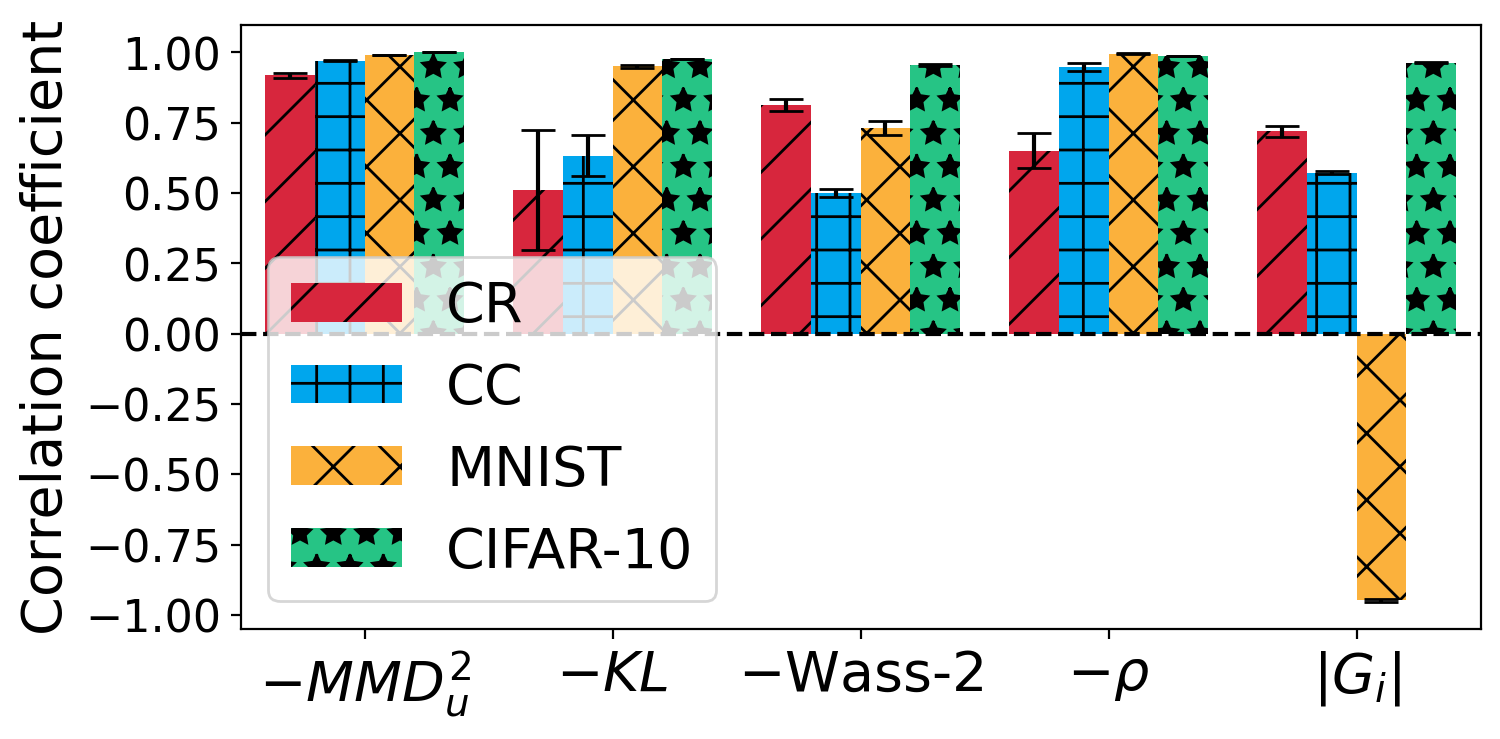}\vspace{-1mm}\\
\hspace{-3.5mm}        \small{(a) Equal disjoint split}
        & \hspace{-4.5mm} \small{(b) Unequal split}\vspace{-2mm}
\end{tabular}        
     \caption{Correlation of (negative of) metrics and $|G_i|$ with $\alpha_i$ (higher is better).}
     \label{fig:corrs}\vspace{-4mm}
\end{figure}

\noindent
\textbf{Assessing contributions of parties.} We assess whether our CGM framework can appropriately quantify the expected marginal contributions of the parties via their Shapley values (Sec.~\ref{sect:rewardscheme}).  
Results are reported in \citesupp{app:addresults}: As expected, 
very large $\alpha_i$'s are observed for parties in the unequal split with full class distribution, while the $\alpha_i$'s are typically more evenly spread in the equal disjoint split.\vspace{0.5mm} 

\noindent
\textbf{Role of inverse temperature hyperparameter $\beta$.} 
To substantiate our claim that $\beta$ in the weighted sampling algorithm (Sec.~\ref{force}) controls the trade-off between the no.~of synthetic data points as rewards vs.~negative unbiased MMD (i.e., closeness to empirical distribution associated with $T$), we report the correlation of $\beta$ with them 
in \citesupp{app:addresults}: $\beta$ is observed to be highly negatively correlated with the no.~of synthetic data points and highly positively correlated with negative unbiased MMD, which aligns with our reasoning in Sec.~\ref{force}. Also, Fig.~\ref{fig:mnistp1} shows that as $\beta$ increases, the algorithm samples fewer synthetic data points but they are more dissimilar from a party's original dataset.\vspace{0.5mm}

\noindent
\textbf{Are synthetic data rewards distributed to parties and their downstream ML task performances commensurate to their contributions?}
We firstly assess whether our CGM framework can distribute synthetic data points $G_i$ to each party $i$ as reward such that the closeness of the empirical distributions associated with $D_i \cup G_i$ vs.~reference dataset $T$ correlates well with its expected marginal contribution via the normalized Shapley value $\alpha_i$. We quantify such a closeness using 
$4$ metrics (which we take the negative of so that higher is better): (a) unbiased MMD estimate~\eqref{seb}, (b) an estimate of reverse Kullback-Leibler divergence based on $k$-nearest neighbors~\cite{perez2008kullback} averaged over $k=2,\ldots,6$, (c) Wasserstein-$2$ distance between multivariate Gaussians fit to $D_i \cup G_i$ vs.~$T$ (i.e., how Fr\'echet Inception distance for evaluating GANs is computed~\cite{heusel2017gans}), and (d) class imbalance $\rho$  calculated with $\rho_i := (1/m) \sum_{y=1}^{m}p^2_y$ where $m$ is the no.~of classes and $p_y$ is the proportion of data points in party $i$'s combined dataset $D_i \cup G_i$ belonging to class $y$. In all datasets, $T$ is equally distributed among the classes and hence achieves a minimum for $\rho$. We also measure the correlation of the no.~$|G_i|$ of synthetic data points as reward to party $i$ with $\alpha_i$. Fig.~\ref{fig:corrs} shows results of the mean and standard error of the correlations over varying $\beta = 1, 2, 4, 8$ in the weighted sampling. It can be observed that across all splits, datasets, and metrics, the negative of the metrics and $|G_i|$ mostly display highly positive correlations with $\alpha_i$, as desired. We defer the discussion of the few negative correlations to \citesupp{app:negativecorrelations}.\vspace{0.5mm}

After distributing the synthetic data rewards to the parties, we assess whether their performances on downstream ML tasks (from augmenting their real data with synthetic data) correlate well with their expected marginal contributions via $\alpha_i$. We simulate supervised learning scenarios where each party trains an SVM on its real and synthetic data and predicts the class labels on unseen real data. For the real-world CC, MNIST, and CIFAR-10 datasets, the correlations of their classification accuracies with $\alpha_i$ (averaged over $\beta$) are, respectively, $0.523$, $0.459$, and $0.174$ in the equal disjoint split, and $0.791$, $0.338$, and $0.835$ in the unequal split. We observe positive correlations overall, thus confirming our hypothesis that the parties' downstream ML task performances are commensurate to their contributions.\vspace{-0.8mm}

\section{Conclusion}
This paper has described a novel CGM framework that incentivizes collaboration among self-interested parties to contribute data to a pool for training a generative model, from which synthetic data are drawn and distributed to the parties as rewards commensurate to their contributions.
The CGM framework comprises an MMD-based data valuation function whose bounds weakly increase with a growing dataset quantity and an improved closeness of the empirical distributions associated with the dataset vs.~the reference dataset,
a reward scheme formulated as an LP for guaranteeing incentives like fairness, and a weighted sampling algorithm with the flexibility of controlling the trade-off between no.~of synthetic data points as reward vs.~the closeness described above.
For future work, we will consider deep kernels to automatically learn useful representations for data valuation and prove stronger guarantees on the non-negativity and monotonicity of our data valuation function.

\subsubsection{Acknowledgments.} This research is supported by the National Research Foundation, Singapore under its AI Singapore Programme (Award No: AISG$2$-RP-$2020$-$018$). Any opinions, findings and conclusions or recommendations expressed in this material are those of the author(s) and do not reflect the views of National Research Foundation, Singapore. Sebastian Tay is supported by the Institute for Infocomm Research of Agency for Science, Technology and Research (A*STAR).

\bibliography{main.bib}

\begin{thebibliography}{51}
\providecommand{\natexlab}[1]{#1}

\bibitem[{Bi{\'n}kowski et~al.(2018)Bi{\'n}kowski, Sutherland, Arbel, and
  Gretton}]{binkowski2018demystifying}
Bi{\'n}kowski, M.; Sutherland, D.~J.; Arbel, M.; and Gretton, A. 2018.
\newblock Demystifying {MMD} {GANs}.
\newblock In \emph{Proc. ICLR}.

\bibitem[{Blanchard et~al.(2017)Blanchard, El~Mhamdi, Guerraoui, and
  Stainer}]{blanchard2017machine}
Blanchard, P.; El~Mhamdi, E.~M.; Guerraoui, R.; and Stainer, J. 2017.
\newblock Machine learning with adversaries: Byzantine tolerant gradient
  descent.
\newblock In \emph{Proc. NeurIPS}, 118--128.

\bibitem[{Borji(2019)}]{borji2019pros}
Borji, A. 2019.
\newblock Pros and cons of {GAN} evaluation measures.
\newblock \emph{Computer Vision and Image Understanding}, 179: 41--65.

\bibitem[{Bousquet, Boucheron, and Lugosi(2003)}]{bousquet2003introduction}
Bousquet, O.; Boucheron, S.; and Lugosi, G. 2003.
\newblock Introduction to statistical learning theory.
\newblock In Bousquet, O.; {von Luxburg}, U.; and R{\"a}tsch, G., eds.,
  \emph{Advanced Lectures on Machine Learning}, volume 3176 of \emph{Lecture
  Notes in Computer Science}, 169--207. Springer, Berlin, Heidelberg.

\bibitem[{Bowles et~al.(2018)Bowles, Chen, Guerrero, Bentley, Gunn, Hammers,
  Dickie, Hern{\'a}ndez, Wardlaw, and Rueckert}]{bowles2018gan}
Bowles, C.; Chen, L.; Guerrero, R.; Bentley, P.; Gunn, R.; Hammers, A.; Dickie,
  D.~A.; Hern{\'a}ndez, M.~V.; Wardlaw, J.; and Rueckert, D. 2018.
\newblock {GAN} augmentation: Augmenting training data using generative
  adversarial networks.
\newblock {arXiv}:1810.10863.

\bibitem[{Chalkiadakis, Elkind, and
  Wooldridge(2011)}]{chalkiadakis2011computational}
Chalkiadakis, G.; Elkind, E.; and Wooldridge, M. 2011.
\newblock Computational Aspects of Cooperative Game Theory.
\newblock In Brachman, R.~J.; Cohen, W.~W.; and Dietterich, T.~G., eds.,
  \emph{Synthesis Lectures on Artificial Intelligence and Machine Learning}.
  Morgan \& Claypool Publishers.

\bibitem[{Chen et~al.(2020)Chen, Liu, Shen, Shen, Tang, and
  Yang}]{chen2020mechanism}
Chen, M.; Liu, Y.; Shen, W.; Shen, Y.; Tang, P.; and Yang, Q. 2020.
\newblock Mechanism Design for Multi-Party Machine Learning.
\newblock {arXiv}:2001.08996.

\bibitem[{Cong et~al.(2020)Cong, Yu, Weng, and Yiu}]{Cong2020-FL-incentive}
Cong, M.; Yu, H.; Weng, X.; and Yiu, S. 2020.
\newblock A Game-Theoretic Framework for Incentive Mechanism Design in
  Federated Learning.
\newblock In Yang, Q.; Fan, L.; and Yu, H., eds., \emph{Federated Learning},
  volume 12500 of \emph{Lecture Notes in Computer Science}, 205--222. Springer,
  Cham.

\bibitem[{Dal~Pozzolo et~al.(2015)Dal~Pozzolo, Caelen, Johnson, and
  Bontempi}]{dal2015calibrating}
Dal~Pozzolo, A.; Caelen, O.; Johnson, R.~A.; and Bontempi, G. 2015.
\newblock Calibrating probability with undersampling for unbalanced
  classification.
\newblock In \emph{2015 IEEE Symposium Series on Computational Intelligence},
  159--166. IEEE.

\bibitem[{Devereaux et~al.(2016)Devereaux, Guyatt, Gerstein, Connolly, and
  Yusuf}]{devereaux2016}
Devereaux, P.~J.; Guyatt, G.; Gerstein, H.; Connolly, S.; and Yusuf, S. 2016.
\newblock Toward Fairness in Data Sharing.
\newblock \emph{New England Journal of Medicine}, 375(5): 405--407.

\bibitem[{Ding et~al.(2021)Ding, Liang, Bi, and Pan}]{ding2021differentially}
Ding, J.; Liang, G.; Bi, J.; and Pan, M. 2021.
\newblock Differentially Private and Communication Efficient Collaborative
  Learning.
\newblock In \emph{Proc. AAAI}.

\bibitem[{Frid-Adar et~al.(2018)Frid-Adar, Diamant, Klang, Amitai, Goldberger,
  and Greenspan}]{frid2018gan}
Frid-Adar, M.; Diamant, I.; Klang, E.; Amitai, M.; Goldberger, J.; and
  Greenspan, H. 2018.
\newblock {GAN}-based synthetic medical image augmentation for increased {CNN}
  performance in liver lesion classification.
\newblock \emph{Neurocomputing}, 321: 321--331.

\bibitem[{Fukumizu et~al.(2007)Fukumizu, Gretton, Sun, and
  Sch{\"o}lkopf}]{fukumizu2007kernel}
Fukumizu, K.; Gretton, A.; Sun, X.; and Sch{\"o}lkopf, B. 2007.
\newblock Kernel measures of conditional dependence.
\newblock In \emph{Proc. NeurIPS}, volume~20, 489--496.

\bibitem[{Fukumizu et~al.(2008)Fukumizu, Sriperumbudur, Gretton, and
  Sch{\"o}lkopf}]{fukumizu2008characteristic}
Fukumizu, K.; Sriperumbudur, B.~K.; Gretton, A.; and Sch{\"o}lkopf, B. 2008.
\newblock Characteristic Kernels on Groups and Semigroups.
\newblock In \emph{Proc. NeurIPS}, 473--480.

\bibitem[{Ghorbani, Kim, and Zou(2020)}]{ghorbani2020distributional}
Ghorbani, A.; Kim, M.~P.; and Zou, J. 2020.
\newblock A Distributional Framework for Data Valuation.
\newblock In \emph{Proc. ICML}.

\bibitem[{Ghorbani and Zou(2019)}]{Ghorbani2019}
Ghorbani, A.; and Zou, J. 2019.
\newblock Data {Shapley}: Equitable valuation of data for machine learning.
\newblock In \emph{Proc. ICML}, 4053--4065.

\bibitem[{Gretton et~al.(2012)Gretton, Borgwardt, Rasch, Sch{\"o}lkopf, and
  Smola}]{gretton2012kernel}
Gretton, A.; Borgwardt, K.~M.; Rasch, M.~J.; Sch{\"o}lkopf, B.; and Smola, A.
  2012.
\newblock A kernel two-sample test.
\newblock \emph{Journal of Machine Learning Research}, 13(1): 723--773.

\bibitem[{Hayes and Ohrimenko(2018)}]{hayes2018contamination}
Hayes, J.; and Ohrimenko, O. 2018.
\newblock Contamination Attacks and Mitigation in Multi-Party Machine Learning.
\newblock In \emph{Proc. NeurIPS}.

\bibitem[{Heusel et~al.(2017)Heusel, Ramsauer, Unterthiner, Nessler, and
  Hochreiter}]{heusel2017gans}
Heusel, M.; Ramsauer, H.; Unterthiner, T.; Nessler, B.; and Hochreiter, S.
  2017.
\newblock {GANs} trained by a two time-scale update rule converge to a local
  {Nash} equilibrium.
\newblock In \emph{Proc. NeurIPS}, 6626--6637.

\bibitem[{Hu et~al.(2019)Hu, Niu, Yang, and
  Zhou}]{Hu2019-FDML-collaborative-on-features}
Hu, Y.; Niu, D.; Yang, J.; and Zhou, S. 2019.
\newblock FDML: A Collaborative Machine Learning Framework for Distributed
  Features.
\newblock In \emph{Proc. ACM SIGKDD}, Pages 2232–2240.

\bibitem[{Jia et~al.(2020)Jia, Dao, Wang, Hubis, Hynes, Gurel, Li, Zhang, Song,
  and Spanos}]{Jia2020}
Jia, R.; Dao, D.; Wang, B.; Hubis, F.~A.; Hynes, N.; Gurel, N.~M.; Li, B.;
  Zhang, C.; Song, D.; and Spanos, C. 2020.
\newblock {Towards efficient data valuation based on the Shapley value}.
\newblock In \emph{Proc. AISTATS}.

\bibitem[{Kang et~al.(2019{\natexlab{a}})Kang, Xiong, Niyato, Xie, and
  Zhang}]{kang2019incentive2}
Kang, J.; Xiong, Z.; Niyato, D.; Xie, S.; and Zhang, J. 2019{\natexlab{a}}.
\newblock Incentive Mechanism for Reliable Federated Learning: A Joint
  Optimization Approach to Combining Reputation and Contract Theory.
\newblock \emph{IEEE Internet of Things Journal}, 6(6): 10700--10714.

\bibitem[{Kang et~al.(2019{\natexlab{b}})Kang, Xiong, Niyato, Yu, Liang, and
  Kim}]{kang2019incentive}
Kang, J.; Xiong, Z.; Niyato, D.; Yu, H.; Liang, Y.-C.; and Kim, D.~I.
  2019{\natexlab{b}}.
\newblock Incentive Design for Efficient Federated Learning in Mobile Networks:
  A Contract Theory Approach.
\newblock In \emph{Proc. IEEE VTS Asia Pacific Wireless Communications
  Symposium (APWCS)}, 1--5.

\bibitem[{Kim, Khanna, and Koyejo(2016)}]{kim2016examples}
Kim, B.; Khanna, R.; and Koyejo, O.~O. 2016.
\newblock Examples are not enough, learn to criticize! {Criticism} for
  interpretability.
\newblock In \emph{Proc. NeurIPS}, 2280--2288.

\bibitem[{Krawczyk(2016)}]{krawczyk2016learning}
Krawczyk, B. 2016.
\newblock Learning from imbalanced data: Open challenges and future directions.
\newblock \emph{Progress in Artificial Intelligence}, 5(4): 221--232.

\bibitem[{Krizhevsky(2009)}]{krizhevsky2009learning}
Krizhevsky, A. 2009.
\newblock \emph{Learning multiple layers of features from tiny images}.
\newblock Master's thesis, Department of Computer Science, University of
  Toronto.

\bibitem[{LeCun et~al.(1998)LeCun, Bottou, Bengio, and
  Haffner}]{lecun1998gradient}
LeCun, Y.; Bottou, L.; Bengio, Y.; and Haffner, P. 1998.
\newblock Gradient-based learning applied to document recognition.
\newblock \emph{Proceedings of the IEEE}, 86(11): 2278--2324.

\bibitem[{Lo and DeMets(2016)}]{lo2016}
Lo, B.; and DeMets, D.~L. 2016.
\newblock Incentives for Clinical Trialists to Share Data.
\newblock \emph{New England Journal of Medicine}, 375(12): 1112--1115.

\bibitem[{Lyu et~al.(2020)Lyu, Xu, Wang, and Yu}]{lyu2020-CFFL}
Lyu, L.; Xu, X.; Wang, Q.; and Yu, H. 2020.
\newblock Collaborative Fairness in Federated Learning.
\newblock In Yang, Q.; Fan, L.; and Yu, H., eds., \emph{Federated Learning},
  volume 12500 of \emph{Lecture Notes in Computer Science}, 189--204. Springer,
  Cham.

\bibitem[{Maleki et~al.(2013)Maleki, Tran{-}Thanh, Hines, Rahwan, and
  Rogers}]{MalekiTHRR13}
Maleki, S.; Tran{-}Thanh, L.; Hines, G.; Rahwan, T.; and Rogers, A. 2013.
\newblock Bounding the Estimation Error of Sampling-based {Shapley} Value
  Approximation.
\newblock \emph{CoRR}, abs/1306.4265.

\bibitem[{McInnes, Healy, and Melville(2018)}]{mcinnes2018umap}
McInnes, L.; Healy, J.; and Melville, J. 2018.
\newblock {UMAP}: Uniform manifold approximation and projection for dimension
  reduction.
\newblock {arXiv}:1802.03426.

\bibitem[{Micchelli, Xu, and Zhang(2006)}]{micchelli2006universal}
Micchelli, C.~A.; Xu, Y.; and Zhang, H. 2006.
\newblock Universal Kernels.
\newblock \emph{Journal of Machine Learning Research}, 7(12).

\bibitem[{Ohrimenko, Tople, and
  Tschiatschek(2019)}]{ohrimenko2019collaborative}
Ohrimenko, O.; Tople, S.; and Tschiatschek, S. 2019.
\newblock Collaborative machine learning markets with data-replication-robust
  payments.
\newblock {arXiv}:1911.09052.

\bibitem[{P{\'e}rez-Cruz(2008)}]{perez2008kullback}
P{\'e}rez-Cruz, F. 2008.
\newblock {Kullback}-{Leibler} divergence estimation of continuous
  distributions.
\newblock In \emph{Proc. IEEE ISIT}, 1666--1670.

\bibitem[{Richardson, Filos{-}Ratsikas, and
  Faltings(2020{\natexlab{a}})}]{Richardson2020-FL-incentive}
Richardson, A.; Filos{-}Ratsikas, A.; and Faltings, B. 2020{\natexlab{a}}.
\newblock Budget-Bounded Incentives for Federated Learning.
\newblock In Yang, Q.; Fan, L.; and Yu, H., eds., \emph{Federated Learning},
  volume 12500 of \emph{Lecture Notes in Computer Science}, 176--188. Springer,
  Cham.

\bibitem[{Richardson, Filos{-}Ratsikas, and
  Faltings(2020{\natexlab{b}})}]{richardson2019rewarding}
Richardson, A.; Filos{-}Ratsikas, A.; and Faltings, B. 2020{\natexlab{b}}.
\newblock Incentivizing and Rewarding High-Quality Data via Influence
  Functions.
\newblock In \emph{Proc. ICML Workshop on Incentives in Machine Learning}.

\bibitem[{Sandfort et~al.(2019)Sandfort, Yan, Pickhardt, and
  Summers}]{sandfort2019data}
Sandfort, V.; Yan, K.; Pickhardt, P.~J.; and Summers, R.~M. 2019.
\newblock Data augmentation using generative adversarial networks ({CycleGAN})
  to improve generalizability in {CT} segmentation tasks.
\newblock \emph{Scientific Reports}, 9.

\bibitem[{Seitzer(2020)}]{Seitzer2020FID}
Seitzer, M. 2020.
\newblock {pytorch-fid: FID Score for PyTorch}.
\newblock \url{https://github.com/mseitzer/pytorch-fid}.
\newblock Version 0.1.1.

\bibitem[{Shapley(1953)}]{shapley1953value}
Shapley, L.~S. 1953.
\newblock A value for $n$-person games.
\newblock In Kuhn, H.~W.; and Tucker, A.~W., eds., \emph{Contributions to the
  Theory of Games}, volume~2, 307--317. Princeton Univ. Press.

\bibitem[{Sim et~al.(2020)Sim, Zhang, Chan, and Low}]{simcollaborative}
Sim, R. H.~L.; Zhang, Y.; Chan, M.~C.; and Low, B. K.~H. 2020.
\newblock Collaborative Machine Learning with Incentive-Aware Model Rewards.
\newblock In \emph{Proc. ICML}, 8927--8936.

\bibitem[{Sim et~al.(2021)Sim, Zhang, Low, and Jaillet}]{sim2021collaborative}
Sim, R. H.~L.; Zhang, Y.; Low, B. K.~H.; and Jaillet, P. 2021.
\newblock Collaborative Bayesian Optimization with Fair Regret.
\newblock In \emph{Proc. ICML}, 9691--9701.

\bibitem[{So, Guler, and Avestimehr(2020)}]{so2020scalable}
So, J.; Guler, B.; and Avestimehr, S. 2020.
\newblock A Scalable Approach for Privacy-Preserving Collaborative Machine
  Learning.
\newblock In \emph{Proc. NeurIPS}, 8054--8066.

\bibitem[{Sriperumbudur et~al.(2008)Sriperumbudur, Gretton, Fukumizu,
  Lanckriet, and Sch{\"o}lkopf}]{sriperumbudur2008injective}
Sriperumbudur, B.~K.; Gretton, A.; Fukumizu, K.; Lanckriet, G.; and
  Sch{\"o}lkopf, B. 2008.
\newblock Injective {Hilbert} space embeddings of probability measures.
\newblock In \emph{21st Annual Conference on Learning Theory (COLT 2008)},
  111--122. Omnipress.

\bibitem[{Tsai and Chen(2010)}]{TSAI2010374}
Tsai, C.-F.; and Chen, M.-L. 2010.
\newblock Credit rating by hybrid machine learning techniques.
\newblock \emph{Applied Soft Computing}, 10(2): 374--380.

\bibitem[{Vaidya(1989)}]{vaidya1989speeding}
Vaidya, P.~M. 1989.
\newblock Speeding-up linear programming using fast matrix multiplication.
\newblock In \emph{30th Annual Symposium on Foundations of Computer Science},
  332--337. IEEE Computer Society.

\bibitem[{Wang et~al.(2020)Wang, Rausch, Zhang, Jia, and
  Song}]{Wang2020-SV-in-FL}
Wang, T.; Rausch, J.; Zhang, C.; Jia, R.; and Song, D. 2020.
\newblock A Principled Approach to Data Valuation for Federated Learning.
\newblock In Yang, Q.; Fan, L.; and Yu, H., eds., \emph{Federated Learning},
  volume 12500 of \emph{Lecture Notes in Computer Science}, 153--167. Springer,
  Cham.

\bibitem[{Xu et~al.(2021{\natexlab{a}})Xu, Lyu, Ma, Miao, Foo, and
  Low}]{xu2021gradient}
Xu, X.; Lyu, L.; Ma, X.; Miao, C.; Foo, C.~S.; and Low, B. K.~H.
  2021{\natexlab{a}}.
\newblock Gradient Driven Rewards to Guarantee Fairness in Collaborative
  Machine Learning.
\newblock In \emph{Proc. NeurIPS}.

\bibitem[{Xu et~al.(2021{\natexlab{b}})Xu, Wu, Foo, and Low}]{xu2021validation}
Xu, X.; Wu, Z.; Foo, C.~S.; and Low, B. K.~H. 2021{\natexlab{b}}.
\newblock Validation Free and Replication Robust Volume-based Data Valuation.
\newblock In \emph{Proc. NeurIPS}.

\bibitem[{Yan and Procaccia(2021)}]{yan2021core}
Yan, T.; and Procaccia, A.~D. 2021.
\newblock If You Like {Shapley} Then You'll Love the Core.
\newblock In \emph{Proc. AAAI}.

\bibitem[{Yu et~al.(2020)Yu, Liu, Liu, Chen, Cong, Weng, Niyato, and
  Yang}]{Yu-et-al:2020AIES}
Yu, H.; Liu, Z.; Liu, Y.; Chen, T.; Cong, M.; Weng, X.; Niyato, D.; and Yang,
  Q. 2020.
\newblock A Fairness-Aware Incentive Scheme for Federated Learning.
\newblock In \emph{Proc. AIES}.

\bibitem[{Zhan et~al.(2020)Zhan, Li, Qu, Zeng, and Guo}]{zhan2020learning}
Zhan, Y.; Li, P.; Qu, Z.; Zeng, D.; and Guo, S. 2020.
\newblock A learning-based incentive mechanism for federated learning.
\newblock \emph{IEEE Internet of Things Journal}, 7(7): 6360--6368.

\end{thebibliography}
\newpage
\onecolumn
\appendix
\section{Kernels and MMD}\label{app:kernel}
The MMD requires the use of \textit{characteristic} kernels where a kernel is characteristic if $\text{MMD}(\mathcal F, \mathcal D', \mathcal D) = 0 \Leftrightarrow \mathcal D' = \mathcal D$ \cite{fukumizu2008characteristic}. Without any assumption, it is not straightforward to determine if an arbitrary kernel is characteristic. However, \cite{sriperumbudur2008injective} have shown that a stationary kernel is characteristic if it can be written in the form $\psi(x-x’)$ where $\psi$ is a bounded, continuous, real-valued, positive definite function on $\mathbb R^d$ with a compact support, which is relatively straightforward to check. This motivates the use of stationary kernels with the MMD metric. The stationary kernels include the class of isotropic kernels whose $k(x, x’)$ only depends on the distance between data points $\lVert x-x’\rVert$. The squared exponential and Laplace kernels are examples of isotropic kernels shown by \cite{fukumizu2007kernel} to be characteristic.

In our experiments, we opt to use the squared exponential kernel
\begin{equation}
    k(x, x') = \exp\brac{-\frac{\norm{x-x'}^2}{2\ell}} \ ,
\end{equation}
since it is characteristic and commonly used as a `default' kernel as it is \textit{universal}: any continuous function in a compact space with bounded norm can be approximated to arbitrary accuracy with some linear combination of a universal kernel’s basis functions \cite{micchelli2006universal}.

\section{Justification of Assumption A}\label{app:assumptionA}
To see the validity of assumption A, suppose that there are $10$ parties $i= 1,\ldots, 10$, each of whom intends to train a binary classifier on MNIST digits to predict if a digit is $i$ or not. But, each party $i$'s dataset $D_i$ only comprises images of digit $i$ and hence a restricted subset of the support of $\mathcal D$ (i.e., imbalanced data). If every party can access images of all $10$ digits, then their classifiers
can perform better due to availability of negative examples. In general, imbalanced data in classification problems yield trained ML models that make  predictions biased towards the majority classes, which is an issue receiving significant attention \cite{krawczyk2016learning}.

\section{Proofs}

\subsection{Proof of Proposition~\ref{prop:valuefunc}}
\label{app:proofvaluefuncprop}
\begin{manualproposition}{\ref{prop:valuefunc}}
Let $k^*$ be the value of every diagonal component of $\mathbf K$ s.t.~$k^* := k(x, x)\geq k(x, x')$ for all $x,x'\in T$, and 
$\sigma_S := \left \langle s^{-2} \boldsymbol1_{[x, x' \in S]}, \mbf K \right\rangle$. Then,  $v(S)$~\eqref{eq:valuefunc} can be re-expressed as
\begin{equation*}
    v(S) = (s-1)^{-1}(\sigma_S-k^*)-{\textup{MMD}}^2_u(\mathcal F, S, T) + c
\end{equation*}
where $c$ is a constant (i.e., independent of $S$).
\end{manualproposition}

To ease exposition, we will express the unbiased estimate ${\text{MMD}}^2_u(\mathcal F, S, T)$ and biased estimate ${\text{MMD}}^2_b(\mathcal F, S, T)$ of the squared MMD in the form of summations here instead of the matrix inner products~\eqref{seb}.

\begin{proof}
Define the terms
\begin{align}
    &\sigma_S := \frac{1}{s^2}\sum_{x \in S}\sum_{x' \in S}k(x, x'),\quad \sigma_{ST} := \frac{2}{st}\sum_{x \in S}\sum_{x' \in T}k(x, x'),\quad \sigma_T := \frac{1}{t^2}\sum_{x \in T}\sum_{x' \in T}k(x, x')  \label{jumbo}\\
    &\sigma'_S := \frac{1}{s(s-1)}\sum_{x \in S}\sum_{x' \in S, x' \neq x}k(x, x'),\qquad \sigma'_T := \frac{1}{t(t-1)}\sum_{x \in T}\sum_{x' \in T, x \neq x'}k(x, x')\ .\nonumber
\end{align}
The unbiased and biased estimates of the squared MMD can then be written as
\begin{gather}
   {\text{MMD}}^2_u(\mathcal F, S, T) = \sigma'_S - \sigma_{ST} + \sigma'_T \nonumber\\ 
    {\text{MMD}}^2_b(\mathcal F, S, T) = \sigma_S - \sigma_{ST} + \sigma_T \nonumber
\end{gather}
with the same middle term. Then,
\begin{align}
    {\text{MMD}}^2_u(\mathcal F, S, T) - {\text{MMD}}^2_b(\mathcal F, S, T) &= (\sigma'_S - \sigma_S) + (\sigma'_T - \sigma_T) \nonumber\\
    &= \brac{\frac{s}{s-1}\sigma_S-\frac{1}{s(s-1)}\sum_{x\in S} k(x, x) -\sigma_S} + (\sigma'_T - \sigma_T) \nonumber\\
    &= \frac{1}{s-1}\sigma_S-\frac{k^*}{s-1} + (\sigma'_T - \sigma_T) \nonumber\\
    &= \frac{1}{s-1}(\sigma_S - k^*) + (\sigma'_T - \sigma_T) \nonumber\\
    - {\text{MMD}}^2_b(\mathcal F, S, T) &= \frac{1}{s-1}(\sigma_S - k^*) -{\text{MMD}}^2_u(\mathcal F, S, T)  + (\sigma'_T - \sigma_T) \nonumber\\
    v(S) - \sigma_T  &= \frac{1}{s-1}(\sigma_S - k^*) -{\text{MMD}}^2_u(\mathcal F, S, T)  + (\sigma'_T - \sigma_T) \label{eq:mumbo}\\
    v(S) &= \frac{1}{s-1}(\sigma_S - k^*) -{\text{MMD}}^2_u(\mathcal F, S, T)  + \sigma'_T
    \label{eq:subtarget1}
\end{align}
where~\eqref{eq:mumbo} follows from the definition of $v(S)$~\eqref{eq:valuefunc}.
Noting that $\sigma'_T$ is constant w.r.t.~$S$ and depends only on $T$ completes the proof.
\end{proof}

\subsection{Proof of Corollary~\ref{coro:valuefunc}}
\label{app:proofvaluefunccoro}

\begin{manualcorollary}{\ref{coro:valuefunc}}
Suppose that there exist some constants $\gamma$ and $\eta$ s.t.~$\gamma\leq k(x,x')\leq \eta\leq k^*$ for all $x,x'\in T$ and $x\neq x'$.
Then,
\begin{equation*}
    s^{-1}(\gamma - k^*)-{\textup{MMD}}^2_u(\mathcal F, S, T) + c \ \ \leq \ \ v(S) \ \ \leq\ \  s^{-1}(\eta - k^*)-{\textup{MMD}}^2_u(\mathcal F, S, T) + c\ .
\end{equation*}
\end{manualcorollary}

To ease exposition, we will express the unbiased estimate ${\text{MMD}}^2_u(\mathcal F, S, T)$ and biased estimate ${\text{MMD}}^2_b(\mathcal F, S, T)$ of the squared MMD in the form of summations here instead of the matrix inner products~\eqref{seb}.
\begin{proof}
From~\eqref{jumbo}, $\sigma_S$ can be rewritten as a sum of the $s(s-1)$ off-diagonal components of $\mathbf K$ and the $s$ diagonal components of $\mathbf K$:
\begin{equation*}
    \sigma_S = \frac{1}{s^2}\brac{\sum_{x \in S}\sum_{x' \in S, x \neq x'}k(x, x') + \sum_{x \in S} k^*} = \frac{1}{s^2}\brac{\sum_{x \in S}\sum_{x' \in S, x \neq x'}k(x, x') + s k^*}.
\end{equation*}
Since there exist some constants $\gamma$ and $\eta$ s.t.~$\gamma\leq k(x,x')\leq \eta\leq k^*$ for all $x,x'\in T$ and $x\neq x'$, it follows that
\begin{equation}
\begin{array}{rcccl}
    \displaystyle\frac{1}{s^2}\brac{s(s-1)\gamma + sk^*} &\leq & \sigma_S &\leq & \displaystyle\frac{1}{s^2}\brac{s(s-1)\eta + sk^*} \vspace{1mm}\\
    \displaystyle\frac{1}{s}\brac{(s-1)\gamma + k^*} &\leq & \sigma_S &\leq &  \displaystyle\frac{1}{s}\brac{(s-1)\eta + k^*}\ . 
\end{array}
\label{kingkong}
\end{equation}
Substituting~\eqref{kingkong} into the first term on the RHS of~\eqref{eq:subtarget1},
\begin{equation}
\begin{array}{rcccl}
    \displaystyle\frac{1}{s}\brac{\gamma + \frac{1}{s-1}k^*} - \frac{1}{s-1}k^* &\leq & \displaystyle\frac{1}{s-1}(\sigma_S - k^*) &\leq &\displaystyle\frac{1}{s}\brac{\eta + \frac{1}{s-1}k^*} - \frac{1}{s-1}k^* \vspace{1mm}\\ 
    \displaystyle\frac{1}{s}\brac{\gamma + \frac{1-s}{s-1}k^*} &\leq & \displaystyle\frac{1}{s-1}(\sigma_S - k^*) &\leq &\displaystyle\frac{1}{s}\brac{\eta + \frac{1-s}{s-1}k^*} \\
    \displaystyle\frac{1}{s}\brac{\gamma - k^*} &\leq &\displaystyle \frac{1}{s-1}(\sigma_S - k^*) &\leq &\displaystyle\frac{1}{s}\brac{\eta - k^*}\ .
\end{array}
\label{kingkong2}
\end{equation}
Substituting~\eqref{kingkong2} back into~\eqref{eq:subtarget1} completes the proof.
\end{proof}

\subsection{Proof of Proposition~\ref{prop:etalesser}}
\label{app:proofetalesser}

\begin{manualproposition}{\ref{prop:etalesser}}
If $v^* = v_c(N)$ and $\rho$ satisfies  $\brac{{\phi_i}/{\phi^*}}^\rho \times v^* < v_c(\{i\})$ for some party $i\in N$, then $(r_1,\ldots,r_n)$~\eqref{eq:rewarddef} may not satisfy R5 due to possibly violating F3 (i.e., strict desirability).
\end{manualproposition}

\begin{proof}
We show a proof by example. Consider the following non-negative and monotone characteristic function for data valuation $v_c$ over a set $N$ of $4$ parties:
\begin{align*}
    v_c(\emptyset) &= 0\ , & v_c(\{ 2,3 \}) &= 10\ , \\
    v_c(\{1\}) &= 9\ , & v_c(\{ 2,4 \}) &= 10\ , \\
    v_c(\{2\}) &= 9\ , & v_c(\{ 3,4 \}) &= 10\ , \\
    v_c(\{3\}) &= 10\ , & v_c(\{ 1,2,3 \}) &= 18\ , \\
    v_c(\{4\}) &= 10\ , & v_c(\{ 1,2,4 \}) &= 18\ , \\
    v_c(\{ 1,2 \}) &= 18\ , & v_c(\{ 1,3,4 \}) &= 11\ , \\
    v_c(\{ 1,3 \}) &= 10\ , & v_c(\{ 2,3,4 \}) &= 10\ , \\
    v_c(\{ 1,4 \}) &= 11\ , & v_c(\{ 1,2,3,4 \}) &= 18\ .
\end{align*}
In the above constructed example,
there exist parties $3$ and $4$ s.t.~the marginal contribution of party $4$ is more than that of party $3$ for coalition $\{1\}$,  and for all coalitions not including both parties $3$ and $4$, the marginal contribution of party $4$ is at least that of party $3$.

The Shapley values of parties $1$, $2$, $3$, and $4$ are $\phi_1={13}/{2}, \phi_2=6, \phi_3={8}/{3}, \phi_4= {17}/{6}$, and their corresponding normalized Shapley values are $\alpha_1=1, \alpha_2={3}/{13}, \alpha_3={16}/{39}, \alpha_4={17}/{39}$. We adversarially choose $\rho = 1$ so that $\brac{{\phi_i}/{\phi^*}}^\rho \times v^* = 18\alpha_i < v_c(\{i\})$ for both parties $3$ and $4$. Then, $r_3 = \max\left\{10, 18\times {16}/{39}\right\} = 10$ and $r_4 = \max\left\{10, 18\times{17}/{39}\right\} = 10$. Both parties $3$ and $4$ are assigned reward values of $r_3 = r_4 = 10$, which implies $r_4 \ngtr r_3$.
So, the (F3) strict desirability property is violated.
\end{proof}

\subsection{Proof of Proposition~\ref{prop:cgmincentives}}
\label{app:proofofetagreater}

\begin{manualproposition}{\ref{prop:cgmincentives}}
Let $0 \leq \rho \leq 1$.
Using the new definitions of R2, R3, and F4 in Definitions~\ref{def:cgmfeasibility}, \ref{def:cgmweakefficiency}, and~\ref{def:cgmstrictmonotonicity}, 
the rectified $\rho$-Shapley fair reward values $(r_1,\ldots,r_n)$~\eqref{eq:rewarddef}
satisfy\vspace{0.5mm}\\
(a) R1 to R4 if $\rho$ and $v^*$ are set to satisfy $\forall i \in N\ \ (v_c(\{i\})\leq v^*) \wedge  (\brac{{\phi_i}/{\phi^*}}^\rho \times v^* \leq v(D_i \cup G))\ ,$\vspace{0.5mm}\\
(b) R1 to R5 if $\rho > 0$ and $v^*$ are set to satisfy $\forall i \in N\ \ v_c(\{i\}) \leq \brac{{\phi_i}/{\phi^*}}^\rho \times v^* \leq v(D_i \cup G)\ ,$ and\vspace{0.5mm}\\
(c) R1 to R6 if $\rho > 0$ and $v^*$ are set to satisfy $\forall i \in N\ \ v_c(C_i) \leq \brac{{\phi_i}/{\phi^*}}^\rho \times v^* \leq v(D_i \cup G)\ .$
\end{manualproposition}

\begin{proof}
We will first prove (a).

(\textbf{R1})\ \textbf{Non-negativity.} Since $v_c$ is non-negative and $\forall i \in N\ \ v_c(\{i\})\leq v^*$ is given, it follows from~\eqref{eq:rewarddef} that $\forall i \in N\ \ r_i\geq 0$. 

(\textbf{R2})\ \textbf{CGM Feasibility.}
Since $v_c$ is monotone,
$\forall i \in N\ \ v_c(\{i\})\leq v(D_i \cup G)$.
Since $\forall i \in N\ \ \brac{{\phi_i}/{\phi^*}}^\rho \times v^*\leq v(D_i \cup G)$ is given, it follows from~\eqref{eq:rewarddef} that  $\forall i \in N\ \ r_i = \max \{v_c(\{i\}), \brac{{\phi_i}/{\phi^*}}^\rho \times v^*\}\leq v(D_i \cup G)$.

(\textbf{R3})\ \textbf{CGM Weak Efficiency.} Since $\forall i \in N\ \ (v_c(\{i\})\leq v^*)$ is given, it follows from~\eqref{eq:rewarddef} that the party $i$ with the maximum Shapley value $\phi_i = \phi^*$ should be assigned a reward value of  $r_i=\max\{v_c(\{i\}),\brac{{\phi_i}/{\phi^*}}^\rho \times v^*\} = \max\{v_c(\{i\}),v^*\} = v^*$.

(\textbf{R4})\  \textbf{Individual Rationality.}  It follows from~\eqref{eq:rewarddef} that $\forall i \in N\ \ r_i=\max\{v_c(\{i\}),\brac{{\phi_i}/{\phi^*}}^\rho \times v^*\} \geq v_c(\{i\})$.

Since we have proven above that (a) holds, (b) holds for R1 to R4 since $\forall i \in N\ \ v_c(\{i\}) \leq \brac{{\phi_i}/{\phi^*}}^\rho \times v^* \leq v^*$ as $\brac{{\phi_i}/{\phi^*}}^\rho \leq 1$, and that (c) holds for R1 to R4 since $\forall i \in N\ \ v_c(\{i\})\leq v_c(C_i) \leq \brac{{\phi_i}/{\phi^*}}^\rho \times v^* \leq v^*$ as $v_c(\{i\})\leq v_c(C_i)$ due to a monotone $v_c$.

To prove that (b) holds for R5, since $\forall i \in N\ \ v_c(\{i\}) \leq \brac{{\phi_i}/{\phi^*}}^\rho \times v^*$ is given, it follows from~\eqref{eq:rewarddef} that  $\forall i \in N\ \ r_i = \max \{v_c(\{i\}), \brac{{\phi_i}/{\phi^*}}^\rho \times v^*\}= \brac{{\phi_i}/{\phi^*}}^\rho \times v^*$ 
which is the same as that of~\citet{simcollaborative} (Theorem~\ref{thm:sim}) up to a multiplicative constant ${v^*}/{v_c(N)}$. 
Properties F1 to F3 are satisfied since their proofs remain the same as that of~\citet{simcollaborative} even if the reward value of~\citet{simcollaborative} is multiplied by a multiplicative constant. 
For property (F4) CGM Strict Monotonicity that is redefined in Definition~\ref{def:cgmstrictmonotonicity}, 
replacing all instances of $v_N$ with $v^*$ and $v'_N$ with $v'^*$ in the proof of~\citet{simcollaborative} proves that the redefined F4 holds.

Since we have proven above that (b) holds for R5, (c) holds for R5 since $\forall i \in N\ \ v_c(\{i\})\leq v_c(C_i) \leq \brac{{\phi_i}/{\phi^*}}^\rho \times v^*$ as $v_c(\{i\})\leq v_c(C_i)$ due to a monotone $v_c$.

To prove that (c) holds for R6, since $\forall i \in N\ \ v_c(C_i) \leq \brac{{\phi_i}/{\phi^*}}^\rho \times v^*$ is given and it is proven above that $\forall i \in N\ \ r_i =  \brac{{\phi_i}/{\phi^*}}^\rho \times v^*$, $\forall i \in N\ \ v_c(C_i) \leq r_i$ which guarantees that R6 holds due to the monotone $v_c$, as explained in~\cite{simcollaborative}. 
\end{proof}

\subsection{Proof of Proposition~\ref{prop:nonneg}}
\label{app:proofnonneg}

\begin{manualproposition}{\ref{prop:nonneg}}[\bf Lower bound of $k$ for non-negative $v(S)$]
Suppose that there exist some constants $\gamma$ and $\eta$ s.t.~$\gamma\leq k(x,x')\leq \eta\leq k^*$ for all $x,x'\in T$ and $x\neq x'$. Then,
$$
\forall S\subseteq T\ \ [\gamma = ({t-2s})(k^* + (s-1)\eta)/(2s(t-s))]\Rightarrow v(S)\geq 0 \ .
$$
\end{manualproposition}
To ease exposition, we will express the unbiased estimate ${\text{MMD}}^2_u(\mathcal F, S, T)$ and biased estimate ${\text{MMD}}^2_b(\mathcal F, S, T)$ of the squared MMD in the form of summations here instead of the matrix inner products~\eqref{seb}.
\begin{proof}
For $v(S)$~\eqref{eq:valuefunc} to be non-negative, we require that
\begin{equation*}
    v(S) = \frac{2}{st} \sum_{x \in S} \sum_{x' \in T} k(x, x') - \frac{1}{s^2} \sum_{x \in S} \sum_{x' \in S} k(x ,x') \geq 0
\end{equation*}
which may be rewritten as
\begin{align}
    \frac{2}{st} \sum_{x \in S} \sum_{x' \in T \setminus S} k(x, x') + \brac{\frac{2}{st} - \frac{1}{s^2}}\sum_{x \in S} \sum_{x' \in S} k(x, x') &\geq 0 \nonumber\\
    \frac{2}{st} \sum_{x \in S} \sum_{x' \in T \setminus S} k(x, x') &\geq \brac{\frac{1}{s^2} - \frac{2}{st}}\sum_{x \in S} \sum_{x' \in S} k(x, x')\ .\label{booboo}
\end{align}
To guarantee the non-negativity of $v(S)$, we can derive a sufficient condition on $k$ in terms of its lower bound $\gamma$
to ensure that~\eqref{booboo} holds. To achieve this, observe that the RHS of~\eqref{booboo} comprises
$s$ summation terms equal to $k^*$ (i.e., if $x=x'$) and $s(s-1)$ summation terms that are at most $\eta$ (i.e., if $x\neq x'$). Then, 
\begin{equation*}
\brac{\frac{1}{s^2}-\frac{2}{st}}(sk^*+s(s-1)\eta) \geq \brac{\frac{1}{s^2} - \frac{2}{st}}\sum_{x \in S} \sum_{x' \in S} k(x, x')\ .
\end{equation*}
Similarly, observe that the LHS of~\eqref{booboo} comprises $s(t-s)$ summation terms that are at least $\gamma$. Then,
\begin{equation*}
\frac{2}{st} \sum_{x \in S} \sum_{x' \in T \setminus S} k(x, x') \geq \frac{2}{st}s(t-s)\gamma\ .
\end{equation*}
Therefore, if the lower bound of the LHS of~\eqref{booboo} is at least the upper bound of the RHS of~\eqref{booboo}, then~\eqref{booboo} holds:
\begin{align*}
    \frac{2}{st}s(t-s)\gamma &\geq \brac{\frac{1}{s^2}-\frac{2}{st}}(sk^*+s(s-1)\eta) \\
    \frac{2}{t}(t-s)\gamma &\geq \brac{\frac{1}{s}-\frac{2}{t}}(k^*+(s-1)\eta) \\
    \gamma &\geq \frac{1}{t-s}\brac{\frac{t}{2s}-1}(k^*+(s-1)\eta) \\
    \gamma &\geq \frac{t - 2s}{2s(t-s)}(k^*+(s-1)\eta)\ ,
\end{align*}
which completes the proof.
\end{proof}

\subsection{Proof of Proposition \ref{prop:impossible}}\label{app:proofimpossible}
\begin{manualproposition}{\ref{prop:impossible}}
Let $\gamma$ and $\eta$ be set according to~\eqref{esso} and~\eqref{brat}. If $s< (t/2-1)$, then $\gamma >\eta\ .$
\end{manualproposition}
\begin{proof}
We will first rewrite $s = a t$ s.t.~$a \in [0, 1]$ is a fraction of the reference dataset size $t$. Substituting $s = a t$ into $\gamma$~\eqref{esso} and $\eta$~\eqref{brat} yields
\begin{align*}
    \gamma &= \frac{t - 2a t}{2a t(t-a t)}(k^*+(a t-1)\eta) \\
    &= \frac{1 - 2a}{2a t(1-a)}(k^*+(a t-1)\eta)\ ,\\
    \eta &= \frac{t}{(a t+1)(a t(t-2)+t)}k^* \\
    &= \frac{1}{(a t + 1)(a t - 2a + 1)}k^* \ .
\end{align*}

Then,
\begin{align}
    \eta - \gamma &= \eta - \frac{1 - 2a}{2a t(1-a)}(k^*+(a t-1)\eta) \nonumber\\
    &= \brac{1 - \frac{(1-2a)(a t-1)}{2 a t (1-a)}}\eta - \frac{1-2a}{2a t(1-a)}k^* \nonumber\\
    &= \brac{\brac{1 - \frac{(1-2a)(a t-1)}{2 a t (1-a)}}\brac{\frac{1}{(a t + 1)(a t - 2a + 1)}} - \frac{1-2a}{2a t(1-a)}}k^*\ .\label{whacky}
\end{align}

We need the following two lemmas before we can proceed with the main proof:

\begin{lemma}
If $\displaystyle a = \frac{1}{2} - \frac{1}{t}$, then $\gamma = \eta$.
\label{fb}
\end{lemma}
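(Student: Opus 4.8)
The plan is to verify the identity $\gamma = \eta$ by direct substitution of $a = \tfrac{1}{2} - \tfrac{1}{t}$ into the closed-form expressions for $\gamma$ and $\eta$ written in terms of $a$ just above (namely $\eta = \frac{1}{(at+1)(at-2a+1)}k^*$ and $\gamma = \frac{1-2a}{2at(1-a)}(k^*+(at-1)\eta)$), and then simplifying both to the same rational multiple of $k^*$. No new idea is needed beyond careful algebra; conceptually, the role of this lemma is to pin down the exact crossover fraction at which the sufficient conditions of Theorem~\ref{thm:monotone} and Proposition~\ref{prop:nonneg} coincide, which is precisely why the boundary value $a = \tfrac12 - \tfrac1t$ (equivalently $s = t/2 - 1$) is the one to check.

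First I would evaluate the elementary building blocks at this value of $a$: one finds $1-2a = 2/t$, $at = t/2 - 1$, $2at = t-2$, $1-a = (t+2)/(2t)$, and $at-1 = t/2 - 2$. Substituting these into $\eta$, the product $(at+1)(at-2a+1)$ collapses to $(t/2)(t/2 - 1 + 2/t) = t^2/4 - t/2 + 1$, giving $\eta = \frac{4}{t^2 - 2t + 4}k^*$. The same substitutions turn the prefactor of $\gamma$ into $\frac{1-2a}{2at(1-a)} = \frac{2/t}{(t-2)(t+2)/(2t)} = \frac{4}{t^2 - 4}$, where I have used $2at(1-a) = (t-2)(t+2)/(2t)$.

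The final step is to simplify the bracketed term $k^* + (at-1)\eta$. Using $at-1 = t/2-2$ together with the value of $\eta$ above, this becomes $k^*\bigl(1 + \frac{2t-8}{t^2-2t+4}\bigr) = \frac{t^2-4}{t^2-2t+4}k^*$, so the factor $t^2 - 4$ cancels against the denominator of the prefactor and $\gamma$ reduces to $\frac{4}{t^2 - 2t + 4}k^* = \eta$, which closes the argument. The only real obstacle is bookkeeping rather than mathematics: the two crucial observations are that $2at = t-2$ (so $2at(1-a)$ factors through $t^2 - 4 = (t-2)(t+2)$) and that the bracket contributes exactly $t^2 - 4$ in its numerator. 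Making these two occurrences of $t^2 - 4$ line up so they cancel is where an arithmetic slip would be most likely, so I would take care to carry the common denominator $t^2 - 2t + 4$ through both computations explicitly.
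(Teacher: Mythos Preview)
Your proposal is correct and takes essentially the same approach as the paper: direct substitution of $a=\tfrac12-\tfrac1t$ followed by algebraic simplification. The only cosmetic difference is that the paper first forms $\eta-\gamma$ as a single expression in $a$ and shows the multiplicative factor of $k^*$ vanishes, whereas you compute $\eta$ and $\gamma$ separately and verify both equal $\frac{4}{t^2-2t+4}k^*$; the underlying algebra (and the key cancellation through $t^2-4$) is identical.
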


\begin{proof}
It suffices to show that the multiplicative factor of $k^*$ in~\eqref{whacky} is $0$. Substituting $a = {1}/{2} - {1}/{t}$ into the multiplicative factor of $k^*$ in~\eqref{whacky} results in
\begin{align*}
    &\brac{1 - \frac{(1-2a)(a t-1)}{2 a t (1-a)}}\brac{\frac{1}{(a t + 1)(a t - 2a + 1)}} - \frac{1-2a}{2a t(1-a)} \\ &= \brac{1 - \frac{\frac{2}{t}(\frac{t}{2}-2)}{(t-2)(\frac{1}{2}+\frac{1}{t})}}\brac{\frac{1}{\frac{t}{2}(\frac{t}{2}-1+\frac{2}{t})}} - \frac{\frac{2}{t}}{t(1-\frac{2}{t})(\frac{1}{2}+\frac{1}{t})} \\
    &= \brac{\frac{(t-2)(\frac{1}{2}+\frac{1}{t})-\frac{2}{t}(\frac{t}{2}-2)}{(t-2)(\frac{1}{2}+\frac{1}{t})}}\brac{\frac{2t}{\frac{t}{2}(t^2 -2t + 4)}} - \frac{2}{(t-2)(\frac{t}{2}+1)} \\
    &= \brac{\frac{(t-2)(t+2)-2(t-4)}{(t-2)(t+2)}}\brac{\frac{4}{t^2-2t+4}} - \frac{4}{(t-2)(t+2)} \\
    &= \brac{\frac{t^2- 2t +4}{t^2-4}}\brac{\frac{4}{t^2-2t+4}} - \frac{4}{t^2 -4} \\
    &= 0 \ ,
\end{align*}
which proves the lemma.
\end{proof}

\begin{lemma}
$\displaystyle\frac{\emph{d}}{\emph{d}a} (\eta - \gamma)>0$ for $a \in [0, 1]$.
\label{tebow}
\end{lemma}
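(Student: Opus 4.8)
The plan is to reduce the sign of the derivative to the positivity of a single quadratic in $a$. Starting from~\eqref{whacky}, note that $\eta-\gamma = k^* g(a)$, where $g$ is the bracketed rational coefficient of $k^*$; since $k^*>0$, it suffices to prove $g'(a)>0$. The first move I would make is to \emph{simplify $g$ before differentiating}. Putting the two terms in~\eqref{whacky} over the common denominator $2t(1-a)(at+1)\bigl(a(t-2)+1\bigr)$ and factoring the numerator, the factor $a(t-2)+1$ turns out to appear in both numerator and denominator and cancels, collapsing the expression to
\begin{equation*}
\eta - \gamma = \frac{k^*\,(a-a_0)}{(at+1)(1-a)}, \qquad a_0 := \tfrac{t-2}{2t}.
\end{equation*}
This form vanishes exactly at $a = a_0 = \tfrac12 - \tfrac1t$, so it is consistent with Lemma~\ref{fb} and doubles as a check on the algebra.

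With this closed form in hand, differentiation is short. Writing $D(a) := (at+1)(1-a) = -t a^2 + (t-1)a + 1$, which is positive on $[0,1)$, the quotient rule gives $g'(a) = H(a)/D(a)^2$ with $H(a) := D(a) - (a-a_0)D'(a)$. Expanding and using $2t a_0 = t-2$, the linear terms cancel and $H$ reduces to the quadratic
\begin{equation*}
H(a) = t\,a^2 - (t-2)\,a + \frac{t^2 - t + 2}{2t}.
\end{equation*}
Because $D(a)^2>0$, the sign of $g'$ is exactly the sign of $H$.

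The clinching step is that $H$ is strictly positive everywhere: its leading coefficient $t$ is positive, and its discriminant is $(t-2)^2 - 2(t^2-t+2) = -t(t+2) < 0$ for $t>0$. A quadratic with positive leading coefficient and negative discriminant never changes sign, so $H(a)>0$, hence $g'(a)>0$, and multiplying by $k^*>0$ yields $\tfrac{\mathrm{d}}{\mathrm{d}a}(\eta-\gamma)>0$, as claimed. (The statement is understood on $[0,1)$: $a=1$ is a genuine pole of $\eta-\gamma$, whereas the apparent singularity at $a=0$ in the unsimplified expression is removable, as the closed form above makes clear.)

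I expect the only real obstacle to be algebraic: carrying out the cancellation that produces the clean closed form for $\eta-\gamma$. Once that is secured, the differentiation and the discriminant check are each essentially one line. A brute-force alternative---differentiating~\eqref{whacky} directly and proving positivity of the resulting higher-degree numerator polynomial---would in principle work, but is considerably more error-prone, so I would invest the effort up front in the simplification.
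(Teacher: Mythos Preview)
Your argument is correct. Both you and the paper reduce the sign of the derivative to the positivity of the same quadratic (your $H(a)$ is exactly the paper's numerator $2t^2a^2+(4t-2t^2)a+t^2-t+2$ divided by $2t$), so the core step coincides. The route differs in two places. First, you simplify $\eta-\gamma$ to the closed form $k^*(a-a_0)/\bigl((at+1)(1-a)\bigr)$ \emph{before} differentiating, whereas the paper differentiates the unsimplified expression~\eqref{whacky} directly and presents the derivative; your simplification is a nice touch because it recovers Lemma~\ref{fb} as a sanity check and makes the quotient-rule computation short and transparent. Second, for the positivity of the quadratic you use the discriminant test ($-t(t+2)<0$), while the paper finds the vertex at $a=(t-2)/(2t)$ and evaluates the minimum to $\tfrac12 t^2+t>0$; these are equivalent one-line arguments. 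Your parenthetical about $a=1$ being a genuine pole (and $a=0$ a removable singularity) is also apt: the paper's formula carries $(a-1)^2$ in the denominator, so strictly speaking the derivative is only defined on $[0,1)$, which neither proof needs to dwell on for the intended application.
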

\begin{proof}
The derivative of $\eta - \gamma$~\eqref{whacky} w.r.t.~$a$ is given by
\begin{equation}
    \frac{\text{d}}{\text{d}a} (\eta - \gamma) = \frac{2t^2a^2 + (4t-2t^2)a + t^2 -t+2}{2t(a-1)^2(ta + 1)^2}k^*.
\end{equation}
Since $k^*$ is a positive constant due to positive definite $k$ and the denominator $2t(a-1)^2(ta + 1)^2$ is always positive, it suffices to show that the numerator $2t^2a^2 + (4t-2t^2)a + t^2 -t+2$ is strictly positive for ${\text{d}}(\eta - \gamma)/\text{d}a > 0$. 
Since $t$ is always positive, the numerator is a convex quadratic function of $a$ with a minimum when its derivative is $0$.
The derivative of the numerator is
\begin{equation*}
    \frac{d}{da} (2t^2a^2 + (4t-2t^2)a + t^2 -t+2) = 4t^2a + (4t-2t^2)\ .
\end{equation*}
By setting it to $0$,
\begin{align*}
    4t^2a + (4t-2t^2) &= 0 \\
    4t^2a &= 2t^2 - 4t \\
    a &= \frac{t-2}{2t}\ .
\end{align*}
By substituting $a= {t-2}/(2t)$ into the numerator,
\begin{align*} 
    & 2t^2\brac{\frac{t-2}{2t}}^2 + (4t-2t^2)\brac{\frac{t-2}{2t}} + t^2 -t+2 \\
    &= 2t^2\brac{\frac{(t-2)^2}{4t^2}}+(2-t)(t-2)+t^2 -t +2 \\
    &= \frac{1}{2}(t-2)^2-(t-2)^2+t^2-t+2 \\
    &= -\frac{1}{2}t^2 +2t -2 + t^2 - t + 2 \\
    &= \frac{1}{2}t^2 + t
\end{align*}
which is always positive and hence proves the lemma.
\end{proof}

Lemmas~\ref{fb} and~\ref{tebow} together imply that $\eta - \gamma < 0$ when $\displaystyle a < \frac{1}{2} - \frac{1}{t}$ (i.e., when $\displaystyle s = at < \frac{t}{2} - 1$), hence implying $\gamma > \eta$ which completes the proof.
\end{proof}

\section{Theorem and Properties F1 to F4 defining (R5) Fairness Incentive in~\cite{simcollaborative}}
\label{app:rewardincentives}
We have reproduced below the main result from~\cite{simcollaborative} as well as the formal definitions of properties F1 to F4 defining the (R5) fairness incentive in~\cite{simcollaborative} based on our notations.

\begin{theorem}[\bf\cite{simcollaborative}]
 Let $0 \leq \rho \leq 1$, $\phi^* := \max_{i\in N}{\phi_i}$, and $C_i := \{j\in N | \phi_j \leq \phi_i\}$ and $r_i := \brac{{\phi_i}/{\phi^*}}^\rho \times v_c(N)$ for each party $i\in N$. Then, $(r_1,\ldots,r_n)$ satisfy (a) R1 to R3 and R5 if $\rho > 0$, (b) R1 to R5 if $\rho\leq \min_{i \in N}{{\log(v_{c}(\{i\})/v_c(N))}/{\log(\phi_i/\phi^*)}}$, (c) R1 to R6 if $\rho\leq \min_{i \in N}{{\log(v_c(C_i)/v_c(N))}/{\log(\phi_i/\phi^*)}}$, and (d) R7 but not R5 if $\rho = 0$. 
\label{thm:sim}
\end{theorem}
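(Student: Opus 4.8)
The plan is to read every incentive condition directly off the closed form $r_i = \brac{\phi_i/\phi^*}^\rho \times v_c(N)$, using that $v_c$ is non-negative and monotone (Sec.~\ref{sect:kernelselection}) so that $\phi_i \geq 0$ and $\phi^* = \max_{j\in N}\phi_j > 0$; consequently $\phi_i/\phi^* \in [0,1]$ and $\brac{\phi_i/\phi^*}^\rho \in [0,1]$ for all $\rho \geq 0$. The unifying observation is that $r_i = f(\phi_i)$ for the map $f(x) := \brac{x/\phi^*}^\rho v_c(N)$, which is non-decreasing in $x$ and \emph{strictly} increasing exactly when $\rho > 0$.

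First I would clear the conditions that hold for every admissible $\rho$. \textbf{R1} holds since $r_i$ is a product of non-negative factors; \textbf{R2} holds since $\brac{\phi_i/\phi^*}^\rho \leq 1$ gives $r_i \leq v_c(N)$; and \textbf{R3} holds since the party with $\phi_i = \phi^*$ receives $r_i = v_c(N)$. For \textbf{R5}, because $f$ is strictly increasing when $\rho > 0$, I would transfer each of the defining properties F1--F4 from $\phi_i$ to $r_i$: uselessness and symmetry follow from $\phi_i = 0 \Rightarrow r_i = 0$ and $\phi_i = \phi_j \Rightarrow r_i = r_j$, while strict desirability and strict monotonicity follow from $\phi_i > \phi_j \Rightarrow r_i > r_j$ and from an increase in $\phi_i$ strictly raising $r_i$. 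This settles part (a).

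Parts (b) and (c) share one device: a logarithmic rearrangement. For \textbf{R4} I would demand $\brac{\phi_i/\phi^*}^\rho v_c(N) \geq v_c(\{i\})$, take logarithms, and divide by $\log(\phi_i/\phi^*) \leq 0$ --- which reverses the inequality --- to obtain the per-party bound $\rho \leq \log(v_c(\{i\})/v_c(N))/\log(\phi_i/\phi^*)$; taking the minimum over $i$ recovers the hypothesis of (b). For \textbf{R6} I would first reduce stability to the claim $r_i \geq v_c(C_i)$ for all $i$: for any coalition $C$ and the party $i\in C$ attaining $\phi_i = \max_{j\in C}\phi_j$, every $j\in C$ has $\phi_j \leq \phi_i$, so $C \subseteq C_i$ and monotonicity gives $v_c(C) \leq v_c(C_i) \leq r_i$; the remaining bound $r_i \geq v_c(C_i)$ follows from the identical log manipulation with $v_c(\{i\})$ replaced by $v_c(C_i)$, yielding (c). Part (d) is immediate, since $\rho = 0$ forces every $r_i = v_c(N)$, so $\sum_{i\in N} r_i = n\,v_c(N)$ is maximal (R7) whereas equal rewards for unequal contributors break F3 and hence R5.

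The step I expect to be most delicate is the boundary behaviour of the logarithmic bound: the ratio $\log(v_c(\{i\})/v_c(N))/\log(\phi_i/\phi^*)$ is undefined for the maximal party ($\phi_i = \phi^*$, denominator zero) and degenerates when $v_c(\{i\}) = 0$ (numerator $-\infty$). I would handle these by convention, reading the bound as $+\infty$ so it imposes no restriction, and verifying R4 and R6 for such parties directly from monotonicity (e.g.\ $v_c(\{i\}) \leq v_c(C_i) \leq v_c(N) = r_i$ when $\phi_i = \phi^*$). Confirming that F1--F4 truly transfer through $f$ requires their precise statements in \citesupp{app:rewardincentives}, but since $f$ is an order-preserving reparametrisation of the Shapley values, each property collapses to the corresponding Shapley-value axiom, so the argument parallels that of~\citet{simcollaborative}.
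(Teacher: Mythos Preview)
The paper does not prove this theorem: it is reproduced in \citesupp{app:rewardincentives} as a cited result of~\citet{simcollaborative}, with no accompanying argument, so there is no in-paper proof to compare against. Your reconstruction is nonetheless sound and, more to the point, mirrors exactly the structure the paper adopts for its own analogue, Proposition~\ref{prop:cgmincentives}: verify R1--R4 by direct inspection of the closed form, reduce R6 to the per-party inequality $r_i \geq v_c(C_i)$ via monotonicity of $v_c$, and obtain the $\rho$-bounds for R4 and R6 by the same logarithmic rearrangement (implicitly present in the paper's statement of the constraints in Proposition~\ref{prop:cgmincentives}(b)--(c)). For R5 the paper likewise defers to~\citet{simcollaborative}, just as you do in your closing paragraph.

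The one place your sketch is slightly thin is F4: when passing from $v_c$ to $v'_c$, not only $\phi_i$ but also $v_c(N)$ and possibly $\phi^*$ can change, so the sentence ``an increase in $\phi_i$ strictly raises $r_i$'' does not by itself close the case; the extra hypothesis $v'_c(N) > r_i$ in the definition of F4 is what makes the conclusion go through, and a full proof would need to invoke it explicitly.
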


(\textbf{R5})\ \textbf{Fairness.} The reward values $(r_1,\ldots,r_n)$ must satisfy F1 to F4 defined below:

(\textbf{F1})\ \textbf{Uselessness.} If the marginal contribution of party $i$ is zero for any coalition (e.g., when $D_i=\emptyset$),
then party $i$ should be assigned a reward value of $0$:
\begin{equation*}
    \forall i \in N\ \ [\forall C \subseteq N \setminus \{i\}\ \ v_c(C \cup \{i\}) = v_c(C)] \Rightarrow r_i = 0 \ .
\end{equation*}

(\textbf{F2})\ \textbf{Symmetry.} 
If the marginal contributions of parties $i$ and $j$ are the same for any coalition (e.g., when $D_i = D_j$),
then they should be assigned the same reward value:
\begin{equation*}
     \forall i,j \in N\ \text{s.t.}\ i \neq j\ \ [\forall C \subseteq N \setminus \{i, j\}\ \  v_c(C \cup \{i\}) = v_c(C \cup \{j\})] \Rightarrow r_i = r_j \ .
\end{equation*}

(\textbf{F3})\ \textbf{Strict Desirability.} If 
the marginal contribution of party $i$ is more than that of party $j$ for at least a coalition,
but the reverse is not true,
then party $i$ should receive a larger reward value than $j$:
\begin{equation*}
\hspace{-1.7mm}
\begin{array}{l}
\displaystyle\forall i,j \in N\ \text{s.t.}\ i \neq j\ \ [\exists C \subseteq N \setminus \{i,j\}\ \ v_c(C\cup \{i\}) > v_c(C\cup \{j\})]\ \logicaland \vspace{1mm}\\ 
\displaystyle\qquad\qquad\qquad\qquad\ [\forall B \subseteq N \setminus \{i,j\}\ \ v_c(B\cup \{i\}) \geq v_c(B\cup \{j\})] \Rightarrow r_i > r_j \ .
\end{array}
\end{equation*}

(\textbf{F4})\ \textbf{Strict Monotonicity.} Let $v_c$ and $v'_c$ denote any two characteristic functions for data valuation with the same domain $2^N$, and $r_i$ and $r'_i$ be the corresponding reward values assigned to party $i$.
If the marginal contribution of party $i$ is larger under $v'_c$ than $v_c$ (e.g., by including a larger dataset) for at least a coalition, \emph{ceteris paribus}, then party $i$ should be assigned a larger reward value under $v'_c$ than $v_c$:
\begin{equation*}
\hspace{-1.7mm}
\begin{array}{l}
\displaystyle\forall i \in N \ [\exists C \subseteq N \setminus \{i\}\ v'_c(C\cup\{i\}) > v_c(C\cup\{i\})] \logicaland [\forall B \subseteq N \setminus \{i\}\ v'_c(B\cup\{i\}) \geq v_c(B\cup\{i\})] \vspace{1mm}\\
\qquad\quad\ \displaystyle\logicaland\ [\forall A \subseteq N\setminus\{i\}\ v'_c(A) = v_c(A)] \logicaland (v'_c(N) > r_i)\Rightarrow r'_i > r_i \ .
\end{array}    
\end{equation*}

\section{Comparison with \cite{simcollaborative}}\label{app:comparison}
In this section, we consolidate and highlight 3 main contributions of our work that set us apart from that of \cite{simcollaborative}. 

Firstly, our problem setting is novel and completely different from that of \cite{simcollaborative}. They propose a collaborative process in which parties pool their datasets via a trusted central party who then distributes trained Bayesian supervised learning models as rewards commensurate with the parties' contributions. Our CGM framework pools data via a trusted central party, trains a generative model (e.g., a GAN) on this pooled dataset, and then distributes synthetically generated data as rewards. Consequently, our CGM framework offers a number of advantages over that of \cite{simcollaborative}: Distributing trained models as rewards limits each party's flexibility to experiment with different model architectures and hyperparameters. If more competitive model architectures emerge in the future, the parties cannot take advantage of these new architectures without reinitiating the collaboration. Another limitation of distributing trained models as rewards is that it precludes the possibility of performing a different learning task on the same dataset as the model is tied to a specific task. Our CGM framework does not suffer from these limitations and gives flexibility to the parties since there is no assumption on whether all parties share a common downstream learning task, the task of interest to each party (e.g., supervised or unsupervised, classification or regression), or the type of model used by each party. In particular, with the synthetic data reward, each party can now optimize over model architectures and hyperparameters, train new model architectures emerging in the future, and train separate models for different learning tasks. It is clear that these model- and task-agnostic benefits for downstream learning tasks necessitate several different considerations and solution concepts, as discussed below.

Secondly, the data valuation function is different from that of \cite{simcollaborative}. We propose to exploit the biased MMD for our data valuation function and provide new theoretical perspectives on the suitability of using biased MMD for data valuation. In contrast, \cite{simcollaborative} use the mutual information between the parameters of a Bayesian supervised learning model and the dataset being valued. Their valuation function relies on specifying a prior for the model parameters and observing the reduction in entropy upon training the model with that dataset. In addition, we have a significant section on kernel selection for use with the MMD and theoretical results guiding the selection process; this section has no relation to the work of \cite{simcollaborative}.

Thirdly, the incentive conditions from \cite{simcollaborative} do not easily generalize to fit our CGM framework and require appropriate modifications to be used in our framework (Sec.~\ref{lalaland}). In our work, the maximum reward value is not as straightforward to obtain as that in \cite{simcollaborative} in which it simply corresponds to the best possible trained model. In our CGM framework, there are upper bounds on the values of every party's reward and choosing the maximum reward value requires our formulation of the linear program in Sec.~\ref{lalaland}. This allows us to pick the globally optimal combination of maximum reward value $v^*$ and $\rho$ given the constraints: Choosing the largest possible $v^*$ ensures maximum group welfare, while choosing $\rho$ to be close to $1$ makes the assigned reward values closer in proportion to the normalized Shapley values or to purely Shapley fair.

\section{Increasing the Size of $G$ for LP Feasibility}\label{app:increaseG}
When the synthetic dataset $G$ is too small, the value of synthetic data may be insufficient for satisfying incentives R1 to R6. Concretely, it may not hold that the upper bound $v(D_i\cup G)$ on any party $i$'s assigned reward value is at least its lower bound $v_c(C_i)$ (where $C_i := \{j\in N | \phi_j \leq \phi_i\}$), a condition of which is required to satisfy R1 to R6 (Proposition~\ref{prop:cgmincentives}c). So, no reward value is feasible for some party.

In the last sentence of Sec.~\ref{lalaland}, we claim that if this condition is not yet satisfied, we may increase the size of $G$ by simply sampling more synthetic data points from the generative model until $v(D_i\cup G)$ is at least the requisite lower bound for every party $i\in\mathcal{N}$. We provide an intuition for why increasing the size of $G$ resolves this issue based on the reasonable assumptions that when $G$ is increased in size, for all $i$, (a) $\text{MMD}^2_u(\mathcal F, D_i\cup G, T)$ weakly decreases, and (b) $\text{MMD}^2_u(\mathcal F, \bigcup_{j\in C_i}D_j, T)$ weakly increases. The first is reasonable since both $D_i \cup G$ and $T$ now contain more synthetic data points and their associated distributions are expected to be closer, while the second is reasonable since $T$ now contains more synthetic data points and 
its associated distribution may not be closer to that of $\bigcup_{j\in C_i}D_j$.
Under these assumptions, when $G$ increases in size, using Corollary~\ref{coro:valuefunc}, the bounds of $v(D_i\cup G) - v_c(C_i)$ for all $i$ will increase, as desired, since $|D_i\cup G|$ increases and $|\bigcup_{j\in C_i}D_j|$ stays the same. Empirically, we observed that increasing the size of $G$ resolves the issue when it arises.

\section{Weighted Sampling Algorithm}\label{app:alg}
Algorithm~\ref{alg:rewardrealization} gives the pseudocode of our weighted sampling algorithm described in Sec.~\ref{force}.

\begin{algorithm}[h]
  \caption{CGM Weighted Sampling}
  \label{alg:rewardrealization}
\begin{algorithmic}[1]
    \STATE {\bfseries Input:} Distribution $\mathcal G$ produced by generative model, reward vector $(r_1, \ldots,r_n)$, parties' datasets $D_1,\ldots,D_n$, inverse temperature hyperparameter $\beta$
    \STATE Initialize synthetic data reward
    $G_i = \emptyset$ to each party $i\in N$
    \STATE Draw synthetic dataset $G$ from $\mathcal G$
    \FOR{{\bfseries each} party $i\in N$}
    \STATE Create a replicate $G'$ of synthetic dataset $G$
    \STATE $\mu_i \leftarrow v(D_i)$
    \WHILE{$\mu_i < r_i$}
    \STATE $\Delta_{x} \leftarrow v(D_i \cup G_i \cup \{x\})\ - \mu_i$ for all $x \in G'$
    \STATE $\bar \Delta_{x} \leftarrow \textsc{Normalize}(\Delta_{x})$ for all $x \in G'$
    \STATE $p(x) \leftarrow {\exp{\brac{\beta \bar \Delta_x}}}/{\sum_{x'\in G'} \exp{\brac{\beta \bar \Delta_{x'}}}}$ for all $x \in G'$
    \STATE $x \leftarrow \textsc{Sample}(G', p)$
    \STATE $G_i \leftarrow G_i \cup \{x\}$
    \STATE $\mu_i \leftarrow \mu_i + \Delta_x$
    \STATE $G' \leftarrow G' \setminus \{x\}$
    \ENDWHILE
    \ENDFOR
    \RETURN synthetic data rewards $G_1,\ldots,G_n$
\end{algorithmic}
\end{algorithm}

\section{Computational Details}\label{app:computation}
In this section, we analyze the overall time complexity of our end-to-end reward scheme. This includes the calculation of the reward values (Sec.~\ref{lalaland}) and the weighted sampling algorithm for the distribution of synthetic data rewards (Sec.~\ref{force}).

The first source of computational load comes from computing the Shapley value of each party. Computing the Shapley value exactly incurs $\mathcal O(n!)$ time, which is intractable if the number of parties $n$ is large. To circumvent this issue of large $n$, we can follow the recent work of \cite{Ghorbani2019} by considering the Shapley value to be the expected marginal contribution based on a uniform distribution over permutations of parties and estimating it via Monte Carlo sampling which incurs $\mathcal O(mn)$ time where $m$ is the number of permutations sampled. The value of $m$ required to achieve a small estimation error with high probability can be bounded when either the variance or the range of the marginal contributions is known \cite{MalekiTHRR13}. After we have the Shapley values, we compute the reward values using the LP. The LP has only $2$ variables and $2n + 2$ constraints, which can be solved in $\mathcal O(n^{1.5})$ time using interior point methods \cite{vaidya1989speeding}. 

We now discuss the computational load arising from the weighted sampling algorithm (Algorithm \ref{alg:rewardrealization}). Computing $v$ using~\eqref{eq:valuefunc} incurs $\mathcal{O}(s(s+t))$
time. This time complexity is quadratic in $s$, which makes directly using \eqref{eq:valuefunc} for Algorithm \ref{alg:rewardrealization} intractable. Instead of naively recomputing $v$ for every synthetic data point $x$, the time needed to compute $\Delta_{x}$ can be reduced by performing a sequential update of $v$. Define the terms
\begin{gather}
    \sigma_S := \frac{1}{s^2}\inn{\boldsymbol 1_{[x, x'\in S]}}{\mbf K} \\
    \sigma_{ST} := \frac{2}{st}\inn{\boldsymbol 1_{[x \in S, x'\in T]}}{\mbf K}
\end{gather}
such that $v(S) = \sigma_{ST} - \sigma_{S}$. Consider how these quantities change by adding another point $x^* \in T \setminus S$ to $S$. Define these updated quantities
\begin{gather}
    \sigma_S^* := \frac{1}{s^2}\inn{\boldsymbol 1_{[x, x'\in S \cup \{x^*\}]}}{\mbf K} \\
    \sigma_{ST}^* := \frac{2}{st}\inn{\boldsymbol 1_{[x \in S \cup \{x^*\}, x'\in T]}}{\mbf K}
\end{gather}
such that $v(S \cup \{x^*\}) = \sigma^*_{ST} - \sigma^*_{S}$. It can be shown that $\sigma_S^*$ and $\sigma_{ST}^*$ can be computed from $\sigma_S$ and $\sigma_{ST}$ with the following:
\begin{gather}
    \sigma_S^* = \frac{s^2}{(s+1)^2}\sigma_S + \frac{2}{(s+1)^2}\sum_{x \in S}k(x, x^*) + \frac{1}{(s+1)^2}k(x^*, x^*)\\
    \sigma_{ST}^* = \frac{s}{s+1}\sigma_{ST} + \frac{2}{t(s+1)}\sum_{x \in T}k(x, x^*) \ .
\end{gather}
By storing the values of $\sigma_S$ and $\sigma_{ST}$ at every iteration where $S=D_i \cup G_i$ (i.e., $s=|D_i \cup G_i|$),  $\Delta_{x}$ can be recomputed for each synthetic data point $x$ in $\mathcal{O}(s+t)$ time. This time complexity is linear in $s$, a significant improvement over the quadratic time incurred by computing $v(S \cup \{x^*\})$ naively. This update has to be performed for every synthetic data point in $G$ to obtain $\Delta_{x}$. Such an update is performed at most $|G|$ times which is the largest number of synthetic data points that can be distributed to each party as reward. Finally,  computing $k(x, x')$ for isotropic kernels (see \citesupp{app:kernel} for a discussion on isotropic kernels) is linear in the dimensionality of the data $d$. The weighted sampling algorithm thus incurs $\mathcal O(n|G|^2(s+t)d)$ time.

\section{Additional Experimental Details and Results}\label{app:expdetails}
\subsection{Simulated credit ratings (CR)}
The CR dataset is a uniform mixture of $5$ $2$-D Gaussians where the Gaussians have means $(0.435, 0.0259)$, $(0.550, 0.435)$, $(0.420, 0.330)$, $(0.205, 0.619)$, and $(0.300, 0.267)$. All Gaussians have a covariance matrix of $({1}/{200})\mbf I$. Synthetic data are trivially obtained by sampling more data points from the mixture model. Each party $i$ has a dataset $D_i$ of size $1$K, and $|G| = 100$K synthetic data points are generated.

\subsection{Credit card fraud dataset (CC)}
The CC (DbCL 1.0) dataset is reduced to a $4$-D dataset by selecting the first $4$ principal components.
 
The $5$ classes are obtained by separating the dataset into $5$ percentiles for the `Amount' variable (i.e., $0$-$20$, $20$-$40$, $40$-$60$, $60$-$80$, and $80$-$100$). Synthetic data are obtained by using kernel density estimation with a Gaussian kernel and a bandwidth of $0.05$ to fit a probability distribution to the dataset and then sampling from this distribution. Each party $i$ has a dataset $D_i$ of size $5$K, and $|G|=100$K synthetic data points are generated.

\subsection{MNIST}
The MNIST (CC BY-SA 3.0) images are reduced to an $8$-D embedding using a non-linear dimensionality reduction algorithm called UMAP~\cite{mcinnes2018umap} with default settings. Synthetic data are obtained by sampling from a pre-trained MMD GAN~\cite{binkowski2018demystifying} (BSD-$3$-Clause). Samples are automatically labeled  by training a classifier obtained from \url{https://github.com/pytorch/examples/tree/master/mnist} (BSD-$3$-Clause) and taking its predictions on the samples. Each party $i$ has a dataset $D_i$ of size $5$K, and $|G|=100$K synthetic data points are generated.

\subsection{CIFAR-10}
 The CIFAR-10 images are first transformed by passing them through a pre-trained Inception network and extracting the vision-relevant features at the final pool3 layer (as is done when computing the Fr\'echet Inception distance (FID) for evaluating GANs \cite{heusel2017gans}) using code from~\cite{Seitzer2020FID} (Apache License 2.0). These features are then also reduced to an $8$-D embedding using UMAP with default settings as above. As in MNIST, synthetic data are obtained by sampling from a pre-trained MMD GAN~\cite{binkowski2018demystifying}. Samples are automatically labeled  by training the ResNet-18 classifier obtained from \url{https://github.com/kuangliu/pytorch-cifar} (MIT License) and taking its predictions on the samples. Each party $i$ has a dataset $D_i$ of size $5$K, and $|G|=100$K synthetic data points are generated.

\subsection{Optimization details}
For the linear program in Sec.~\ref{sect:linprog}, we set the weight $\epsilon$ as $0.001$. For the binary search algorithm that is used to find the minimum length-scale of the squared exponential kernel (Sec.~\ref{sect:kernelselection}), we set the initial length-scale to $1$ and the number of binary search iterations to $20$.

\subsection{Data splits}
Tables~\ref{jjlim},~\ref{wahcoy},~\ref{wahpiang}, and~\ref{wahlau} detail the proportion of data in each class for each party's dataset  in each data split.
\begin{table}[h]
\centering
\caption{CR/CC dataset with equal disjoint split.}
\begin{tabular}{@{}cllllll@{}}
\toprule
                       &   & \multicolumn{5}{c}{Class label}  \\ \midrule
                       &   & 0    & 1    & 2    & 3    & 4    \\
\multirow{5}{*}{Party} & 1 & 0.96 & 0.01 & 0.01 & 0.01 & 0.01 \\
                       & 2 & 0.01 & 0.96 & 0.01 & 0.01 & 0.01 \\
                       & 3 & 0.01 & 0.01 & 0.96 & 0.01 & 0.01 \\
                       & 4 & 0.01 & 0.01 & 0.01 & 0.96 & 0.01 \\
                       & 5 & 0.01 & 0.01 & 0.01 & 0.01 & 0.96 \\ \cmidrule(l){2-7} 
\end{tabular}
\label{jjlim}
\end{table}

\begin{table}[h]
\centering
\caption{CR/CC dataset with unequal split.}
\begin{tabular}{@{}cllllll@{}}
\toprule
                       &   & \multicolumn{5}{c}{Class label}  \\ \midrule
                       &   & 0    & 1    & 2    & 3    & 4    \\
\multirow{5}{*}{Party} & 1 & 0.20 & 0.20 & 0.20 & 0.20 & 0.20 \\
                       & 2 & 0.20 & 0.20 & 0.20 & 0.20 & 0.20 \\
                       & 3 & 0.58 & 0.39 & 0.01 & 0.01 & 0.01 \\
                       & 4 & 0.01 & 0.20 & 0.58 & 0.20 & 0.01 \\
                       & 5 & 0.01 & 0.01 & 0.01 & 0.39 & 0.58 \\ \cmidrule(l){2-7} 
\end{tabular}
\label{wahcoy}
\end{table}

\begin{table}[H]
\centering
\caption{MNIST/CIFAR-10 dataset with equal disjoint split.}
\begin{tabular}{@{}clllllllllll@{}}
\toprule
                       &   & \multicolumn{10}{c}{Class label}                                              \\ \midrule
                       &   & 0     & 1     & 2     & 3     & 4     & 5     & 6     & 7     & 8     & 9     \\
\multirow{5}{*}{Party} & 1 & 0.480 & 0.480 & 0.005 & 0.005 & 0.005 & 0.005 & 0.005 & 0.005 & 0.005 & 0.005 \\
                       & 2 & 0.005 & 0.005 & 0.480 & 0.480 & 0.005 & 0.005 & 0.005 & 0.005 & 0.005 & 0.005 \\
                       & 3 & 0.005 & 0.005 & 0.005 & 0.005 & 0.480 & 0.480 & 0.005 & 0.005 & 0.005 & 0.005 \\
                       & 4 & 0.005 & 0.005 & 0.005 & 0.005 & 0.005 & 0.005 & 0.480 & 0.480 & 0.005 & 0.005 \\
                       & 5 & 0.005 & 0.005 & 0.005 & 0.005 & 0.005 & 0.005 & 0.005 & 0.005 & 0.480 & 0.480 \\ \cmidrule(l){2-12} 
\end{tabular}
\label{wahpiang}
\end{table}

\begin{table}[h]
\centering
\caption{MNIST/CIFAR-10 dataset with unequal split.}
\begin{tabular}{@{}clllllllllll@{}}
\toprule
                       &   & \multicolumn{10}{c}{Class label}                                              \\ \midrule
                       &   & 0     & 1     & 2     & 3     & 4     & 5     & 6     & 7     & 8     & 9     \\
\multirow{5}{*}{Party} & 1 & 0.100 & 0.100 & 0.100 & 0.100 & 0.100 & 0.100 & 0.100 & 0.100 & 0.100 & 0.100 \\
                       & 2 & 0.100 & 0.100 & 0.100 & 0.100 & 0.100 & 0.100 & 0.100 & 0.100 & 0.100 & 0.100 \\
                       & 3 & 0.290 & 0.290 & 0.195 & 0.195 & 0.005 & 0.005 & 0.005 & 0.005 & 0.005 & 0.005 \\
                       & 4 & 0.005 & 0.005 & 0.100 & 0.100 & 0.290 & 0.290 & 0.100 & 0.100 & 0.005 & 0.005 \\
                       & 5 & 0.005 & 0.005 & 0.005 & 0.005 & 0.005 & 0.005 & 0.195 & 0.195 & 0.290 & 0.290 \\ \cmidrule(l){2-12} 
\end{tabular}
\label{wahlau}
\end{table}

\subsection{Discussion of negative correlations}\label{app:negativecorrelations}
The strongly negative correlation of $|G_i|$ with $\alpha_i$ in the unequal split for MNIST may be explained by the observation that parties with full class distribution (hence high $\alpha_i$) have very small unbiased MMD (i.e., $2$ orders of magnitude lower than parties with restricted class distributions), as compared to the same parties for the other datasets (i.e., $1$ order of magnitude difference). According to Proposition~\ref{prop:valuefunc}, the weighted sampling algorithm thus distributes relatively fewer synthetic data points to these parties, hence giving negative correlation. We hypothesize that MNIST displays this behavior as the $2$-D embeddings (\citesupp{app:addresults}) show that MNIST is cleanly separated into class clusters while the other datasets are not, which means that in an MNIST class, all data points are very similar to each other, hence resulting in a party with full class distribution having a very similar dataset as reference dataset $T$.

The slightly negative correlation between $-KL$ and $\alpha_i$ in the equal disjoint split for MNIST may be explained by the specific interaction between the data distribution of the dimensionality-reduced MNIST dataset, the equal disjoint split of data among the parties, and the KL divergence finite sample estimator based on $k$-th nearest neighbor distance (Perez-Cruz, 2008). As mentioned above, the classes in the MNIST dataset are much more cleanly separated. This suggests that the data points in a class are close to each other in the Euclidean space. For equal disjoint split in Fig. 2a, every party has the majority of its data in only two classes. So, its data points in dataset $S$ become even closer to each other in the Euclidean space and each point in $S$ is very likely to have its $k$-th nearest neighbor (in the same dataset $S$ with a distance denoted by $d_S$) belong to the same class.Each point in $S$ is very likely to have its $k$-th nearest neighbor (in the reference dataset $T$ with a distance denoted by $d_T$) belong to the same class as well. For each point in $S$, $d_S$ and $d_T$ are thus very likely to be approximately equal since for MNIST, the data points in a class are very close to each other. Since the KL divergence estimator compares $S$ vs. $T$ based on the distances of each point in $S$ to its $k$-th nearest neighbor in $S$ vs. in $T$, the estimator is likely to report that $S$ and $T$ are very similar for small values of $k$, even when they are not since $T$ contains data points in all other classes as well. The estimator is thus a poor one under the equal disjoint split and the pathological data distribution of the MNIST dataset, the latter of which also causes the negative correlation  observed in Fig. 2b. Such an issue does not surface with smoother data distributions such as the other 3 datasets in our experiments.
\subsection{Compute resources and software environments}
The experiments are run on two machines:
\begin{enumerate}
    \item Ubuntu $18$.$04$.$5$ LTS, Intel(R) Xeon(R) CPU E$5$-$2683$ v$4$ @ $2$.$10$GHz, NVIDIA GeForce GTX $1080$ Ti (CUDA $11$.$0$), and
    \item Ubuntu $20$.$04$.$2$ LTS, Intel(R) Xeon(R) Silver $4108$ CPU @ $1$.$80$GHz, NVIDIA TITAN V (CUDA $11$.$0$).
\end{enumerate}
The software environments used are Anaconda and Python. Refer to the \verb|environment.yml| file in the repository for the full list of Python packages used.

\subsection{Additional experimental results}
\label{app:addresults}
\textbf{Assessing contributions of parties.} Tables~\ref{jarvis} and~\ref{junior} show the normalized Shapley values $\alpha_1, \ldots, \alpha_5$  of the corresponding parties $1,\ldots,5$ for different datasets and splits. It can be observed that in equal disjoint split (Table~\ref{jarvis}), the $\alpha_i$'s are relatively more evenly spread compared to that of the unequal split where the $\alpha_i$'s of the first two parties are very large (as expected) and that of the remaining parties are relatively lower.

\begin{table}[H]
\centering
\caption{Normalized Shapley value $\alpha_i$ of party $i$ for different datasets with equal disjoint split.}
\begin{tabular}{@{}llllll@{}}
\toprule
         & \multicolumn{5}{c}{Party}             \\ \midrule
         & 1     & 2     & 3     & 4     & 5     \\
CR      & 0.417 & 0.710 & 1.0   & 0.320 & 0.881 \\
CC       & 0.252 & 0.962 & 1.0 & 0.888 & 0.513 \\
MNIST    & 0.530 & 0.569 & 0.946 & 0.853 & 1.0   \\
CIFAR-10 & 0.379 & 1.0   & 0.683 & 0.907 & 0.117 \\ \bottomrule
\end{tabular}
\label{jarvis}
\end{table}

\begin{table}[H]
\centering
\caption{Normalized Shapley value $\alpha_i$ of party $i$ for different datasets with unequal split.}
\begin{tabular}{@{}llllll@{}}
\toprule
         & \multicolumn{5}{c}{Party}             \\ \midrule
         & 1     & 2     & 3     & 4     & 5     \\
CR      & 0.996 & 1.0   & 0.233 & 0.647 & 0.321 \\
CC       & 1.0   & 0.956 & 0.153 & 0.824 & 0.583 \\
MNIST    & 1.0   & 0.998 & 0.256 & 0.517 & 0.368 \\
CIFAR-10 & 1.0   & 0.971 & 0.264 & 0.477 & 0.173 \\ \bottomrule
\end{tabular}
\label{junior}
\end{table}

\textbf{Role of inverse temperature hyperparameter $\beta$.} Tables~\ref{eatprata} and~\ref{hungryhippo} plot the mean and standard errors of the correlation coefficients between the inverse temperature hyperparameter $\beta$ vs.~number of synthetic data points/unbiased MMD that are computed across all $5$ parties for each dataset and split. It can be observed that 
$\beta$ is highly negatively correlated with the number of synthetic data points and
unbiased MMD
(equivalently, highly positively correlated with  negative unbiased MMD), which aligns with our reasoning in Sec.~\ref{force}.
\begin{table}[H]
\centering
\caption{Correlation of inverse temperature hyperparameter $\beta$ with number of synthetic data points and unbiased MMD in equal disjoint split (lower is better).}
\begin{tabular}{@{}lllll@{}}
\toprule
                  & CR               & CC                  & MNIST              & CIFAR-10           \\ \midrule
No. of synthetic data points & -0.851 +/- 0.0233 & -0.906 +/- 6.23e-03 & -0.869 +/- 0.00506 & -0.906 +/- 0.00793 \\
Unbiased MMD      & -0.737 +/- 0.0794 & -0.944 +/- 0.0122   & -0.679 +/- 0.100   & -0.0839 +/- 0.287 
\end{tabular}
\label{eatprata}
\end{table}
\begin{table}[H]
\centering
\caption{Correlation of inverse temperature hyperparameter $\beta$ with number of synthetic data points and unbiased MMD in unequal split (lower is better).}
\begin{tabular}{@{}lllll@{}}
\toprule
                  & CR                & CC                  & MNIST             & CIFAR-10           \\ \midrule
No. of synthetic data points & -0.834 +/- 0.00649 & -0.926 +/- 0.0132   & -0.837 +/- 0.0164 & -0.945 +/- 0.00792 \\
Unbiased MMD      & -0.928 +/- 0.0274  & -0.963 +/- 8.23e-03 & -0.868 +/- 0.0183 & -0.622 +/- 0.171  
\end{tabular}
\label{hungryhippo}
\end{table}

\textbf{Full results.} Tables~\ref{jingle1},~\ref{jingle2},~\ref{jingle3},~\ref{jingle4},~\ref{jingle5},~\ref{jingle6},~\ref{jingle7}, and~\ref{jingle8} show the full experimental results which include $|G_i|$, unbiased MMD, and $v(D_i \cup G_i)$ of each party $i$, and the optimized values of $v^*$, $\rho$ and kernel length-scales.

\begin{table}[H]
\centering
\caption{CR dataset with equal disjoint split ($v^* = 0.536$,  $\rho = 0.0943$, kernel length-scale $= 0.265$).}
\begin{tabular}{@{}llllll@{}}
\toprule
                                          & \multicolumn{5}{c}{Party}                                                                \\ \midrule
                                          & 1               & 2               & 3                & 4               & 5               \\
$|G_i|$ / $\text{MMD}_u$, $\beta = 1$ & 1510 / 4.22e-02 & 1854 / 1.68e-02 & 6123 / -5.87e-05 & 1437 / 5.43e-02 & 2071 / 6.24e-03 \\
$|G_i|$ / $\text{MMD}_u$, $\beta = 2$ & 1239 / 4.22e-02 & 1451 / 1.68e-02 & 4412 / -7.95e-05 & 1253 / 5.43e-02 & 1560 / 6.19e-03 \\
$|G_i|$ / $\text{MMD}_u$, $\beta = 4$ & 1080 / 4.22e-02 & 1189 / 1.68e-02 & 3139 / -1.07e-04 & 1083 / 5.42e-02 & 1189 / 6.18e-03 \\
$|G_i|$ / $\text{MMD}_u$, $\beta = 8$ & 1010 / 4.22e-02 & 1062 / 1.68e-02 & 3314 / -1.01e-04 & 1014 / 5.42e-02 & 1019 / 6.15e-03 \\ \midrule
$r_i = v(D_i \cup G_i)$                         & 0.493        & 0.519        & 0.536        & 0.481        & 0.529        \\ \bottomrule
\end{tabular}
\label{jingle1}
\end{table}

\begin{table}[H]
\centering
\caption{CR dataset with unequal split ($v^* = 0.0898$,  $\rho = 0.00751$, kernel length-scale $= 0.0672$)}
\begin{tabular}{@{}llllll@{}}
\toprule
                                          & \multicolumn{5}{c}{Party}                                                                    \\ \midrule
                                          & 1                 & 2                 & 3               & 4                & 5               \\
$|G_i|$ / $\text{MMD}_u$, $\beta = 1$ & 13484 / -6.86e-05 & 36481 / -3.27e-05 & 2823 / 7.28e-04 & 2344 / 1.22e-05  & 2661 / 5.04e-04 \\
$|G_i|$ / $\text{MMD}_u$, $\beta = 2$ & 6976 / -1.20e-04  & 18314 / -5.54e-05 & 1981 / 6.60e-04 & 1630 / -6.05e-05 & 1783 / 4.24e-04 \\
$|G_i|$ / $\text{MMD}_u$, $\beta = 4$ & 3849 / -1.94e-04  & 10732 / -8.59e-05 & 1556 / 6.11e-04 & 1122 / -1.44e-04 & 1335 / 3.65e-04 \\
$|G_i|$ / $\text{MMD}_u$, $\beta = 8$ & 2162 / -2.94e-04  & 5425 / -1.50e-04  & 1407 / 5.89e-04 & 985 / -1.75e-04  & 1204 / 3.39e-04 \\ \midrule
$r_i = v(D_i \cup G_i)$                         & 0.0898            & 0.0898            & 0.0888          & 0.0895           & 0.089           \\ \bottomrule
\end{tabular}
\label{jingle2}
\end{table}

\begin{table}[H]
\centering
\caption{CC dataset with equal disjoint split ($v^* = 0.00849$,  $\rho  = 0.254$, kernel length-scale $= 0.387$).}
\begin{tabular}{@{}llllll@{}}
\toprule
                                          & \multicolumn{5}{c}{Party}                                                                  \\ \midrule
                                          & 1               & 2                & 3                 & 4               & 5               \\
$|G_i|$ / $\text{MMD}_u$, $\beta = 1$ & 3268 / 2.38e-03 & 10182 / 1.50e-05 & 47824 / -2.13e-05 & 7319 / 1.67e-04 & 4634 / 1.22e-03 \\
$|G_i|$ / $\text{MMD}_u$, $\beta = 2$ & 2810 / 2.38e-03 & 7799 / 2.83e-06  & 36080 / -2.66e-05 & 5278 / 1.52e-04 & 3516 / 1.21e-03 \\
$|G_i|$ / $\text{MMD}_u$, $\beta = 4$ & 2315 / 2.37e-03 & 5459 / -1.45e-05 & 24222 / -3.65e-05 & 3946 / 1.37e-04 & 2580 / 1.19e-03 \\
$|G_i|$ / $\text{MMD}_u$, $\beta = 8$ & 2086 / 2.36e-03 & 4196 / -2.75e-05 & 17303 / -4.70e-05 & 3137 / 1.26e-04 & 2101 / 1.18e-03 \\ \midrule
$r_i = v(D_i \cup G_i)$                         & 0.00598         & 0.00841          & 0.00849           & 0.00824         & 0.00716         \\ \bottomrule
\end{tabular}
\label{jingle3}
\end{table}

\begin{table}[H]
\centering
\caption{CC dataset with unequal split ($v^* = 0.00365$,  $\rho  = 0.186$, kernel length-scale $= 0.274$).}
\begin{tabular}{@{}llllll@{}}
\toprule
                                          & \multicolumn{5}{c}{Party}                                                                   \\ \midrule
                                          & 1                 & 2                 & 3               & 4               & 5               \\
$|G_i|$ / $\text{MMD}_u$, $\beta = 1$ & 66545 / -1.78e-05 & 13855 / -2.67e-05 & 3001 / 9.45e-04 & 7067 / 4.26e-05 & 5295 / 2.47e-04 \\
$|G_i|$ / $\text{MMD}_u$, $\beta = 2$ & 53627 / -2.09e-05 & 10774 / -3.71e-05 & 2749 / 9.41e-04 & 5387 / 2.93e-05 & 3895 / 2.32e-04 \\
$|G_i|$ / $\text{MMD}_u$, $\beta = 4$ & 38954 / -2.65e-05 & 8183 / -4.95e-05  & 2455 / 9.36e-04 & 3999 / 1.45e-05 & 2804 / 2.16e-04 \\
$|G_i|$ / $\text{MMD}_u$, $\beta = 8$ & 31119 / -3.15e-05 & 6365 / -6.16e-05  & 2102 / 9.29e-04 & 3186 / 3.46e-06 & 2128 / 2.04e-04 \\ \midrule
$r_i = v(D_i \cup G_i)$                         & 0.00365           & 0.00362           & 0.00258         & 0.00352         & 0.0033          \\ \bottomrule
\end{tabular}
\label{jingle4}
\end{table}

\begin{table}[H]
\centering
\caption{MNIST dataset with equal disjoint split ($v^* = 0.276$,  $\rho = 0.520$, kernel length-scale $= 3.36$).}
\begin{tabular}{@{}llllll@{}}
\toprule
                                          & \multicolumn{5}{c}{Party}                                                                \\ \midrule
                                          & 1               & 2               & 3               & 4               & 5                \\
$|G_i|$ / $\text{MMD}_u$, $\beta = 1$ & 3511 / 7.79e-02 & 3346 / 7.04e-02 & 8959 / 8.12e-03 & 6663 / 2.22e-02 & 27698 / 2.92e-04 \\
$|G_i|$ / $\text{MMD}_u$, $\beta = 2$ & 3115 / 7.79e-02 & 2797 / 7.04e-02 & 7203 / 8.11e-03 & 5499 / 2.22e-02 & 19614 / 2.84e-04 \\
$|G_i|$ / $\text{MMD}_u$, $\beta = 4$ & 2729 / 7.79e-02 & 2406 / 7.04e-02 & 5805 / 8.09e-03 & 4773 / 2.22e-02 & 14871 / 2.83e-04 \\
$|G_i|$ / $\text{MMD}_u$, $\beta = 8$ & 2600 / 7.79e-02 & 2251 / 7.04e-02 & 5281 / 8.09e-03 & 4397 / 2.22e-02 & 12803 / 2.83e-04 \\ \midrule
$r_i = v(D_i \cup G_i)$                         & 0.199           & 0.206           & 0.269           & 0.255           & 0.276            \\ \bottomrule
\end{tabular}
\label{jingle5}
\end{table}

\begin{table}[H]
\centering
\caption{MNIST dataset with unequal split ($v^* = 0.0764$,  $\rho = 0.00988$, kernel length-scale $= 1.53$)}
\begin{tabular}{@{}llllll@{}}
\toprule
                                          & \multicolumn{5}{c}{Party}                                                                    \\ \midrule
                                          & 1                & 2                & 3                & 4                & 5                \\
$|G_i|$ / $\text{MMD}_u$, $\beta = 1$ & 4624 / -8.70e-05 & 3723 / -9.44e-05 & 13384 / 9.81e-04 & 13852 / 4.57e-04 & 12985 / 7.09e-04 \\
$|G_i|$ / $\text{MMD}_u$, $\beta = 2$ & 2492 / -1.14e-04 & 2512 / -1.11e-04 & 9301 / 9.65e-04  & 9569 / 4.39e-04  & 9509 / 6.96e-04  \\
$|G_i|$ / $\text{MMD}_u$, $\beta = 4$ & 1661 / -1.28e-04 & 1542 / -1.30e-04 & 6784 / 9.55e-04  & 7330 / 4.32e-04  & 7030 / 6.86e-04  \\
$|G_i|$ / $\text{MMD}_u$, $\beta = 8$ & 1365 / -1.35e-04 & 1063 / -1.41e-04 & 6034 / 9.50e-04  & 6708 / 4.28e-04  & 6332 / 6.82e-04  \\ \midrule
$r_i = v(D_i \cup G_i)$                         & 0.0764           & 0.0764           & 0.0754           & 0.0759           & 0.0757           \\ \bottomrule
\end{tabular}
\label{jingle6}
\end{table}

\begin{table}[H]
\centering
\caption{CIFAR-10 dataset with equal disjoint split ($v^* = 0.500$,  $\rho  = 0.894$, kernel length-scale $= 2.88$).}
\begin{tabular}{@{}llllll@{}}
\toprule
                                          & \multicolumn{5}{c}{Party}                                                              \\ \midrule
                                          & 1              & 2                & 3               & 4               & 5              \\
$|G_i|$ / $\text{MMD}_u$, $\beta = 1$ & 539 / 2.93e-01 & 15490 / 2.91e-03 & 1348 / 1.48e-01 & 3425 / 4.48e-02 & 339 / 4.30e-01 \\
$|G_i|$ / $\text{MMD}_u$, $\beta = 2$ & 484 / 2.93e-01 & 13367 / 2.92e-03 & 1197 / 1.47e-01 & 2824 / 4.48e-02 & 319 / 4.29e-01 \\
$|G_i|$ / $\text{MMD}_u$, $\beta = 4$ & 447 / 2.93e-01 & 11754 / 2.91e-03 & 1051 / 1.48e-01 & 2335 / 4.48e-02 & 279 / 4.29e-01 \\
$|G_i|$ / $\text{MMD}_u$, $\beta = 8$ & 421 / 2.93e-01 & 10647 / 2.90e-03 & 945 / 1.47e-01  & 2105 / 4.48e-02 & 271 / 4.30e-01 \\ \midrule
$r_i = v(D_i \cup G_i)$                         & 0.210          & 0.500            & 0.356           & 0.458           & 0.0735         \\ \bottomrule
\end{tabular}
\label{jingle7}
\end{table}

\begin{table}[H]
\centering
\caption{CIFAR-10 dataset with unequal split ($v^* = 0.133$,  $\rho  = 0.835$, kernel length-scale $= 1.24$).}
\begin{tabular}{@{}llllll@{}}
\toprule
                                          & \multicolumn{5}{c}{Party}                                                             \\ \midrule
                                          & 1               & 2               & 3              & 4               & 5              \\
$|G_i|$ / $\text{MMD}_u$, $\beta = 1$ & 7638 / 2.23e-03 & 5411 / 5.48e-03 & 745 / 9.12e-02 & 1676 / 6.34e-02 & 534 / 1.04e-01 \\
$|G_i|$ / $\text{MMD}_u$, $\beta = 2$ & 6919 / 2.23e-03 & 4901 / 5.48e-03 & 694 / 9.12e-02 & 1555 / 6.33e-02 & 508 / 1.04e-01 \\
$|G_i|$ / $\text{MMD}_u$, $\beta = 4$ & 6229 / 2.23e-03 & 4407 / 5.47e-03 & 647 / 9.12e-02 & 1431 / 6.33e-02 & 479 / 1.04e-01 \\
$|G_i|$ / $\text{MMD}_u$, $\beta = 8$ & 5729 / 2.22e-03 & 3994 / 5.47e-03 & 606 / 9.12e-02 & 1352 / 6.33e-02 & 445 / 1.04e-01 \\ \midrule
$r_i = v(D_i \cup G_i)$                         & 0.133           & 0.129           & 0.0436         & 0.0715          & 0.0307         \\ \bottomrule
\end{tabular}
\label{jingle8}
\end{table}

\textbf{Visualizations of synthetic data rewards.} The plots below visualize the synthetic data points $G_1,\ldots,G_5$ (i.e., in $2$-D embedding using UMAP~\cite{mcinnes2018umap}for CC, MNIST, and CIFAR-10) as rewards to the corresponding parties $1,\ldots,5$ over varying inverse temperature hyperparameters $\beta=1,2,4,8$ for different datasets and splits. 
In these plots, the grey dots denote the entire synthetic dataset $G$, the blue dots denote party $i$'s original dataset $D_i$, and the red dots denote the synthetic data points $G_i$ as reward to  party $i$. The more opaque the red dots, the earlier in time the weighted sampling algorithm samples these synthetic data points.
These plots complement our observations reported in the main paper: As the inverse temperature hyperparameter $\beta$ increases, the algorithm samples fewer synthetic data points $G_i$ but 
they are more dissimilar from a party's original dataset $D_i$. This is consistent with our reasoning in Sec.~{\ref{force}} that $\beta$ controls the trade-off between $|G_i|$ vs.~negative unbiased MMD. In addition, the plots show that more synthetic data points $G_i$ tend to be distributed to a party with a higher $\alpha_i$ as reward, which was empirically verified by the generally positive correlations of $|G_i|$ with $\alpha_i$ in Fig.~\ref{fig:corrs}. 
\begin{figure}[H]
    \centering
    \includegraphics[width=0.8\textwidth]{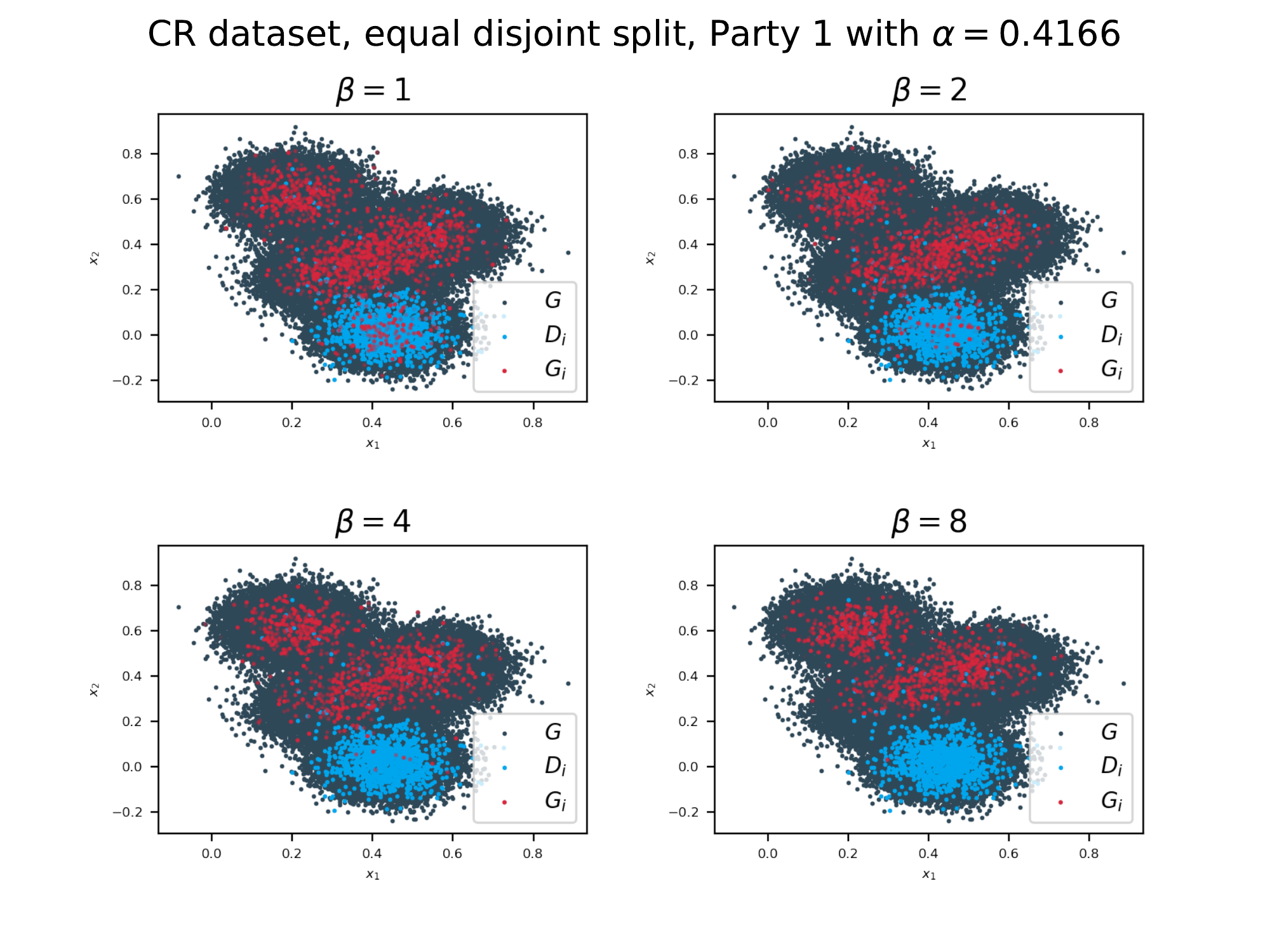}
\end{figure}

\begin{figure}[H]
    \centering
    \includegraphics[width=0.8\textwidth]{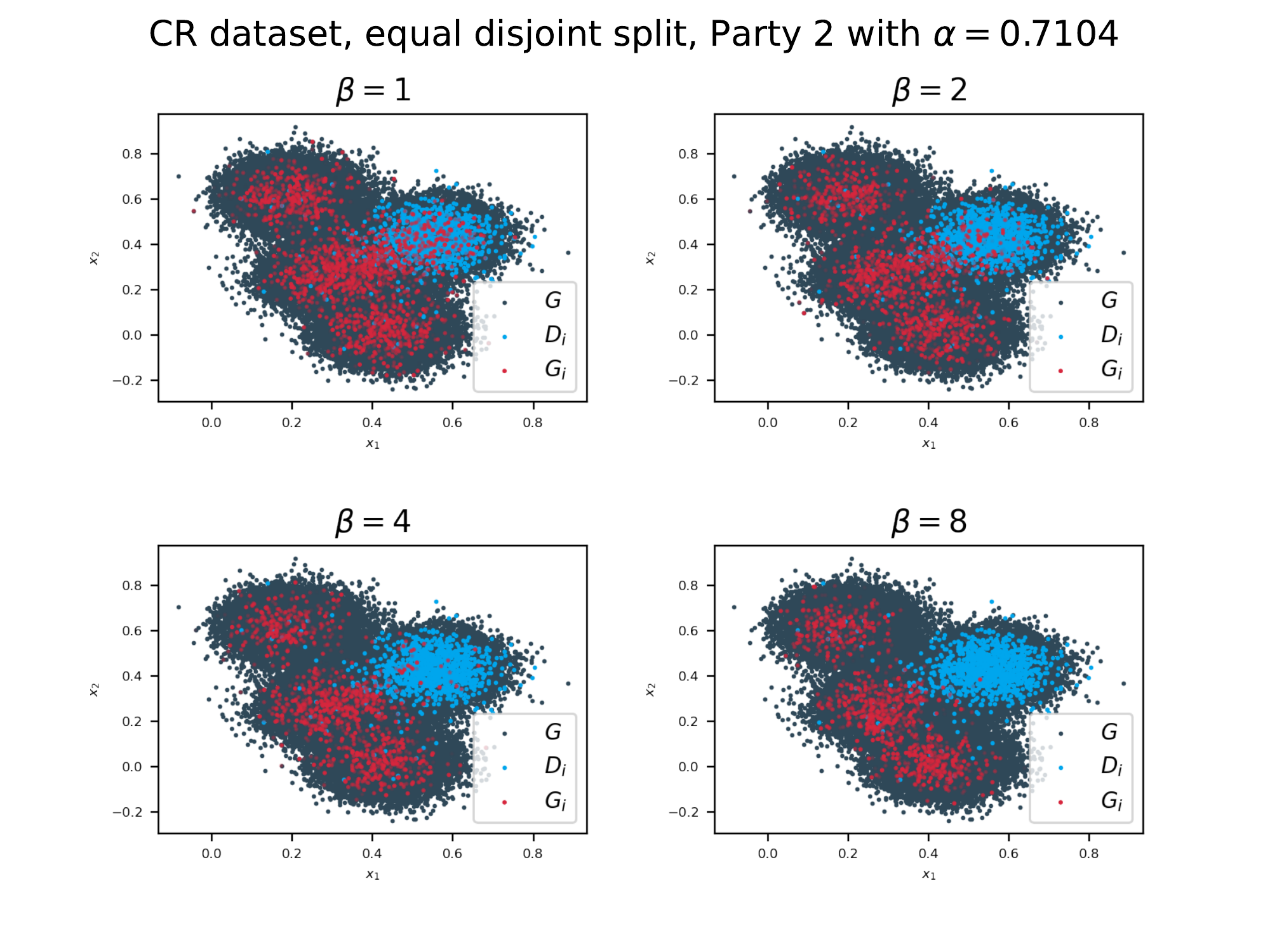}
\end{figure}
\begin{figure}[H]
    \centering
    \includegraphics[width=0.8\textwidth]{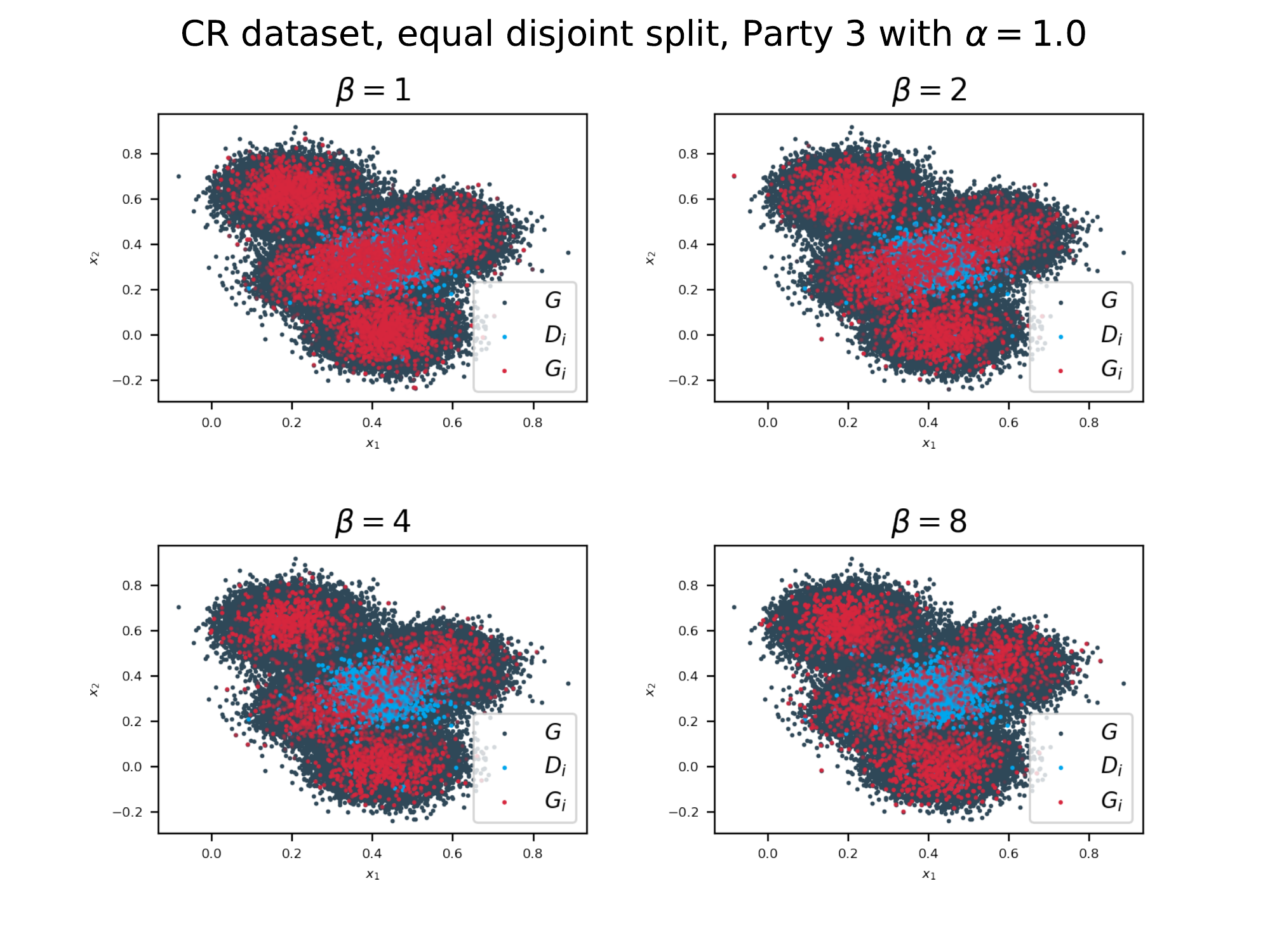}
\end{figure}
\begin{figure}[H]
    \centering
    \includegraphics[width=0.8\textwidth]{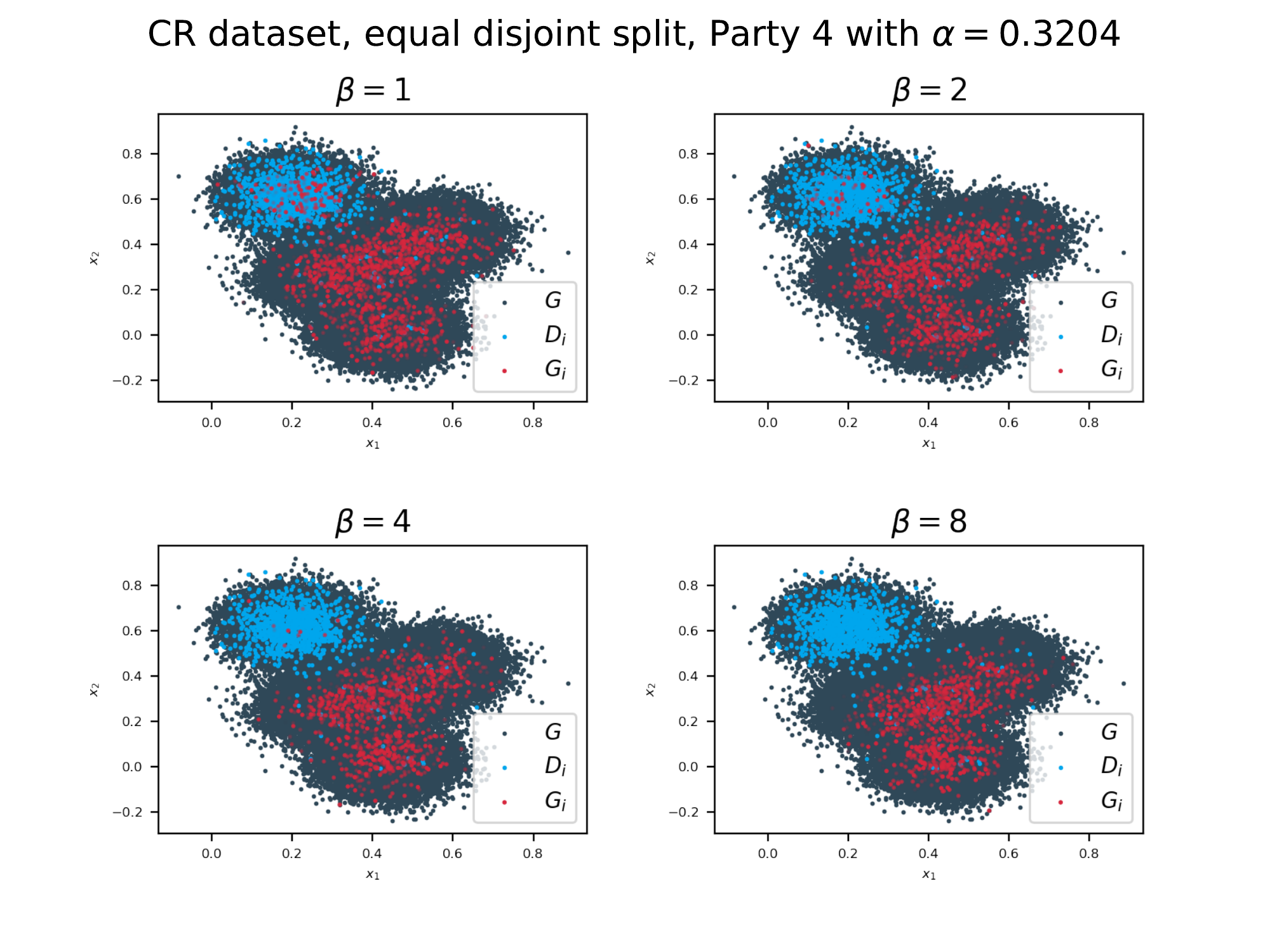}
\end{figure}
\begin{figure}[H]
    \centering
    \includegraphics[width=0.8\textwidth]{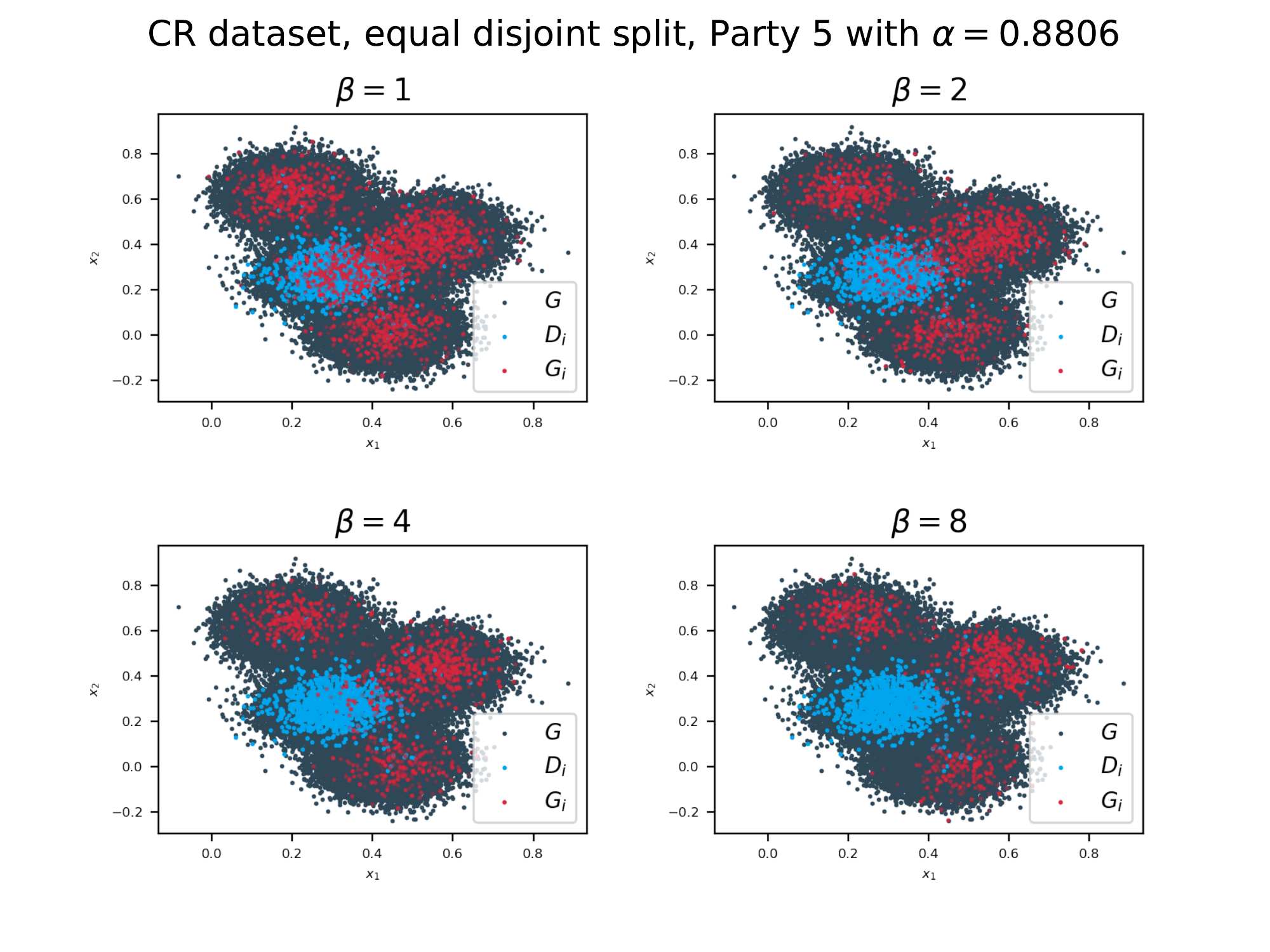}
\end{figure}
\begin{figure}[H]
    \centering
    \includegraphics[width=0.8\textwidth]{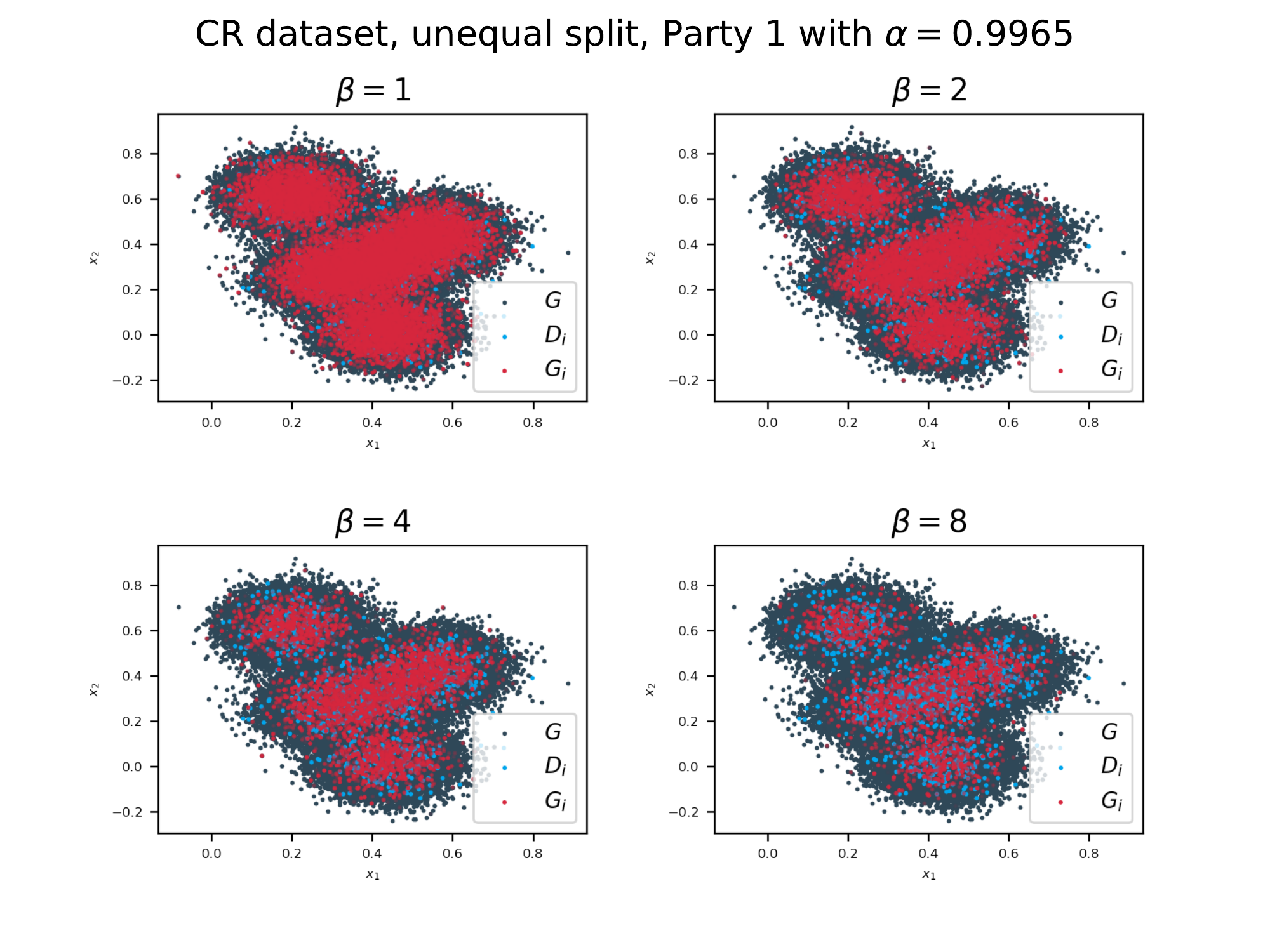}
\end{figure}
\begin{figure}[H]
    \centering
    \includegraphics[width=0.8\textwidth]{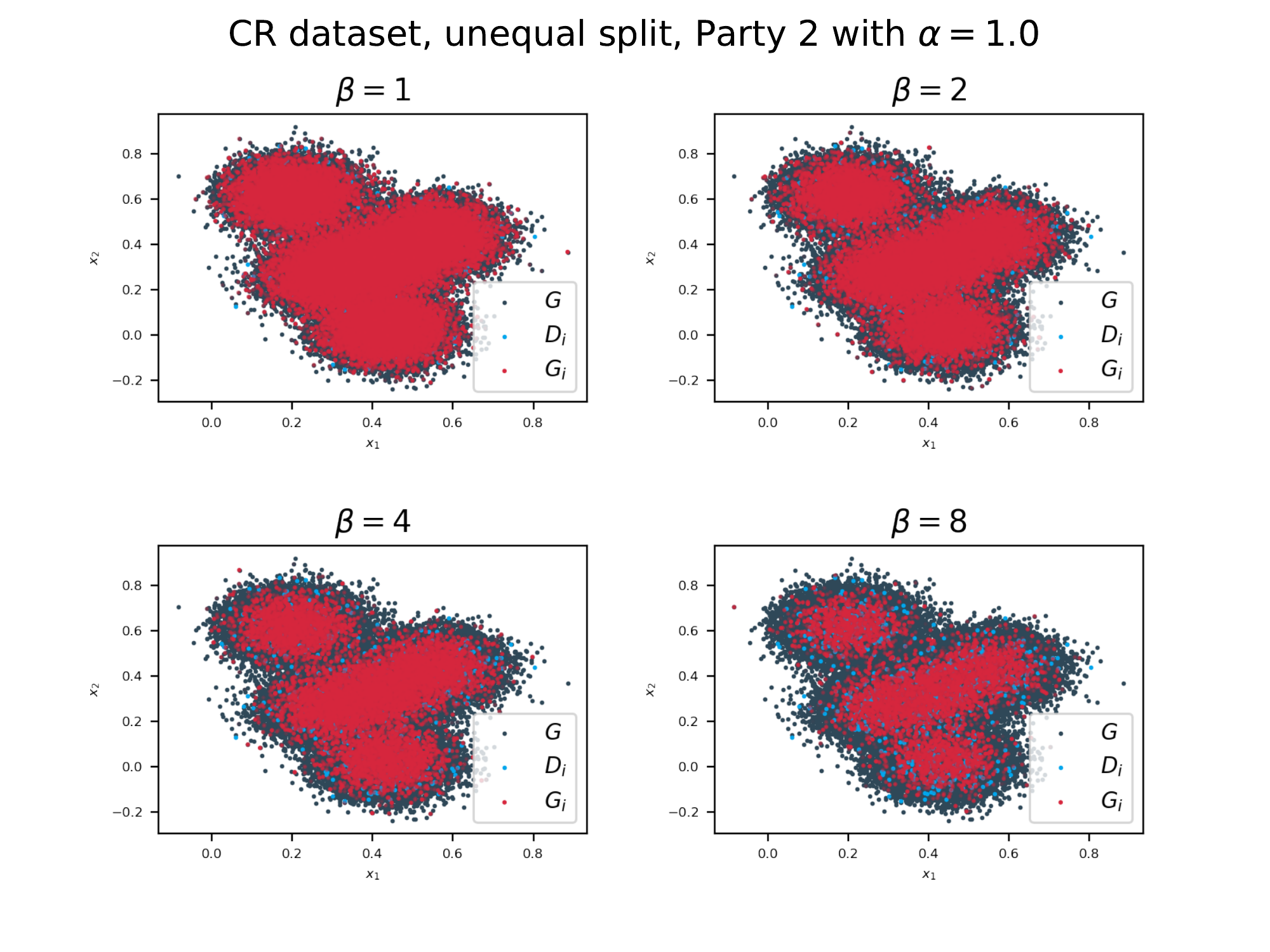}
\end{figure}
\begin{figure}[H]
    \centering
    \includegraphics[width=0.8\textwidth]{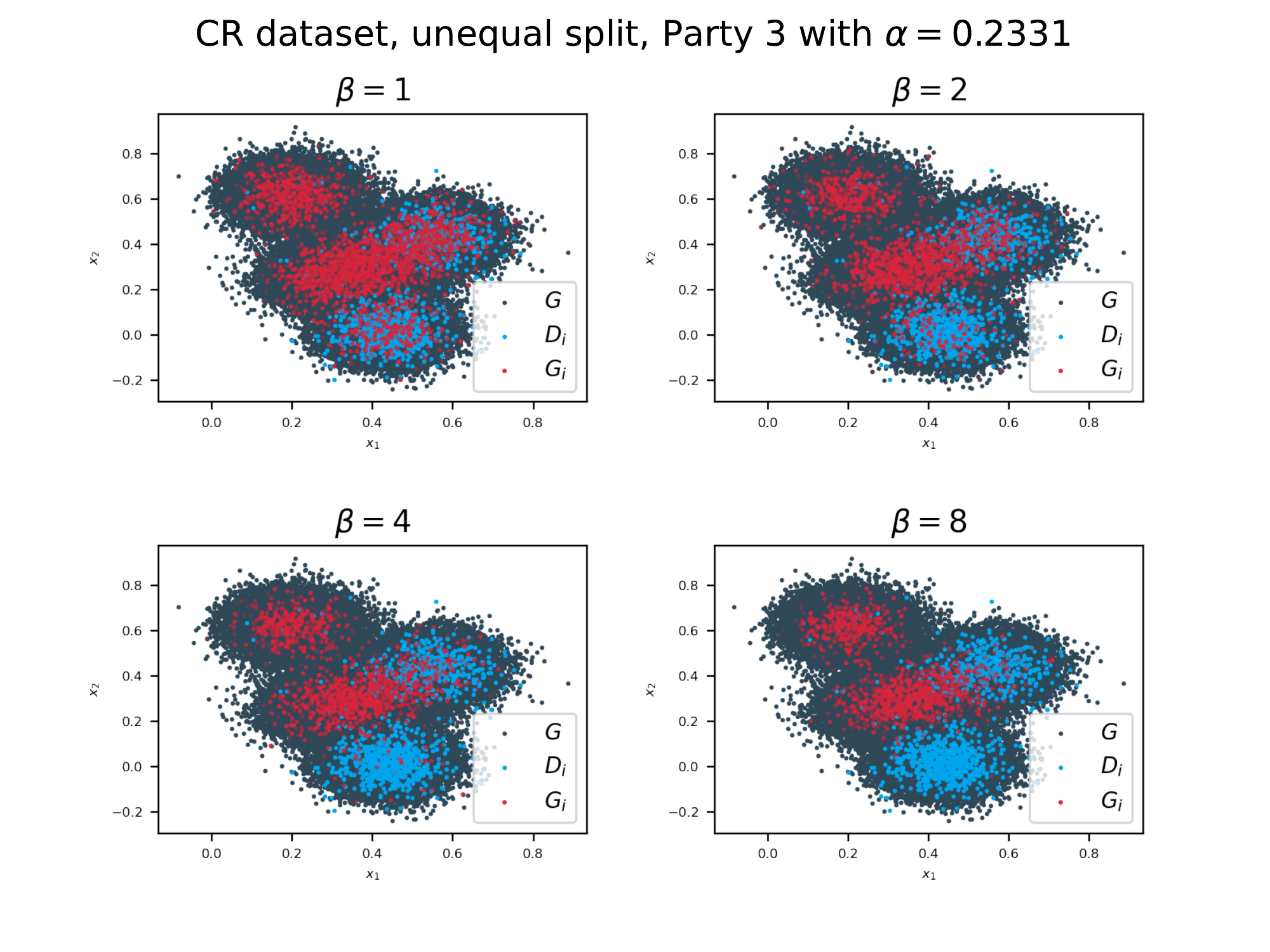}
\end{figure}
\begin{figure}[H]
    \centering
    \includegraphics[width=0.8\textwidth]{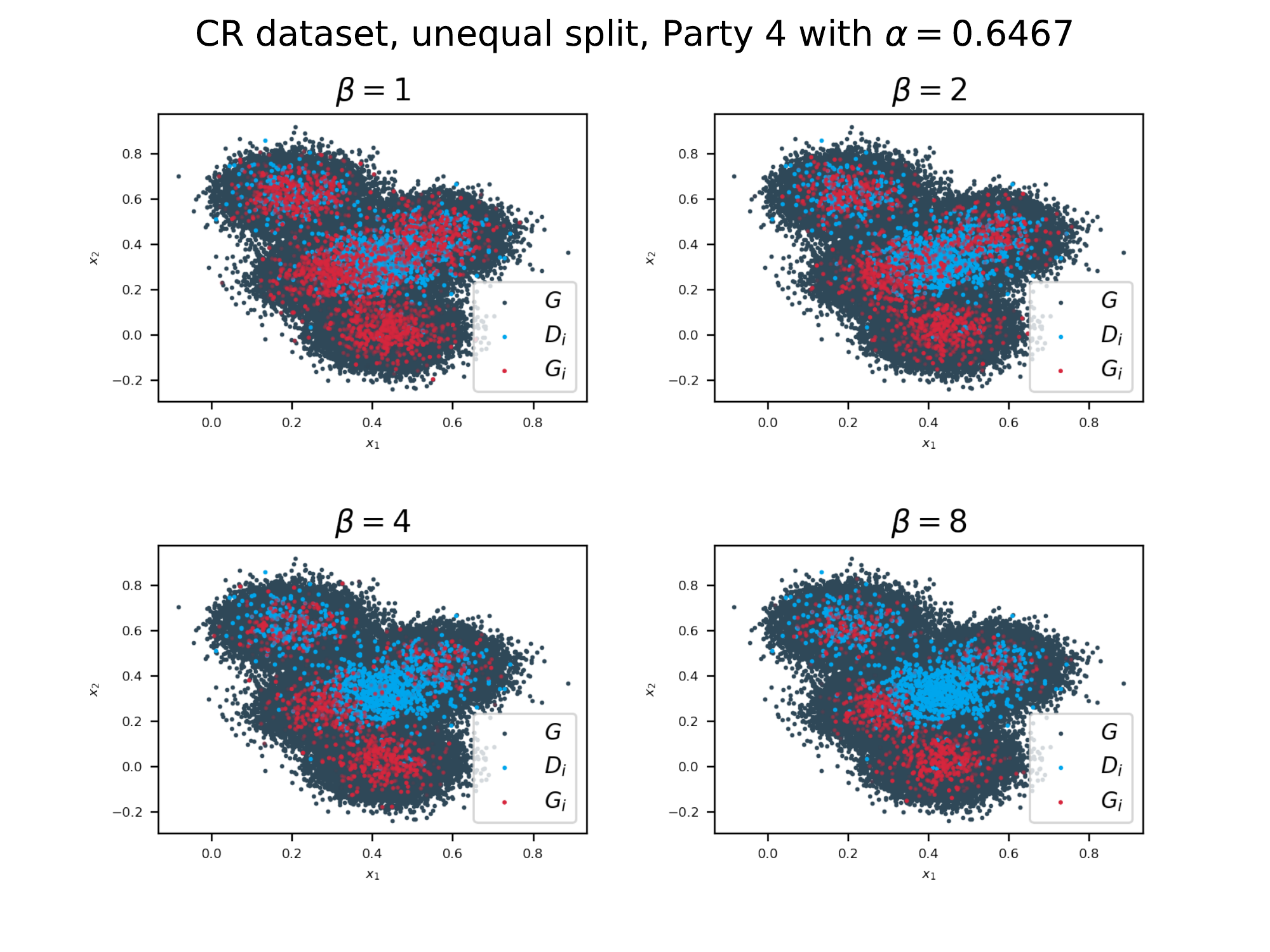}
\end{figure}
\begin{figure}[H]
    \centering
    \includegraphics[width=0.8\textwidth]{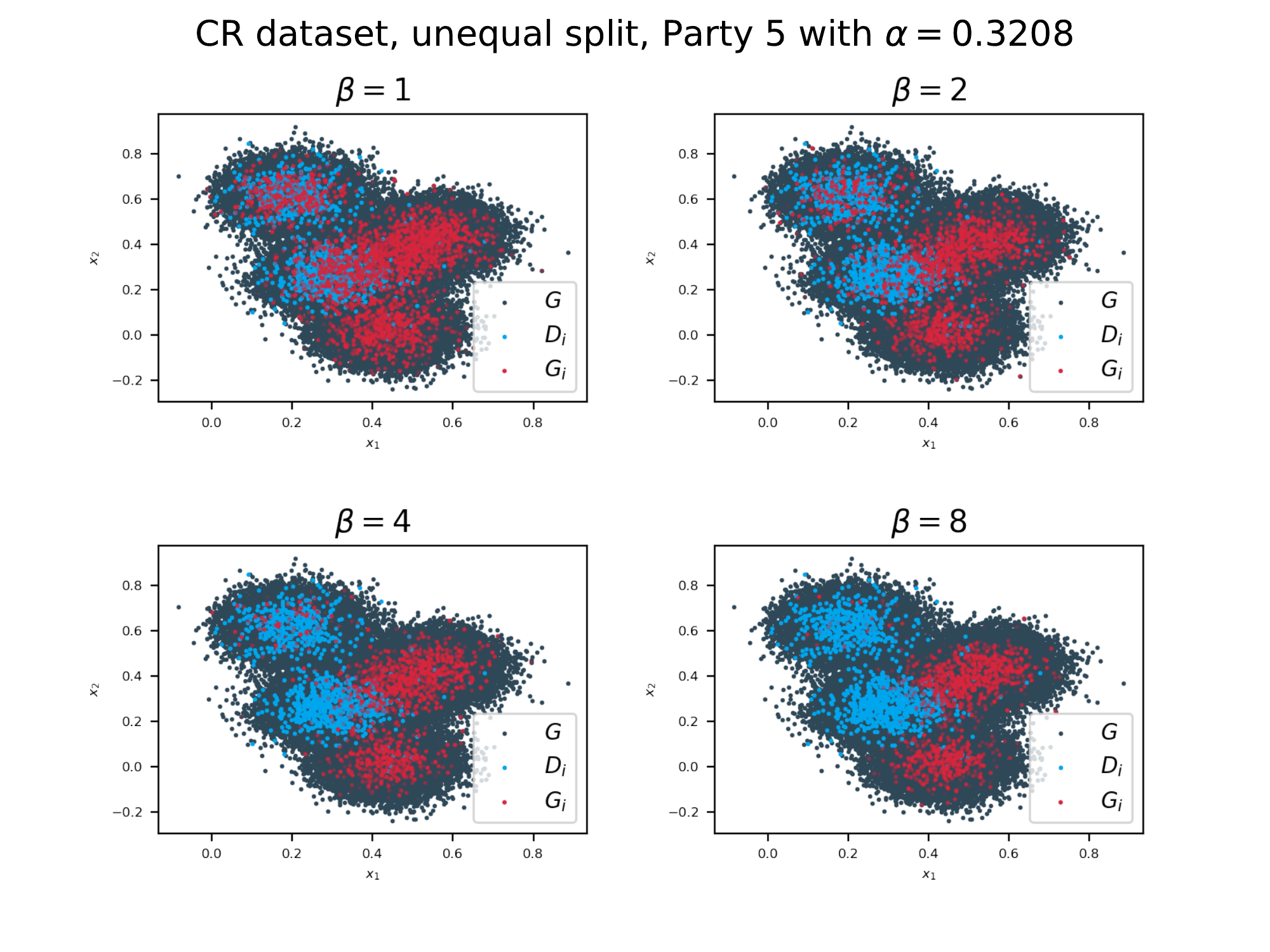}
\end{figure}
\begin{figure}[H]
    \centering
    \includegraphics[width=0.8\textwidth]{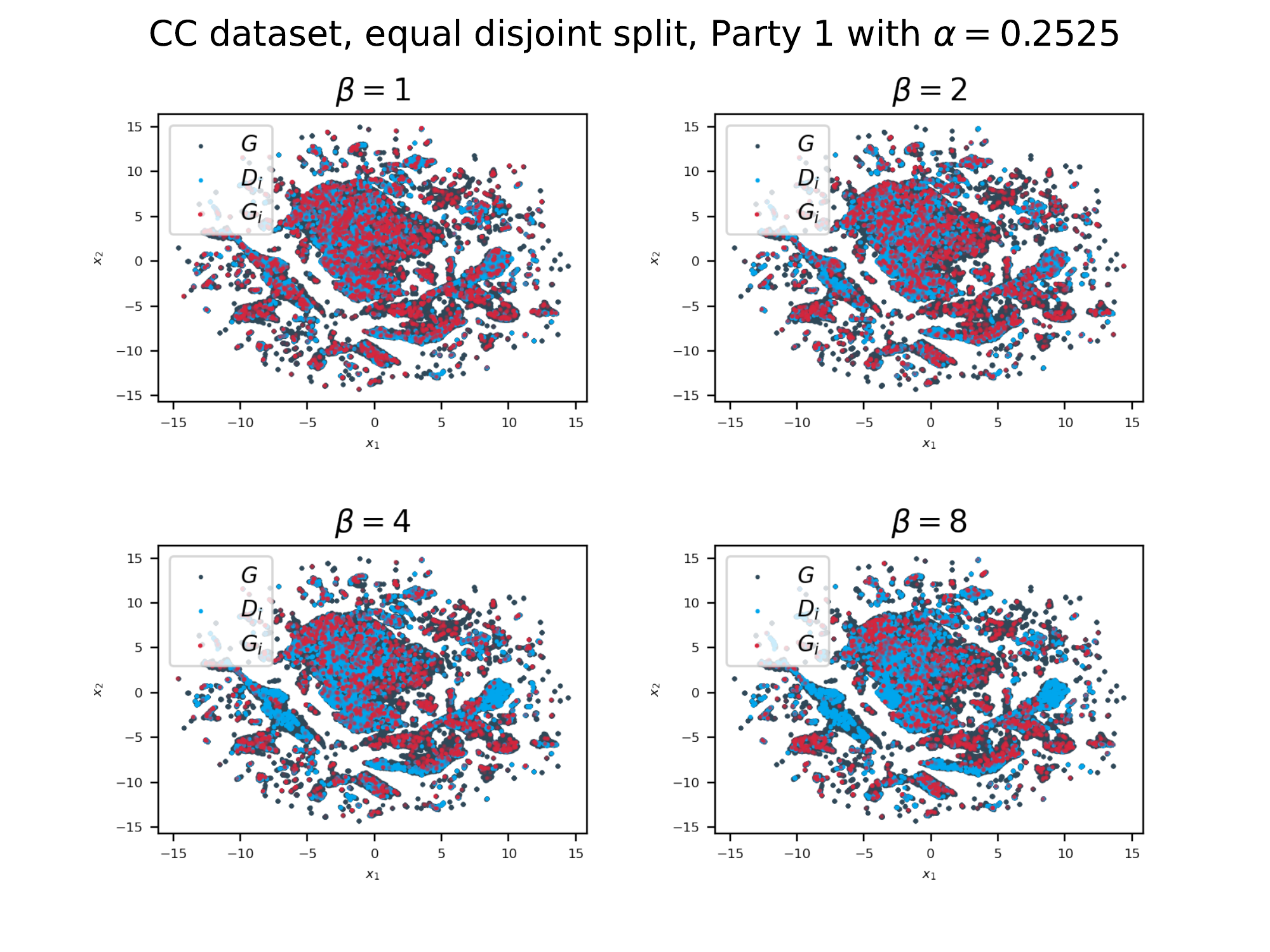}
\end{figure}
\begin{figure}[H]
    \centering
    \includegraphics[width=0.8\textwidth]{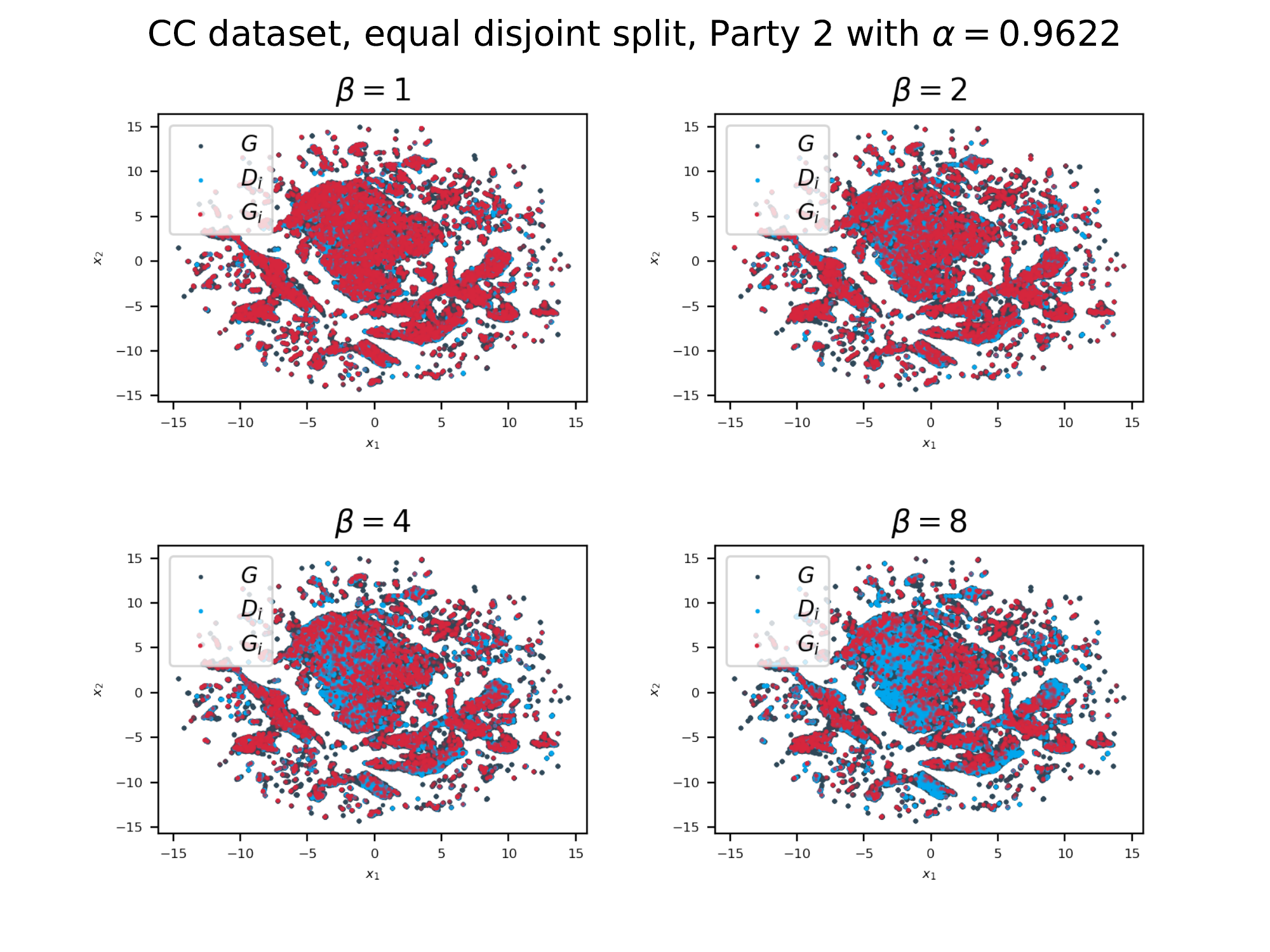}
\end{figure}
\begin{figure}[H]
    \centering
    \includegraphics[width=0.8\textwidth]{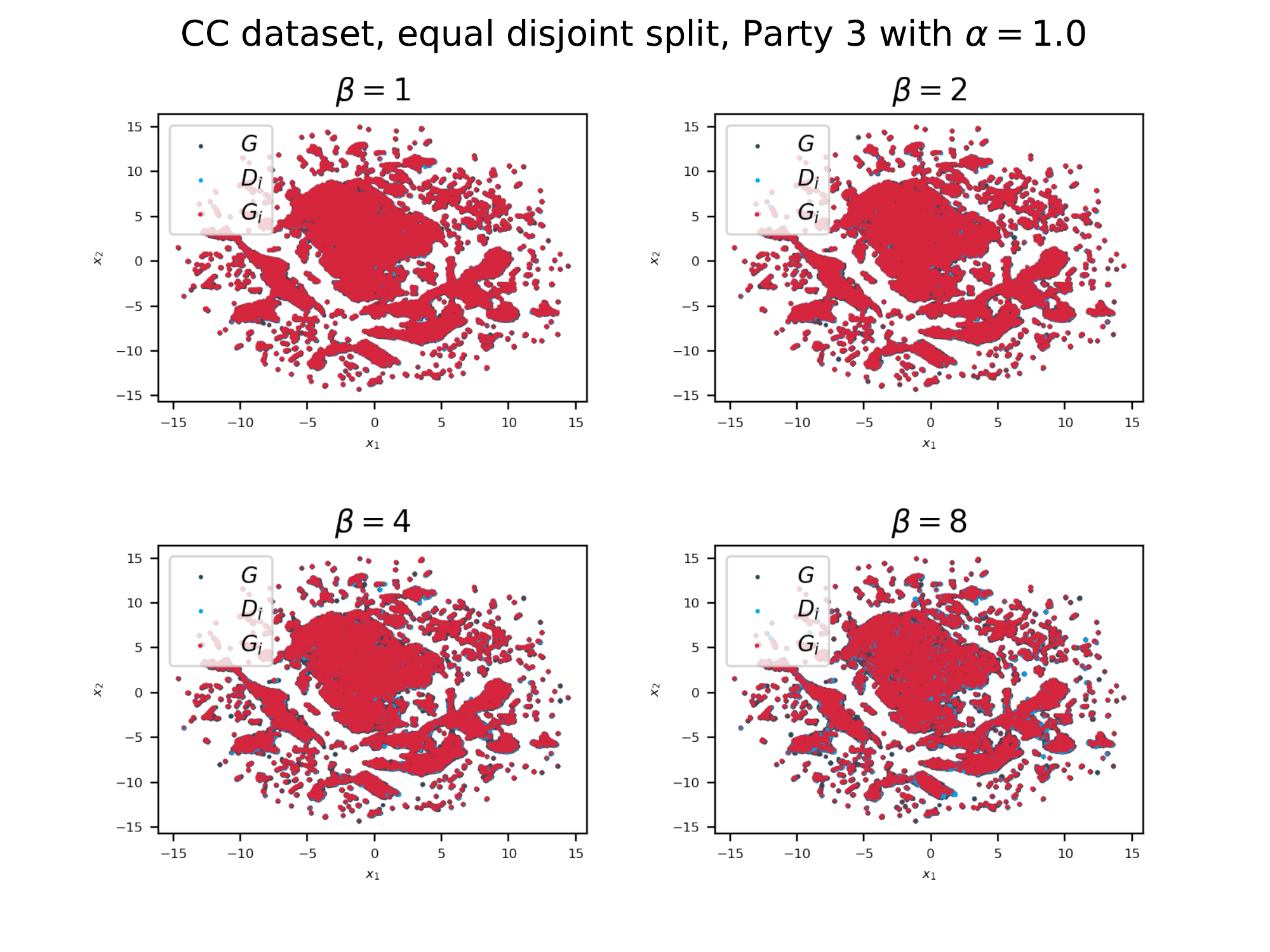}
\end{figure}
\begin{figure}[H]
    \centering
    \includegraphics[width=0.8\textwidth]{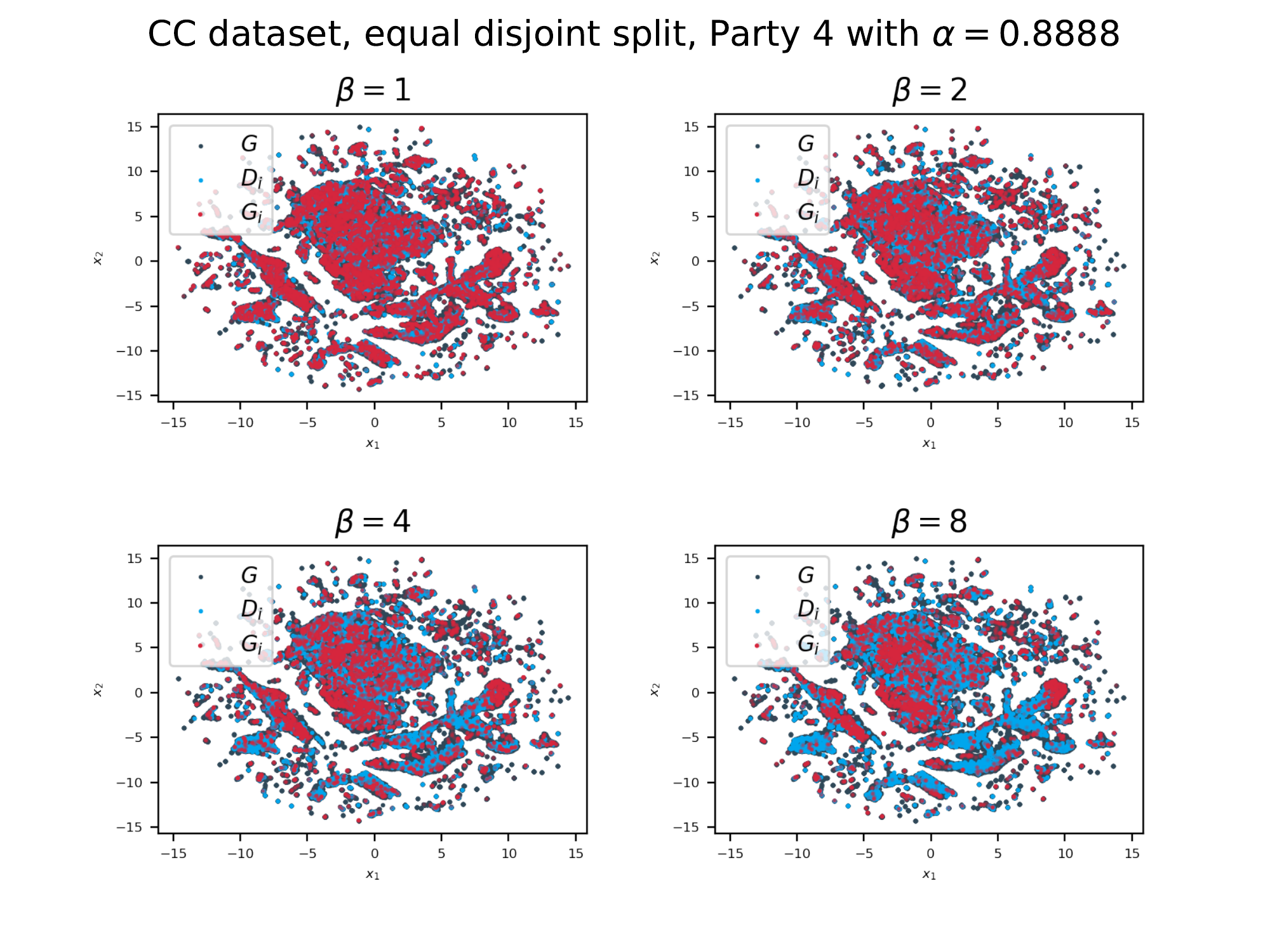}
\end{figure}
\begin{figure}[H]
    \centering
    \includegraphics[width=0.8\textwidth]{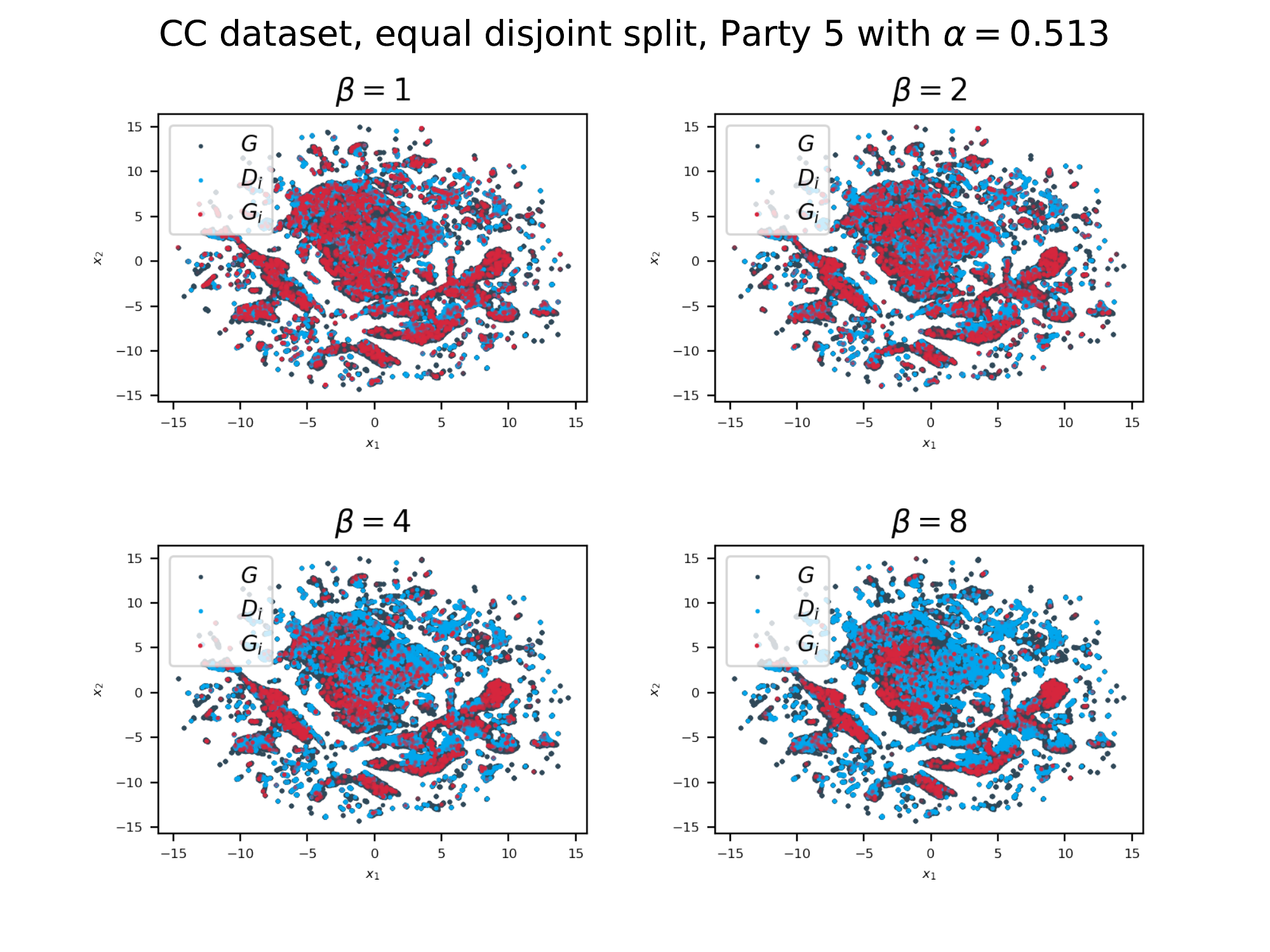}
\end{figure}
\begin{figure}[H]
    \centering
    \includegraphics[width=0.8\textwidth]{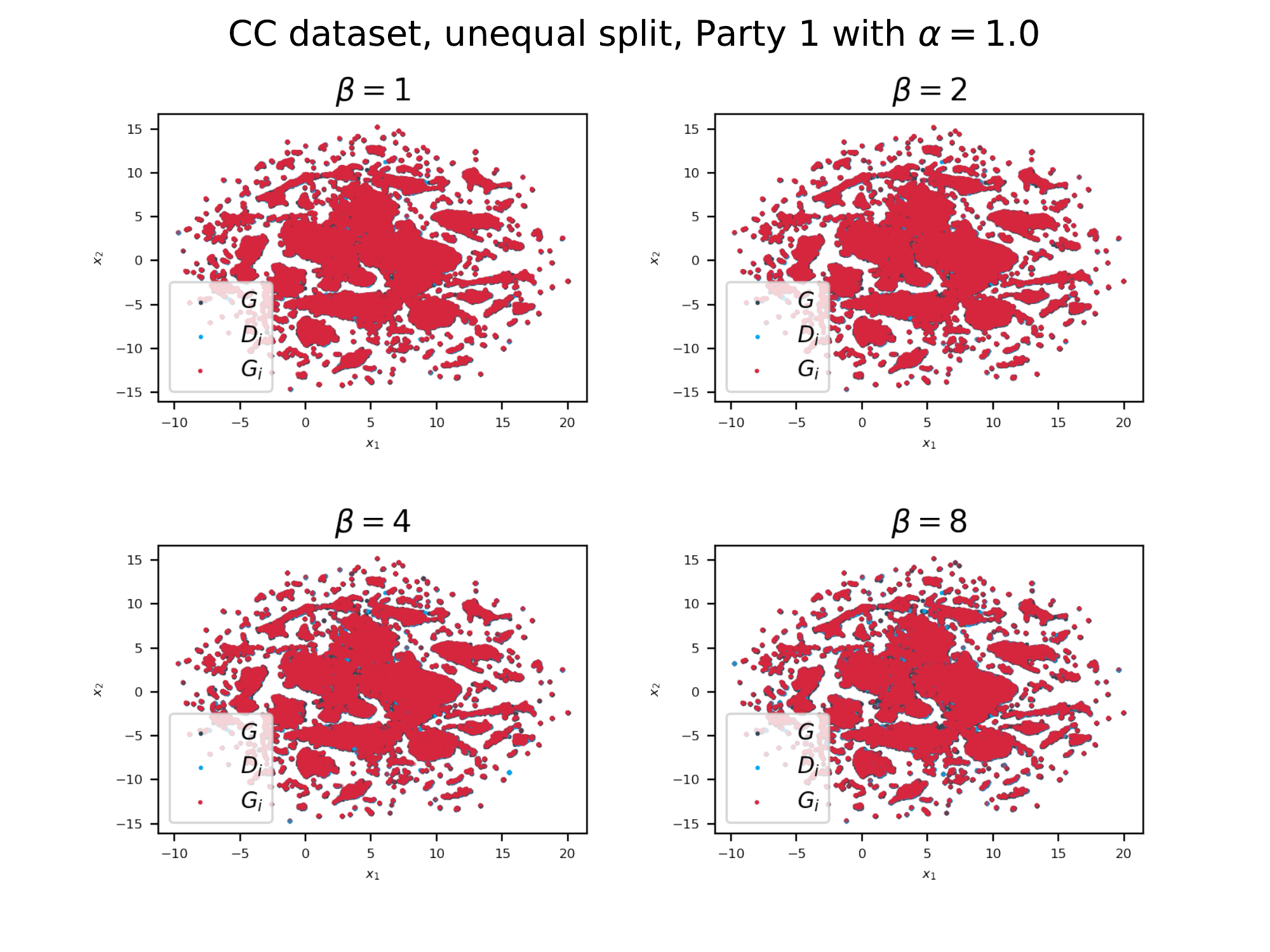}
\end{figure}
\begin{figure}[H]
    \centering
    \includegraphics[width=0.8\textwidth]{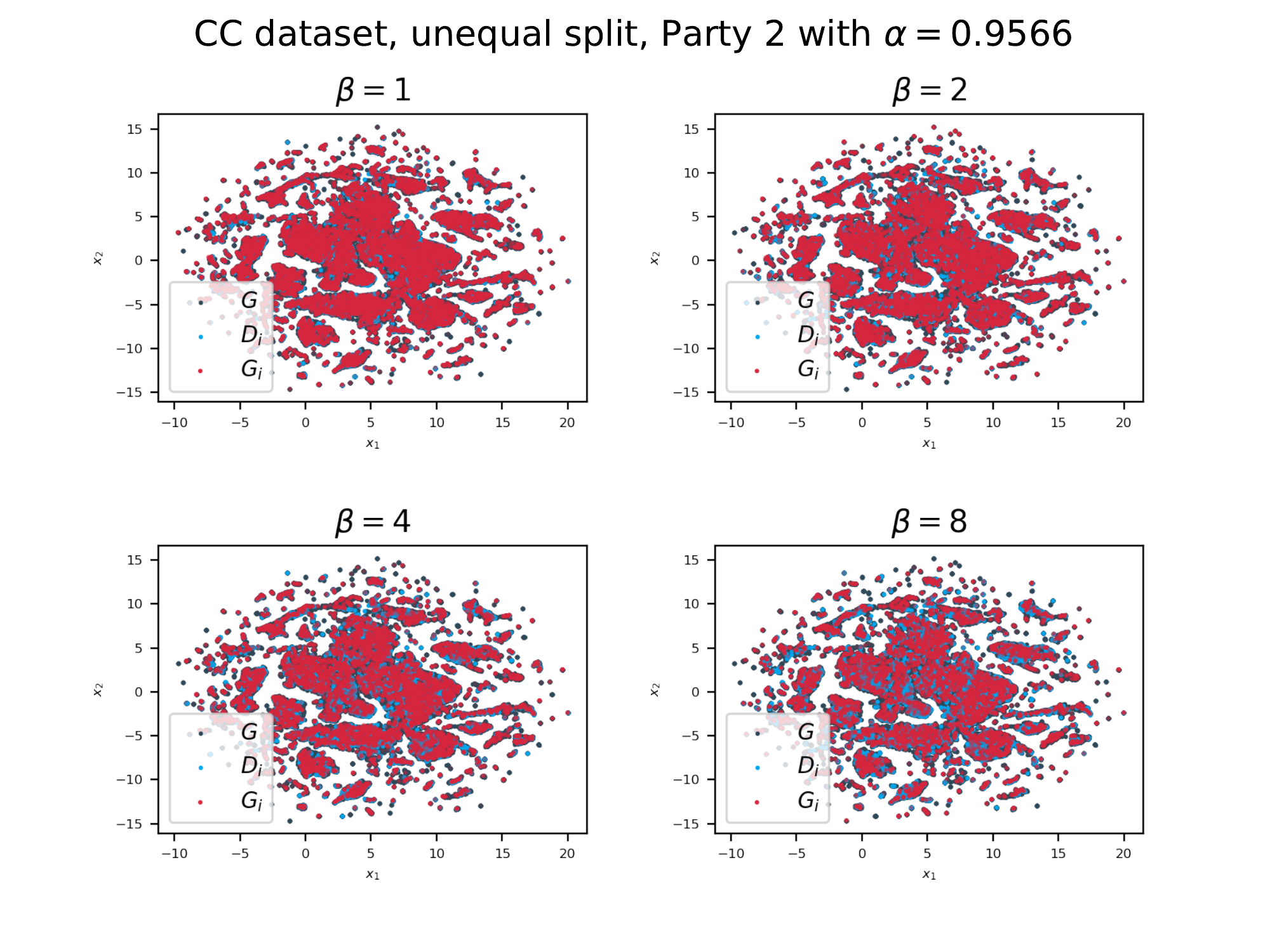}
\end{figure}
\begin{figure}[H]
    \centering
    \includegraphics[width=0.8\textwidth]{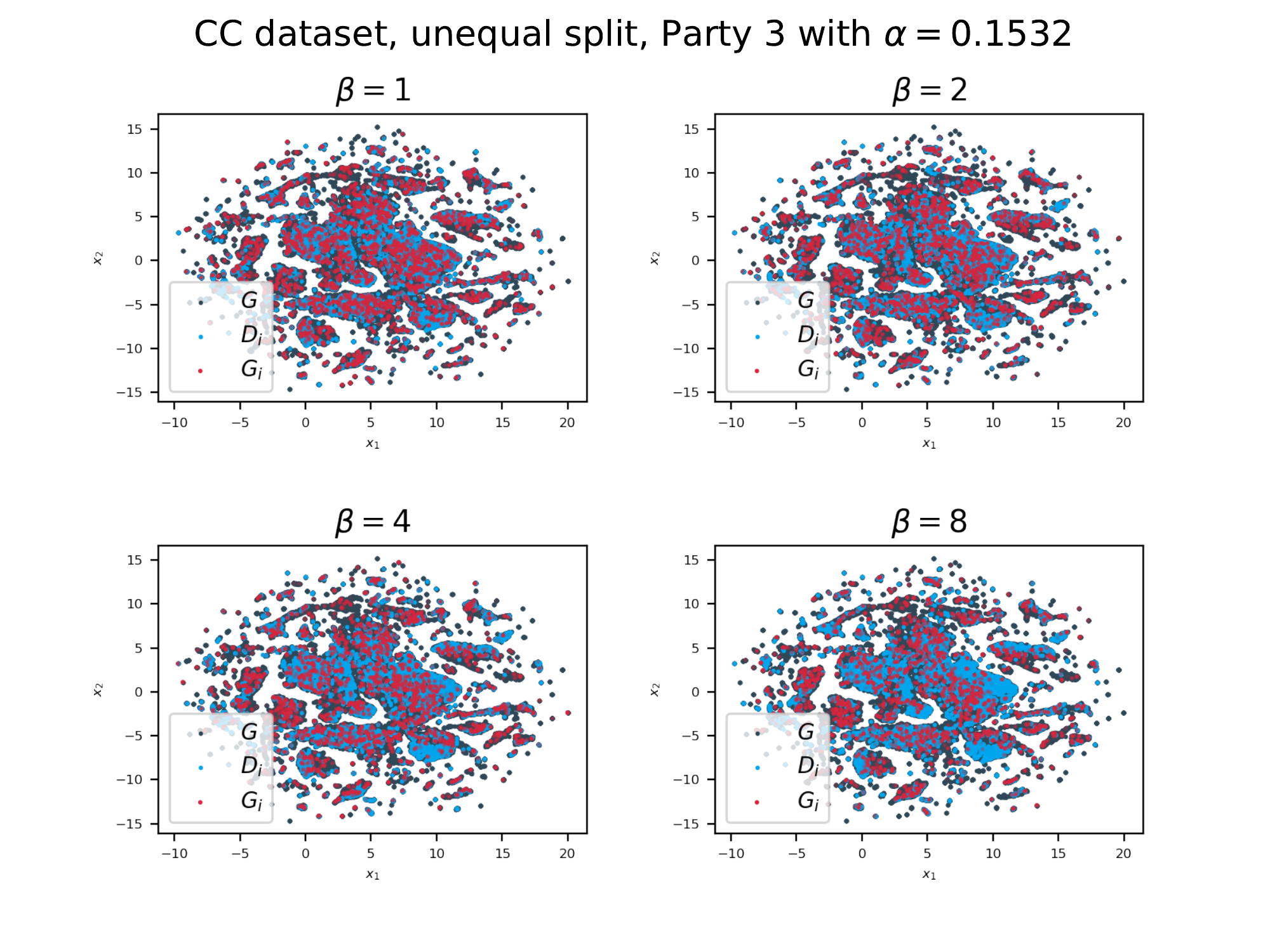}
\end{figure}
\begin{figure}[H]
    \centering
    \includegraphics[width=0.8\textwidth]{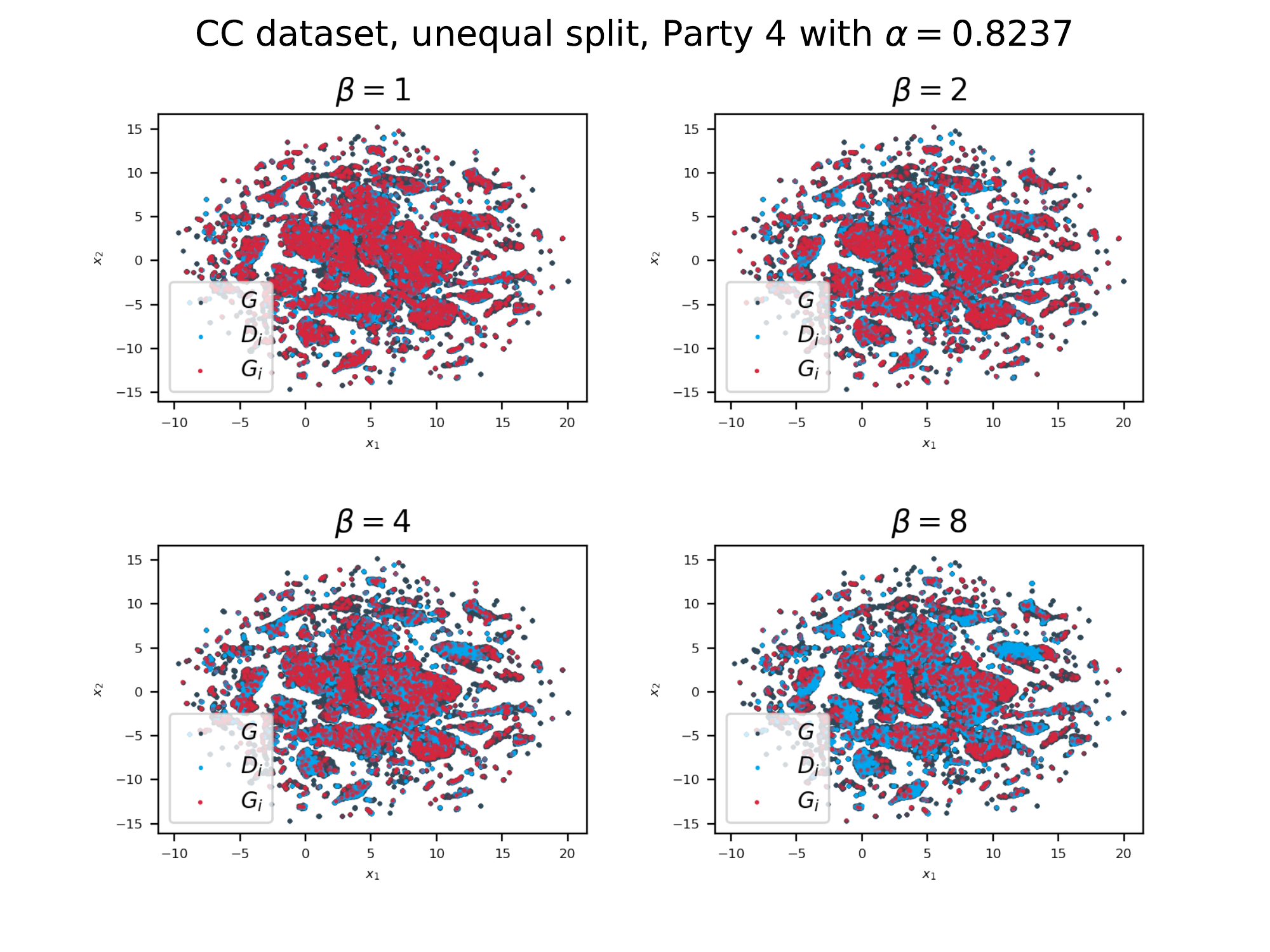}
\end{figure}
\begin{figure}[H]
    \centering
    \includegraphics[width=0.8\textwidth]{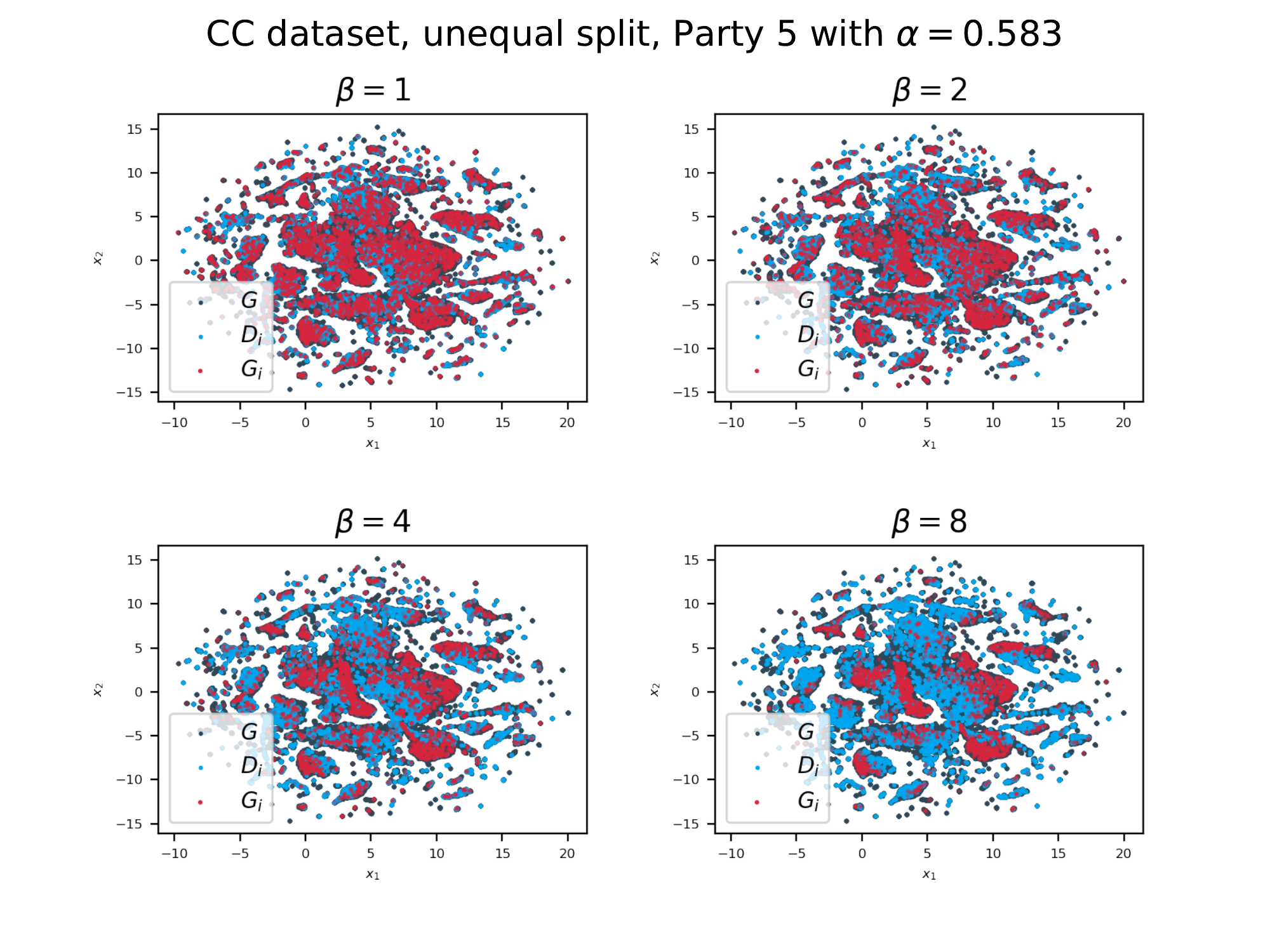}
\end{figure}
\begin{figure}[H]
    \centering
    \includegraphics[width=0.8\textwidth]{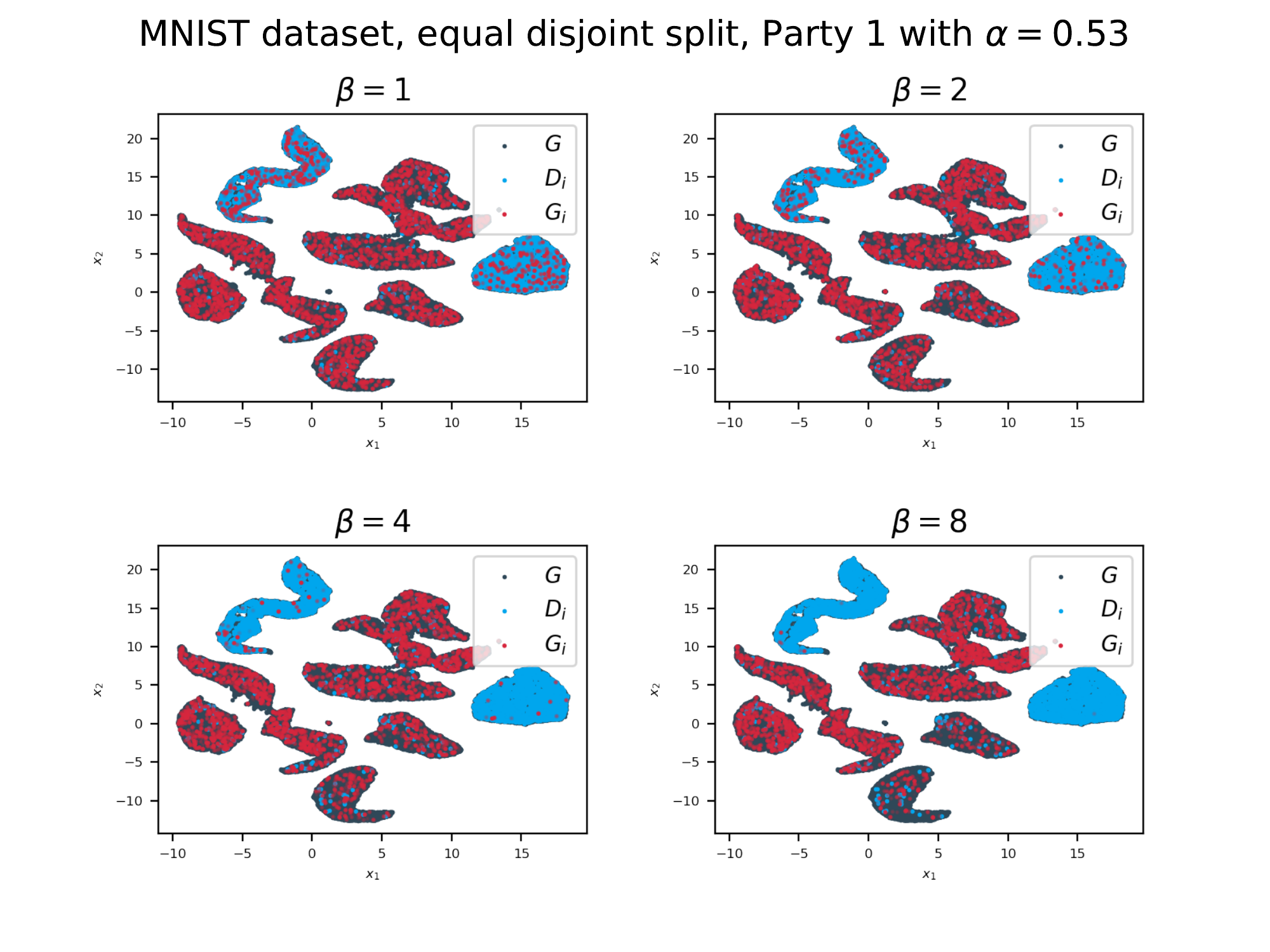}
\end{figure}
\begin{figure}[H]
    \centering
    \includegraphics[width=0.8\textwidth]{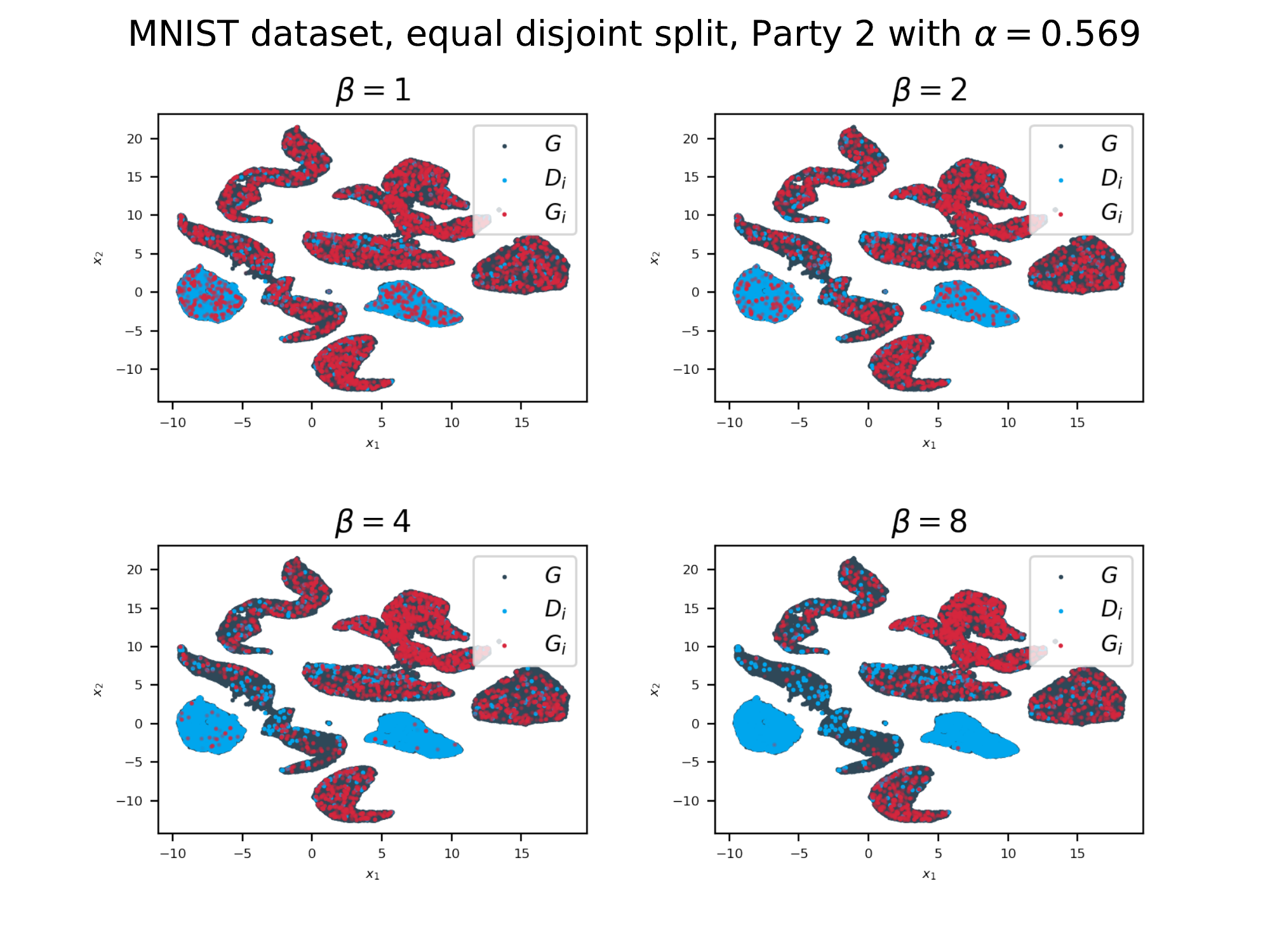}
\end{figure}
\begin{figure}[H]
    \centering
    \includegraphics[width=0.8\textwidth]{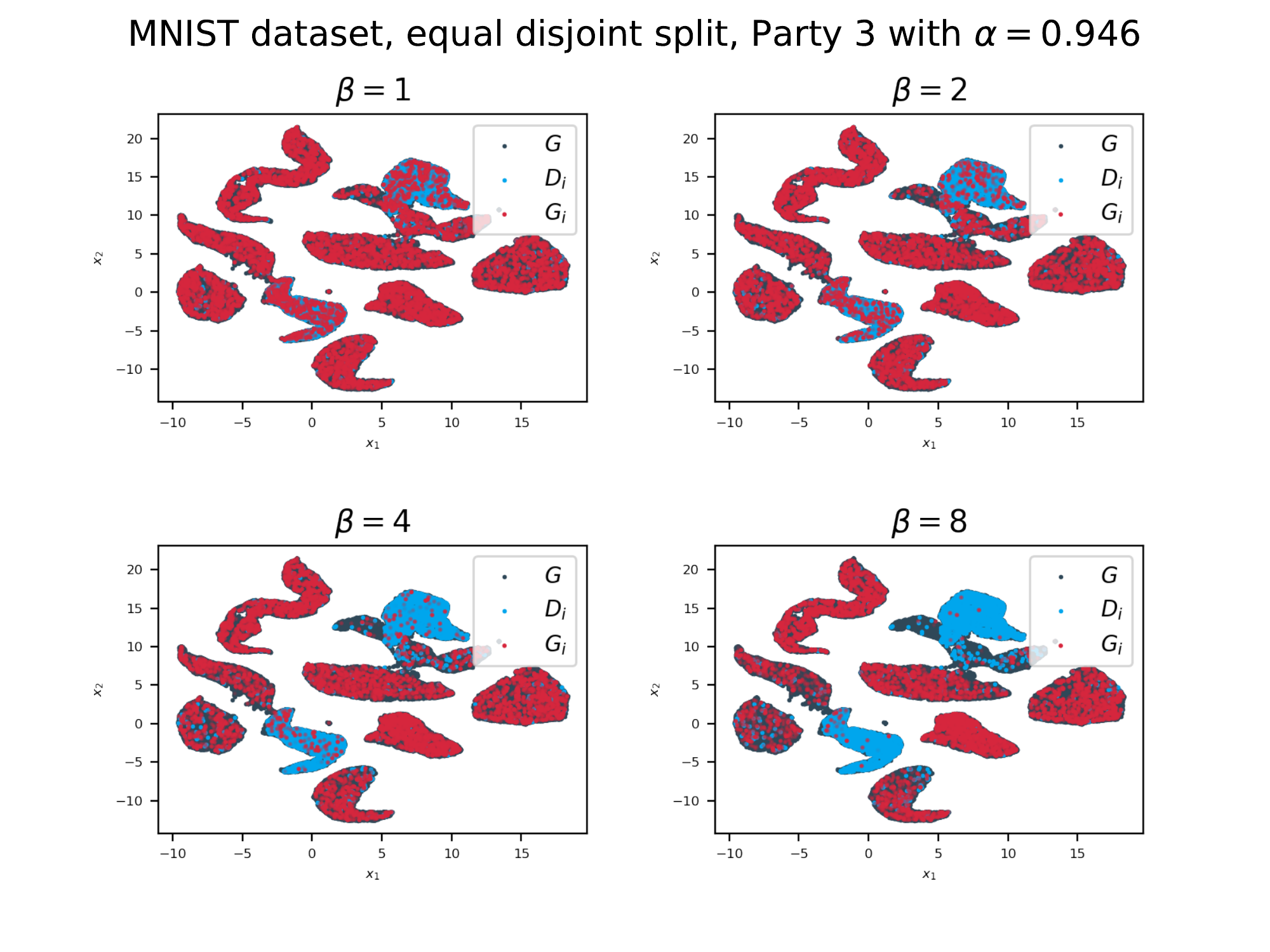}
\end{figure}
\begin{figure}[H]
    \centering
    \includegraphics[width=0.8\textwidth]{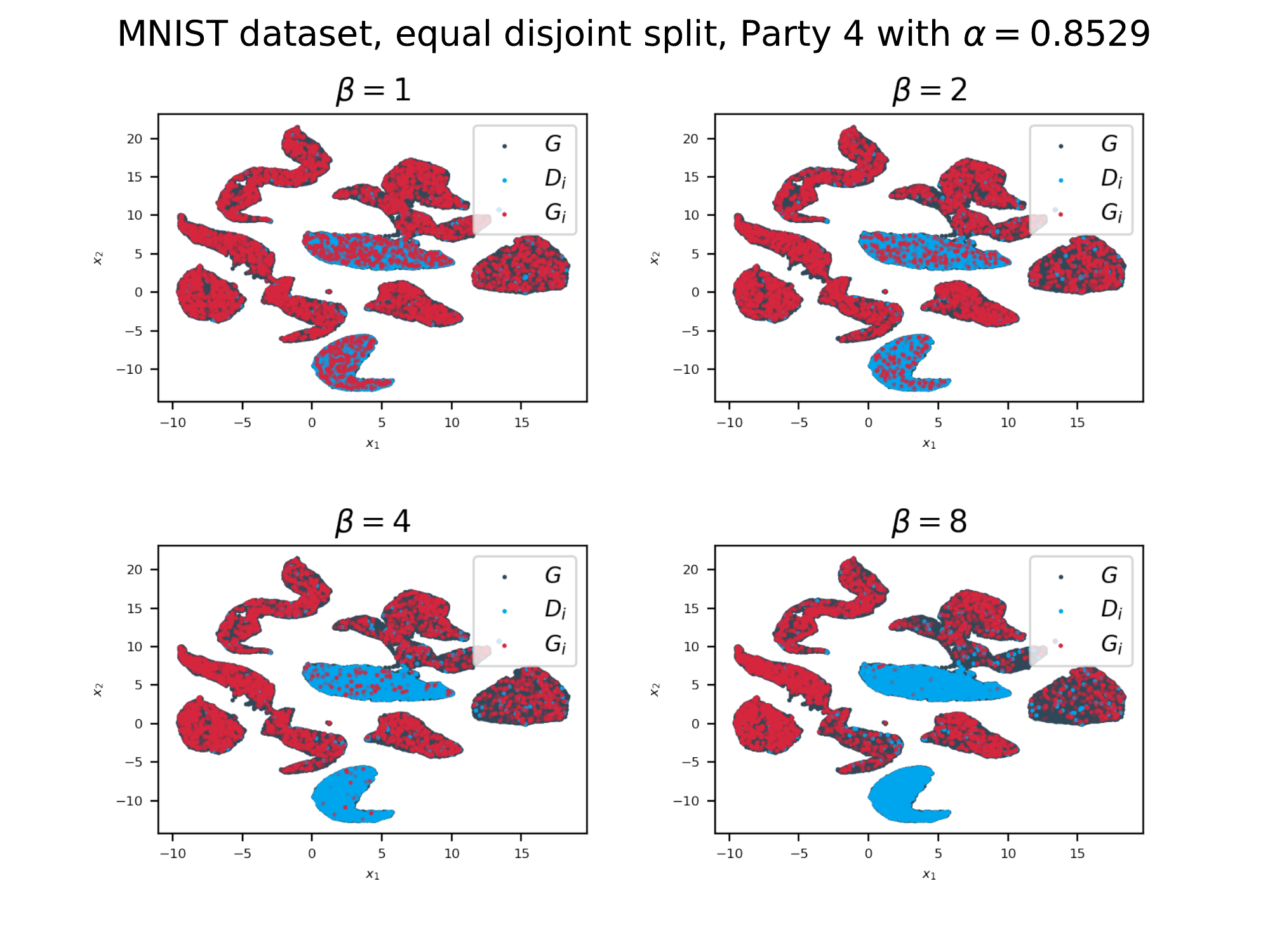}
\end{figure}
\begin{figure}[H]
    \centering
    \includegraphics[width=0.8\textwidth]{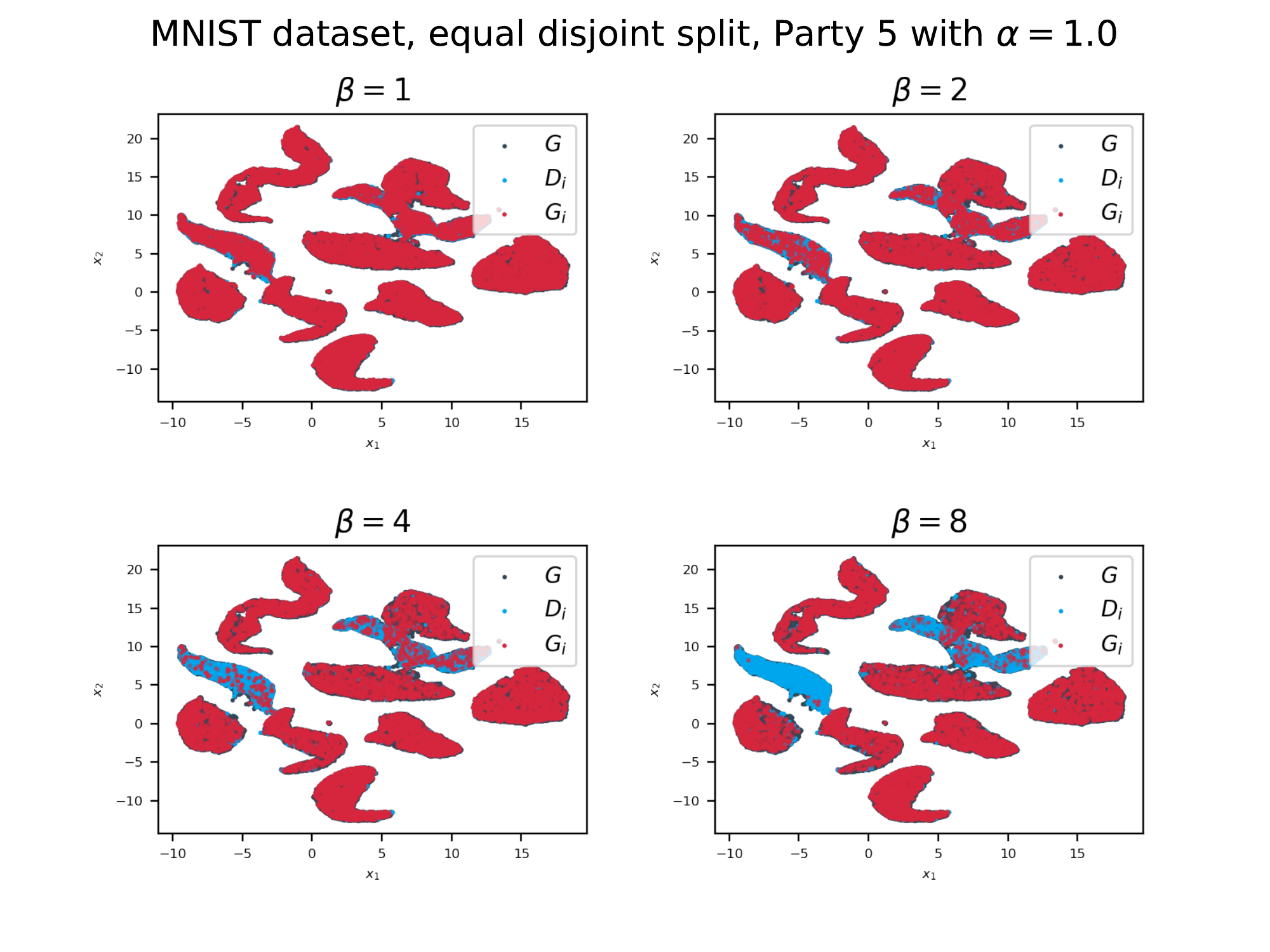}
\end{figure}
\begin{figure}[H]
    \centering
    \includegraphics[width=0.8\textwidth]{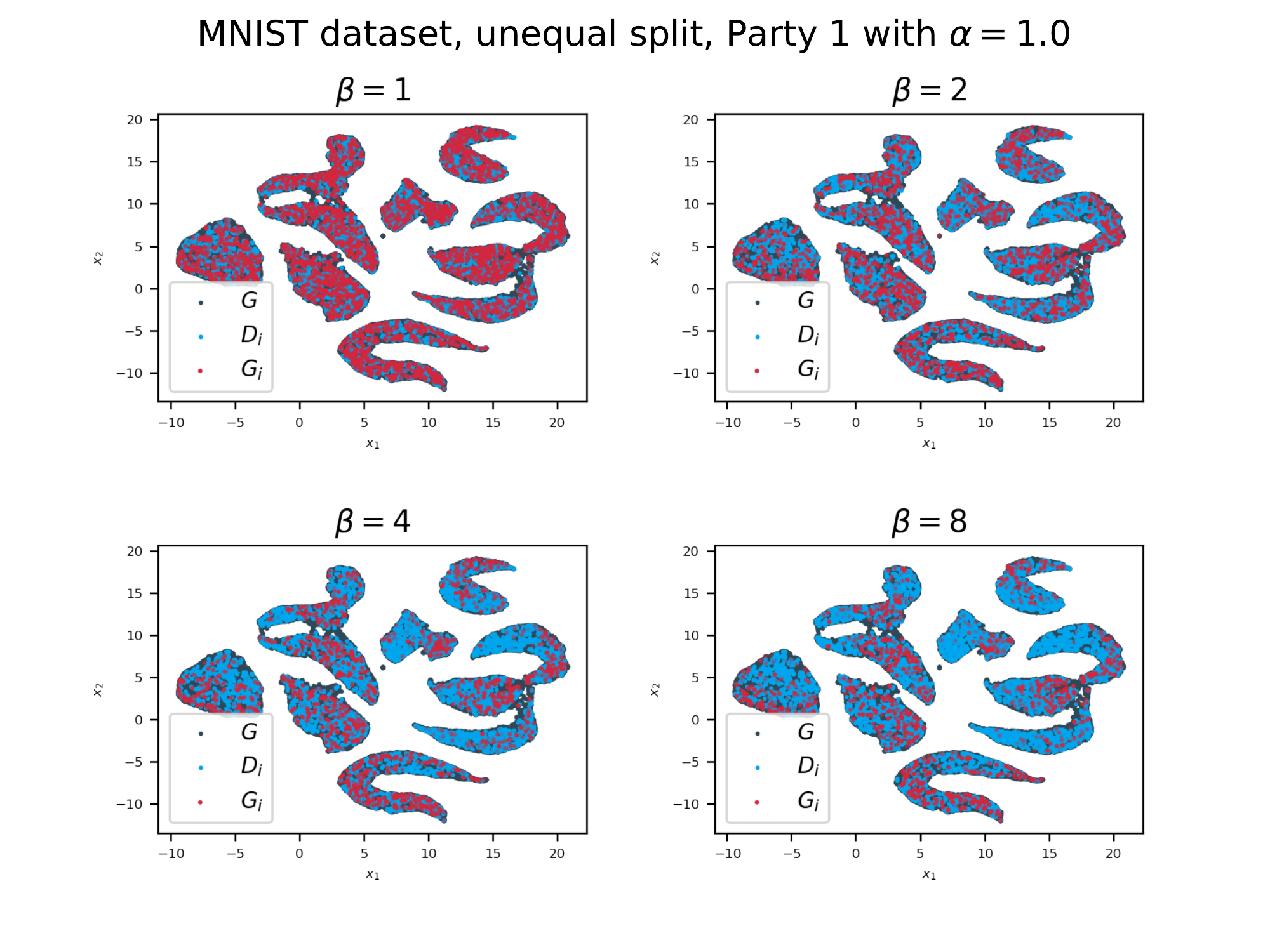}
\end{figure}
\begin{figure}[H]
    \centering
    \includegraphics[width=0.8\textwidth]{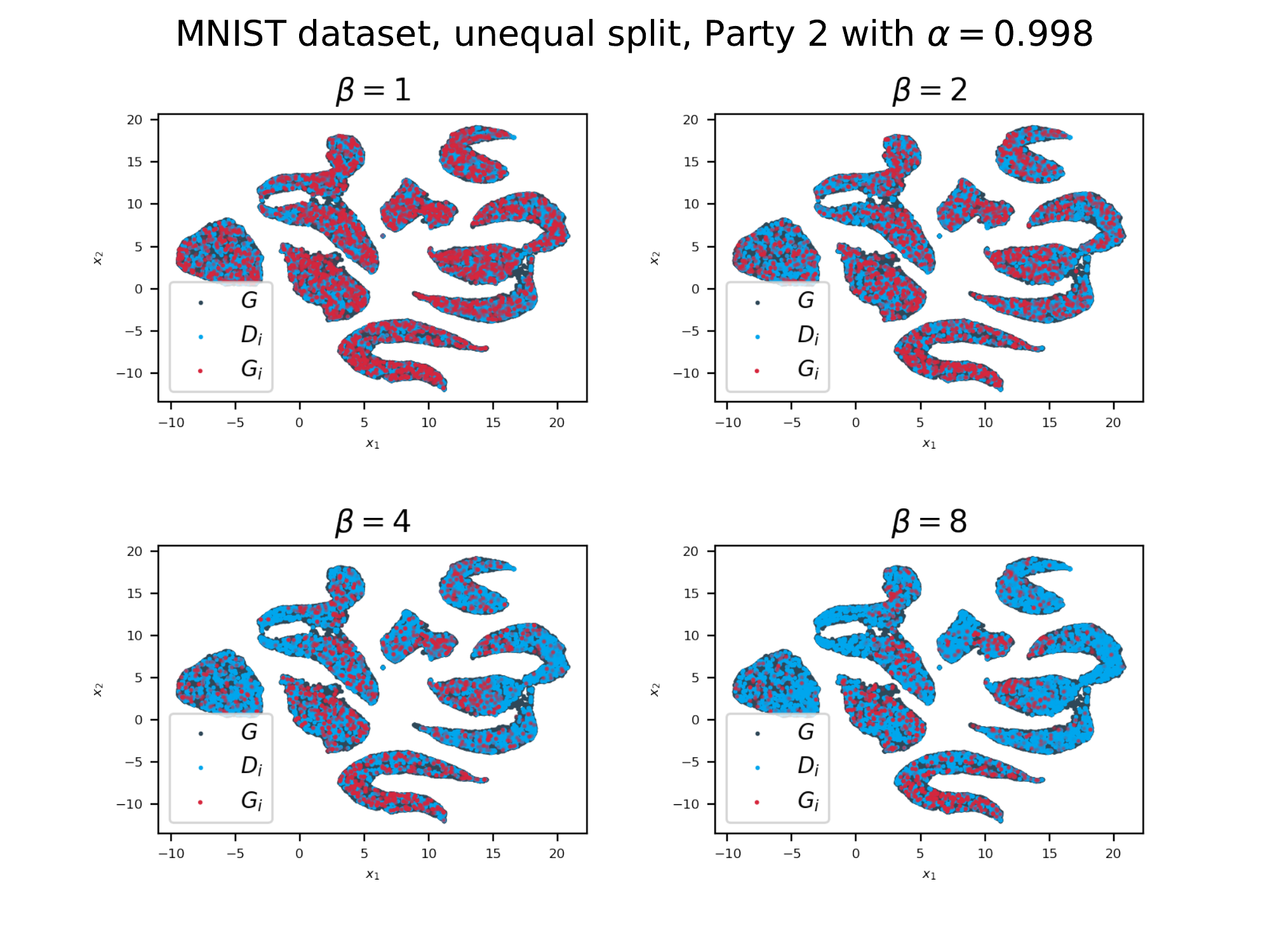}
\end{figure}
\begin{figure}[H]
    \centering
    \includegraphics[width=0.8\textwidth]{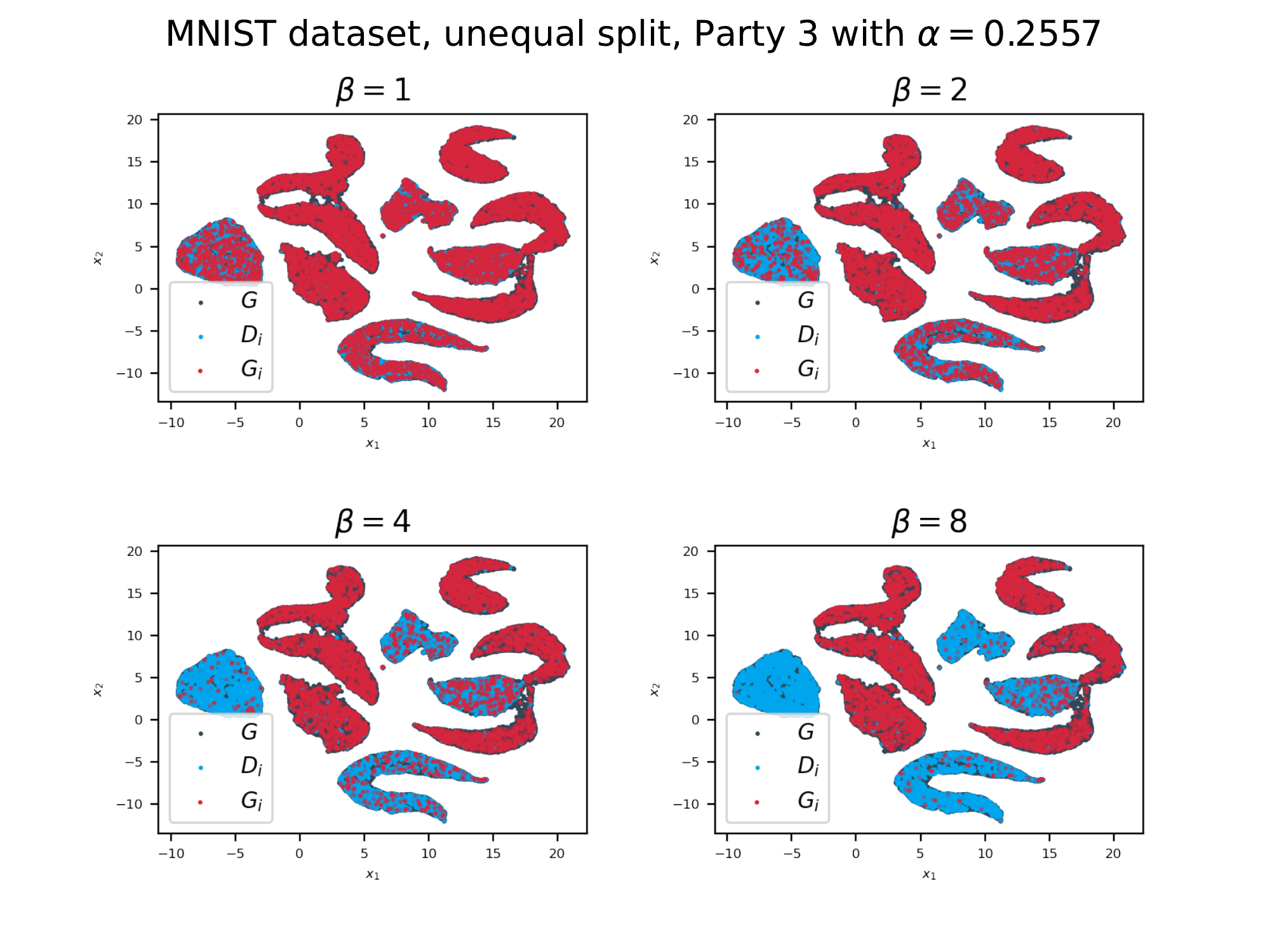}
\end{figure}
\begin{figure}[H]
    \centering
    \includegraphics[width=0.8\textwidth]{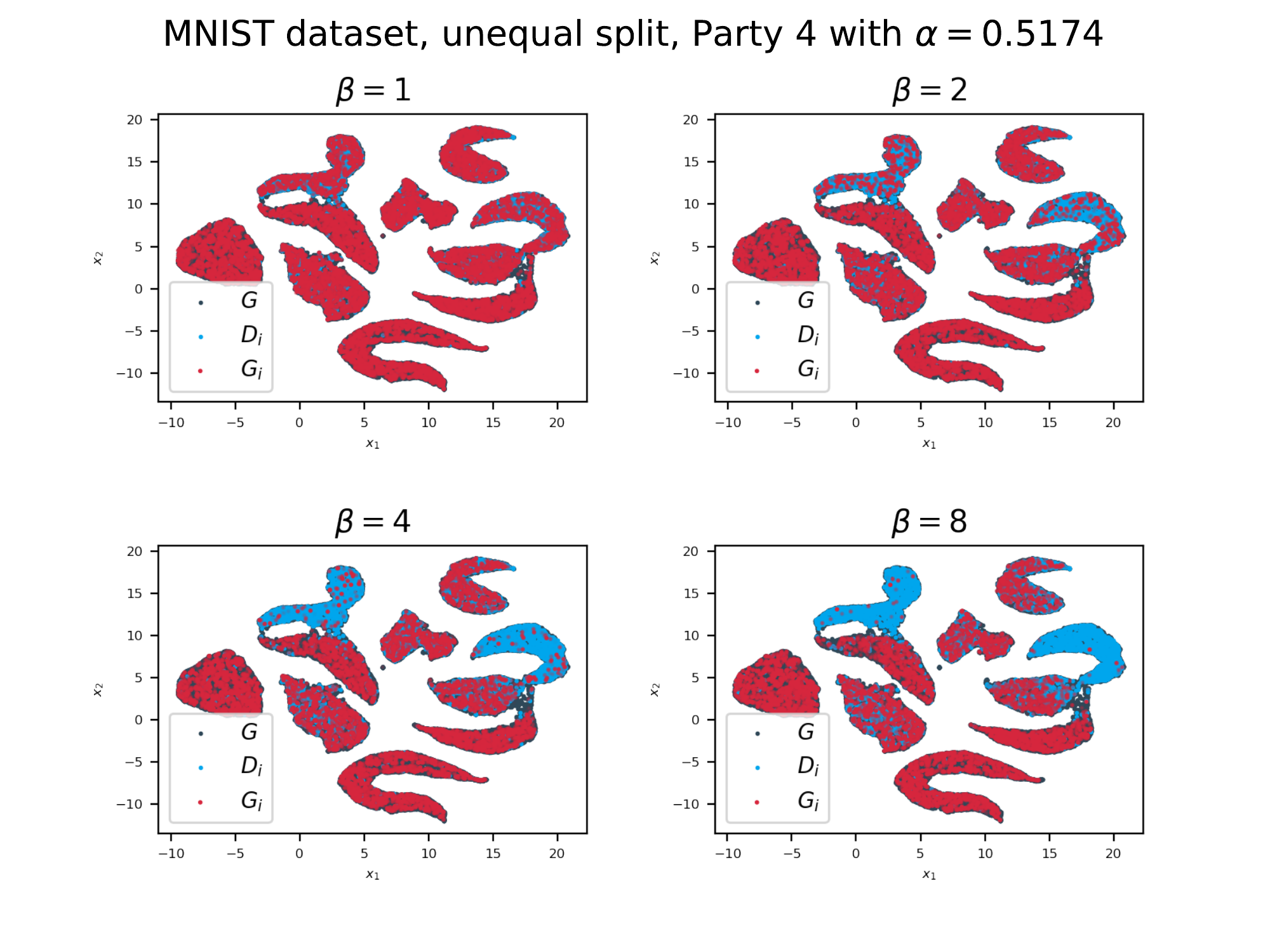}
\end{figure}
\begin{figure}[H]
    \centering
    \includegraphics[width=0.8\textwidth]{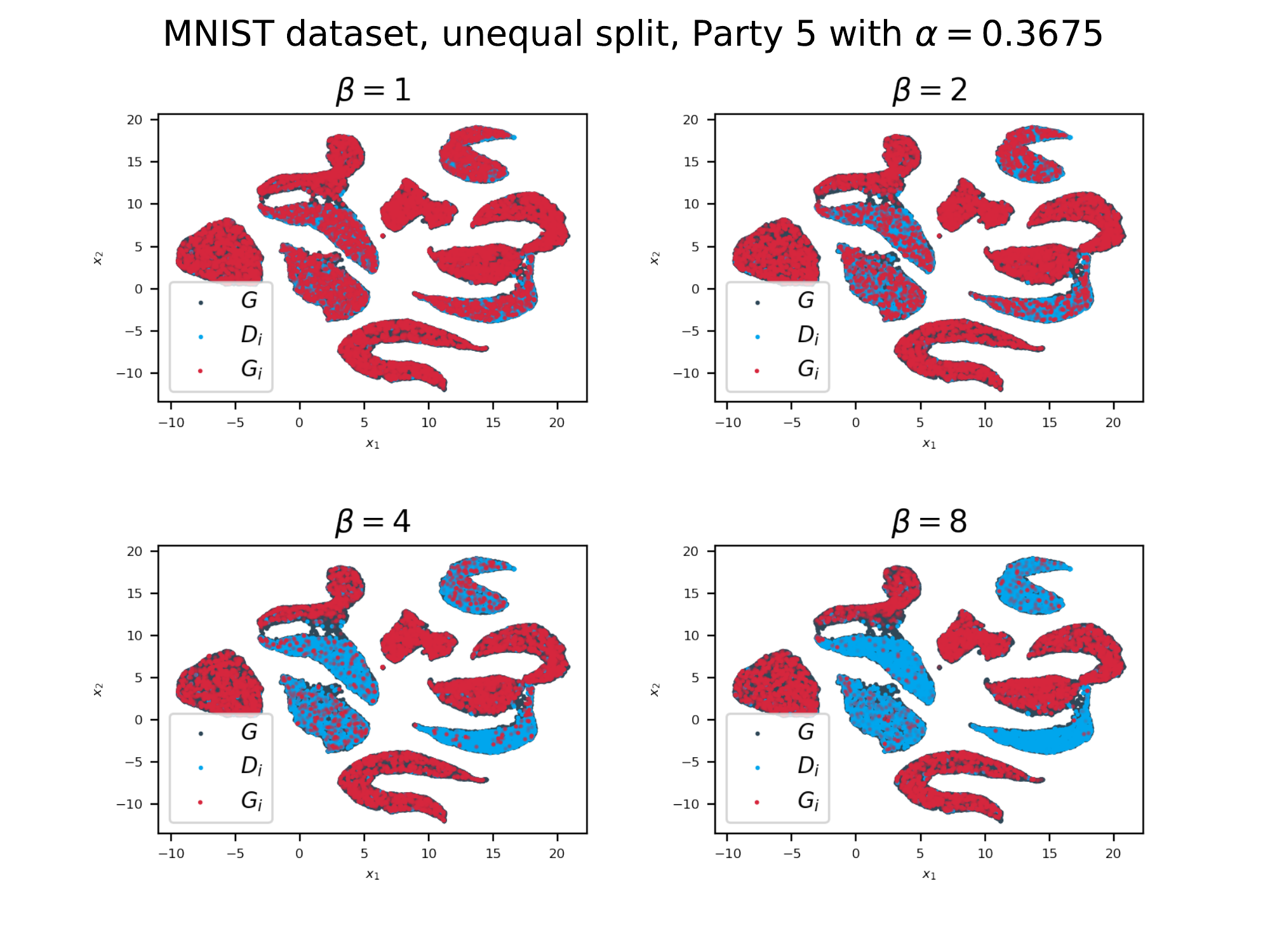}
\end{figure}
\begin{figure}[H]
    \centering
    \includegraphics[width=0.8\textwidth]{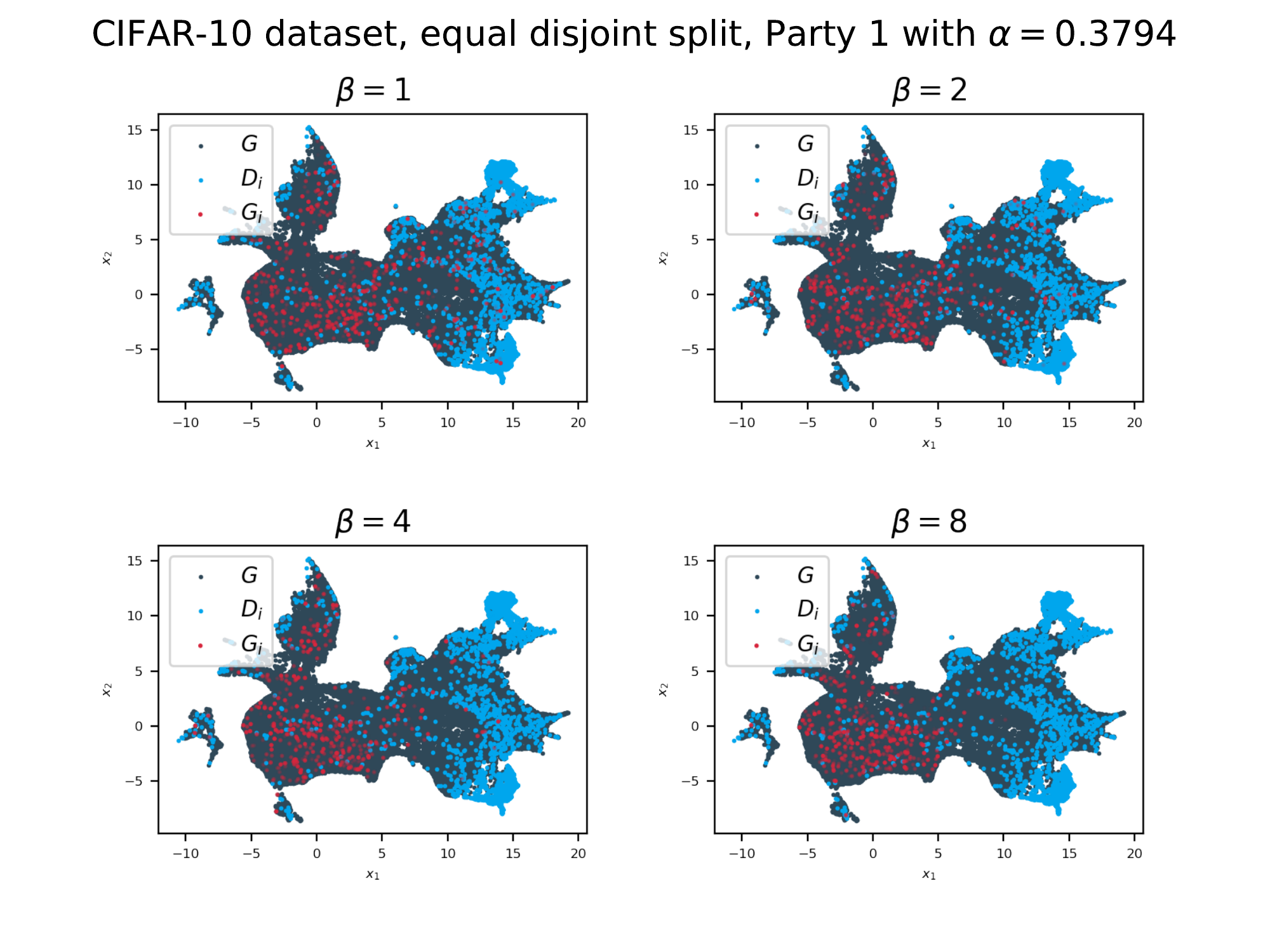}
\end{figure}
\begin{figure}[H]
    \centering
    \includegraphics[width=0.8\textwidth]{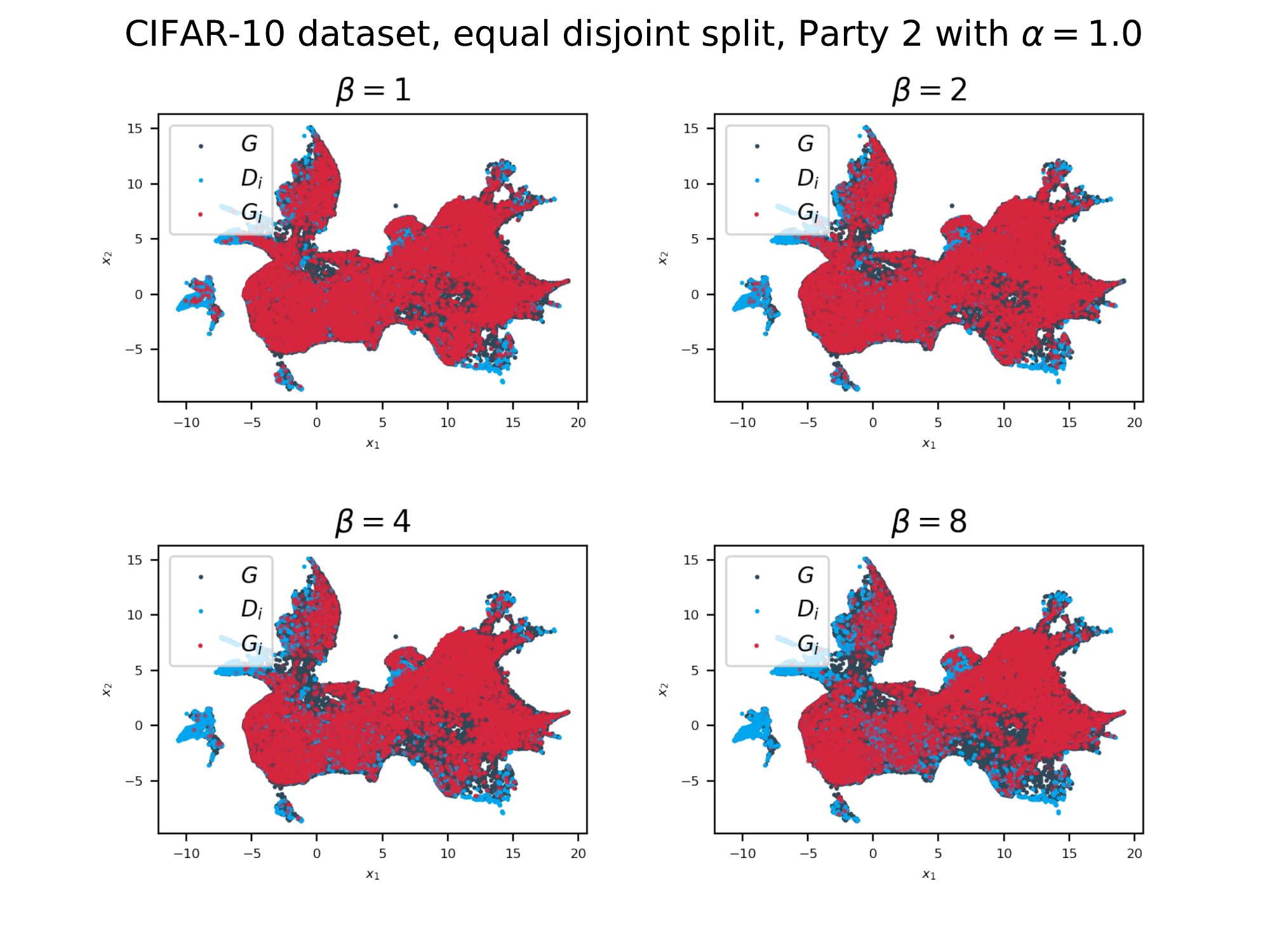}
\end{figure}
\begin{figure}[H]
    \centering
    \includegraphics[width=0.8\textwidth]{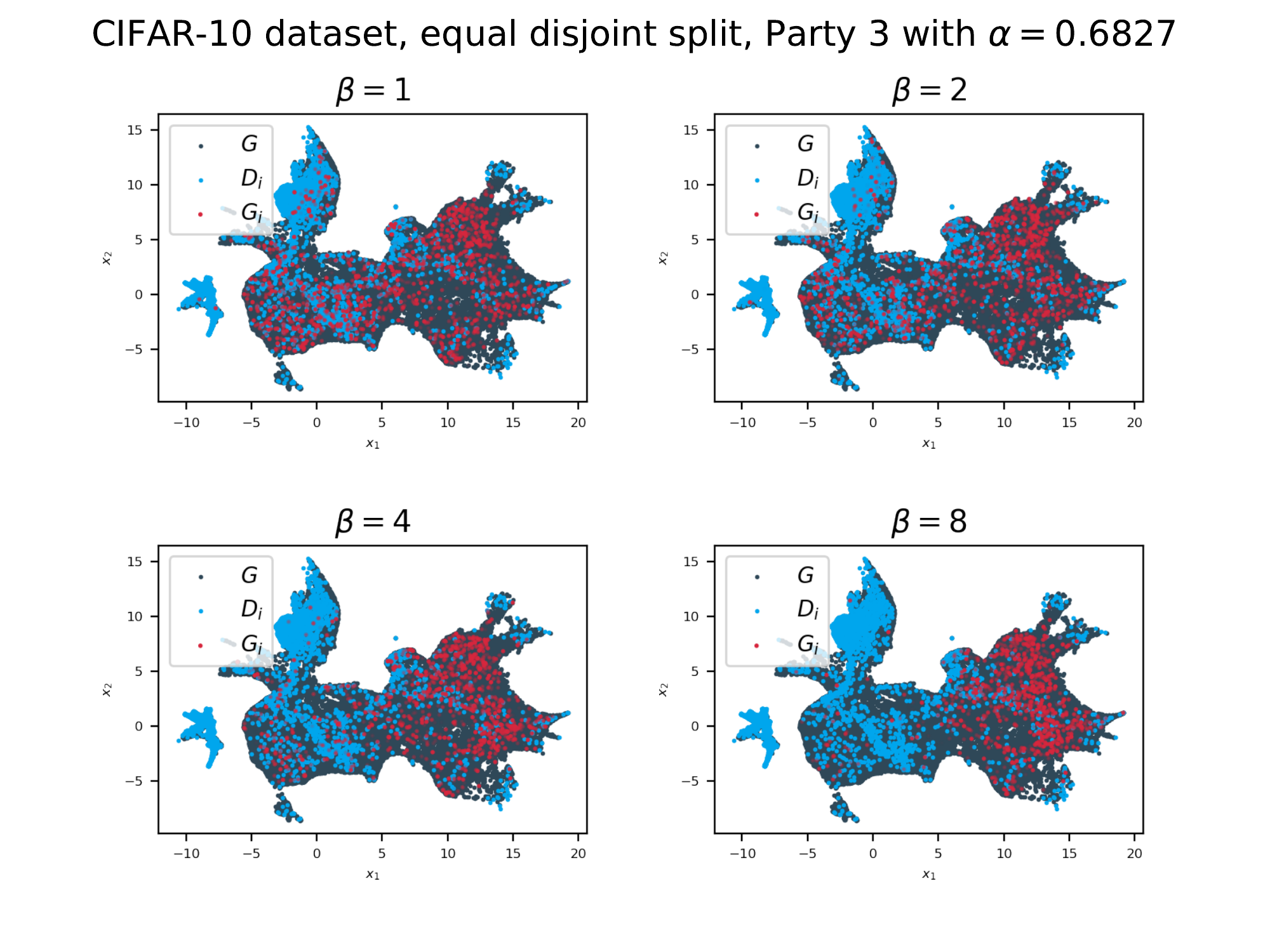}
\end{figure}
\begin{figure}[H]
    \centering
    \includegraphics[width=0.8\textwidth]{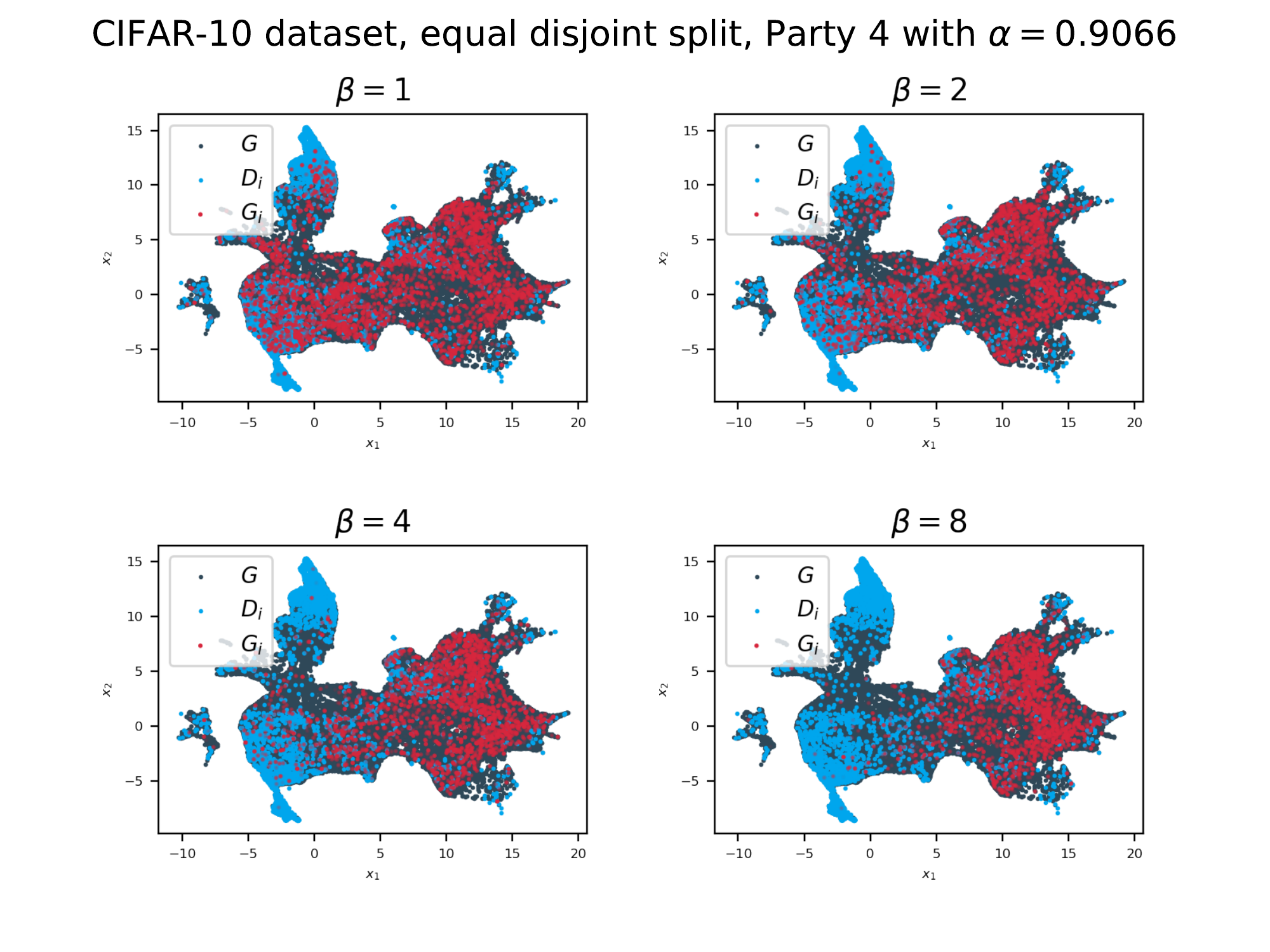}
\end{figure}
\begin{figure}[H]
    \centering
    \includegraphics[width=0.8\textwidth]{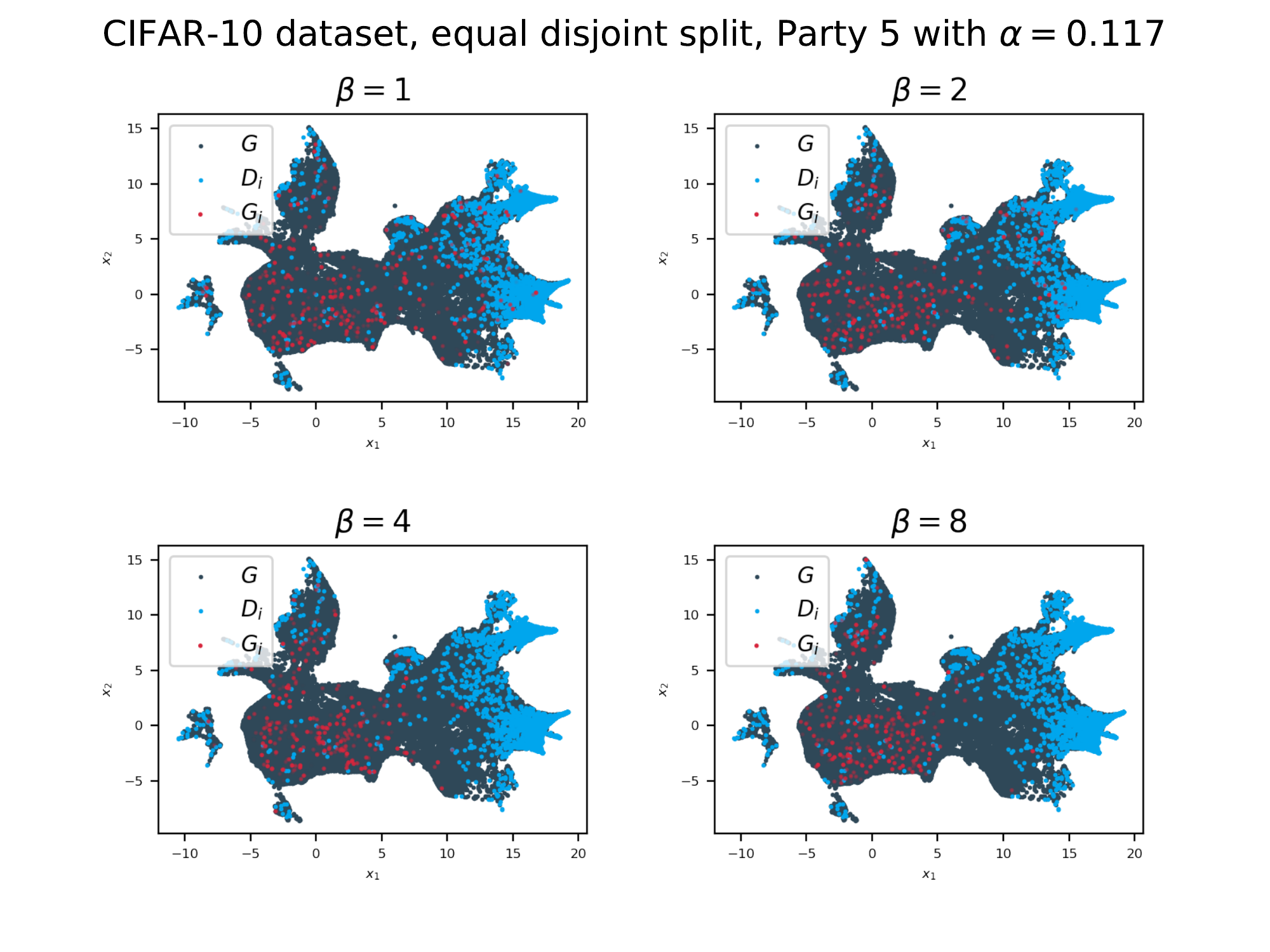}
\end{figure}
\begin{figure}[H]
    \centering
    \includegraphics[width=0.8\textwidth]{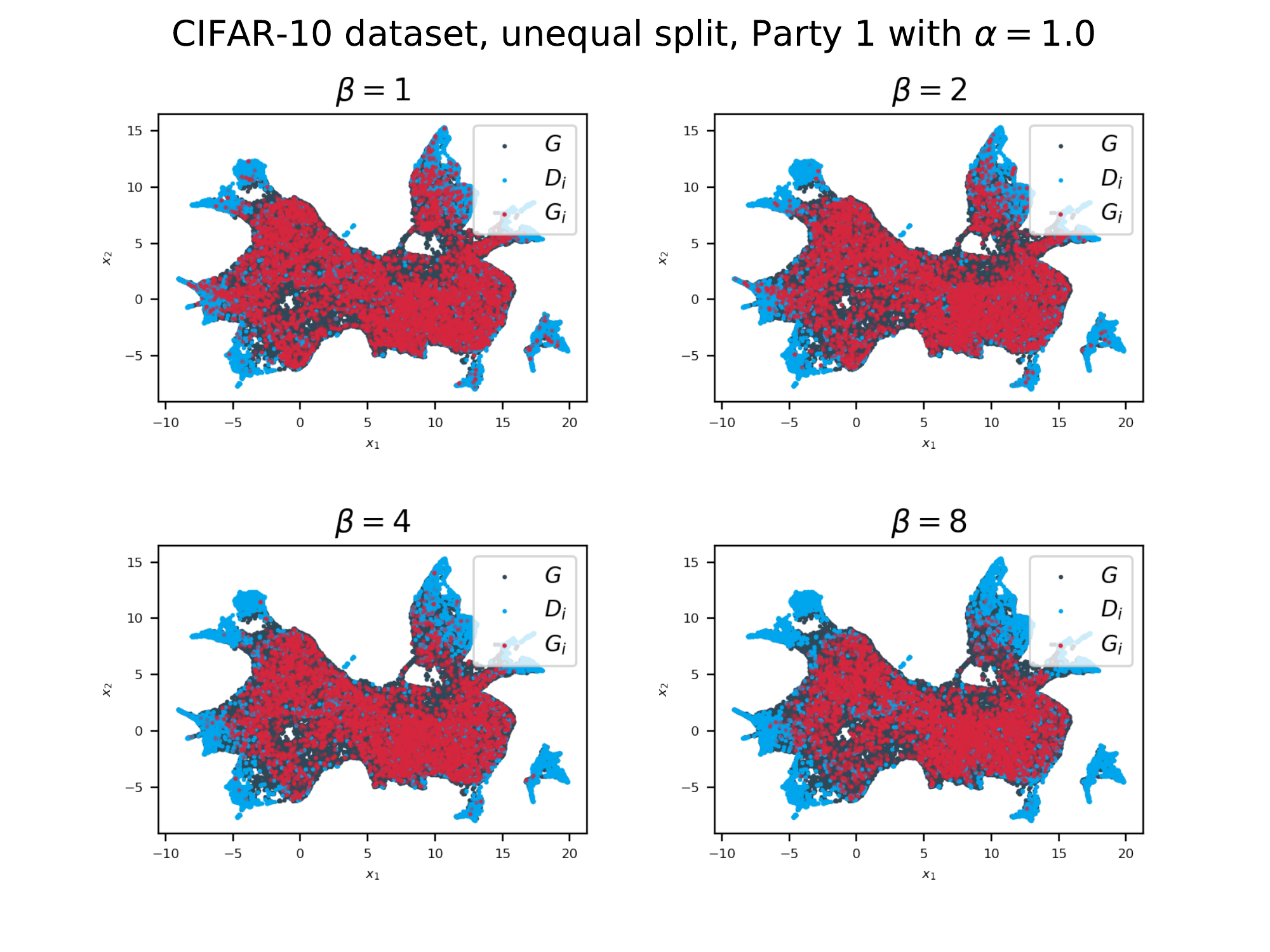}
\end{figure}
\begin{figure}[H]
    \centering
    \includegraphics[width=0.8\textwidth]{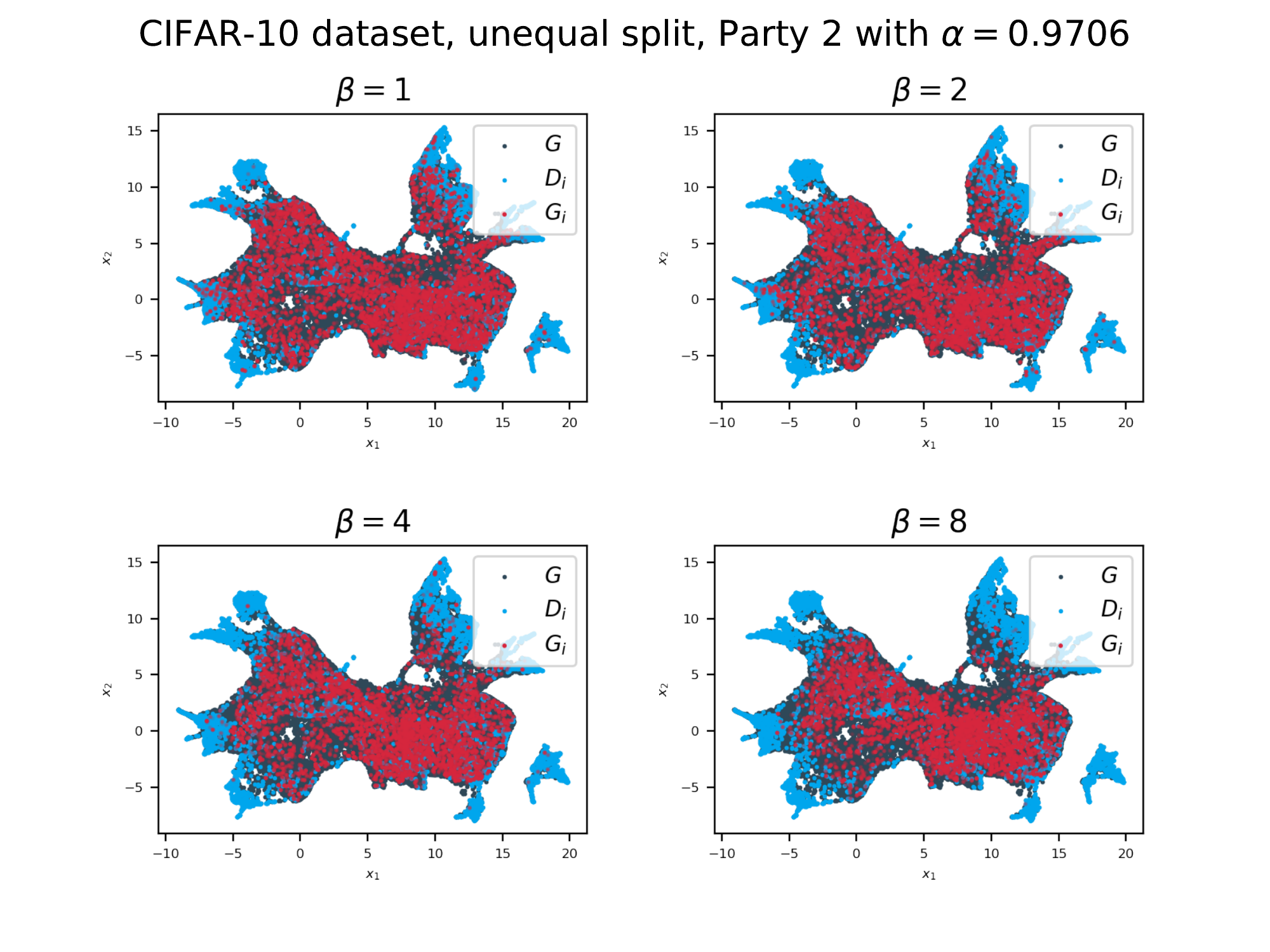}
\end{figure}
\begin{figure}[H]
    \centering
    \includegraphics[width=0.8\textwidth]{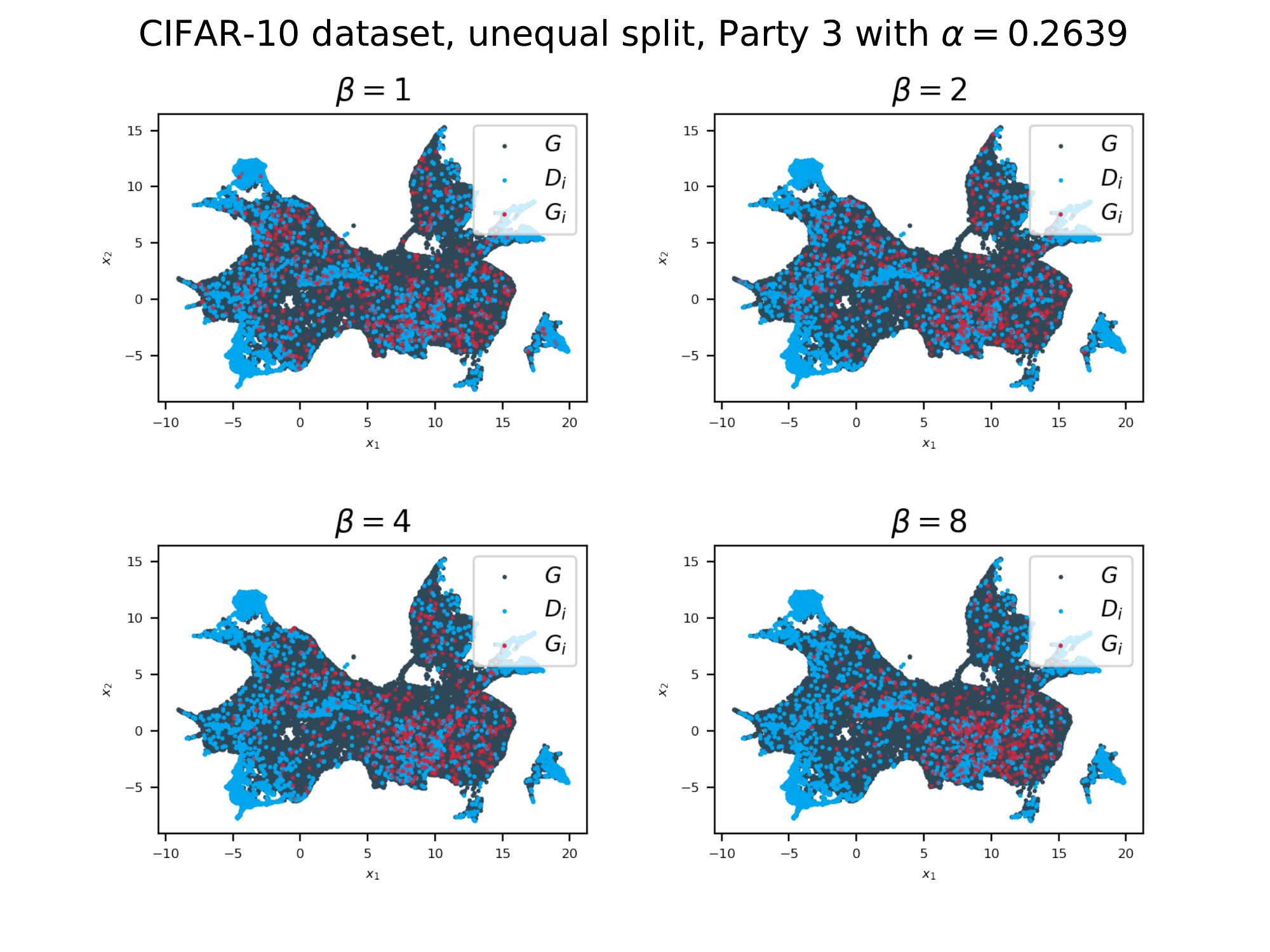}
\end{figure}
\begin{figure}[H]
    \centering
    \includegraphics[width=0.8\textwidth]{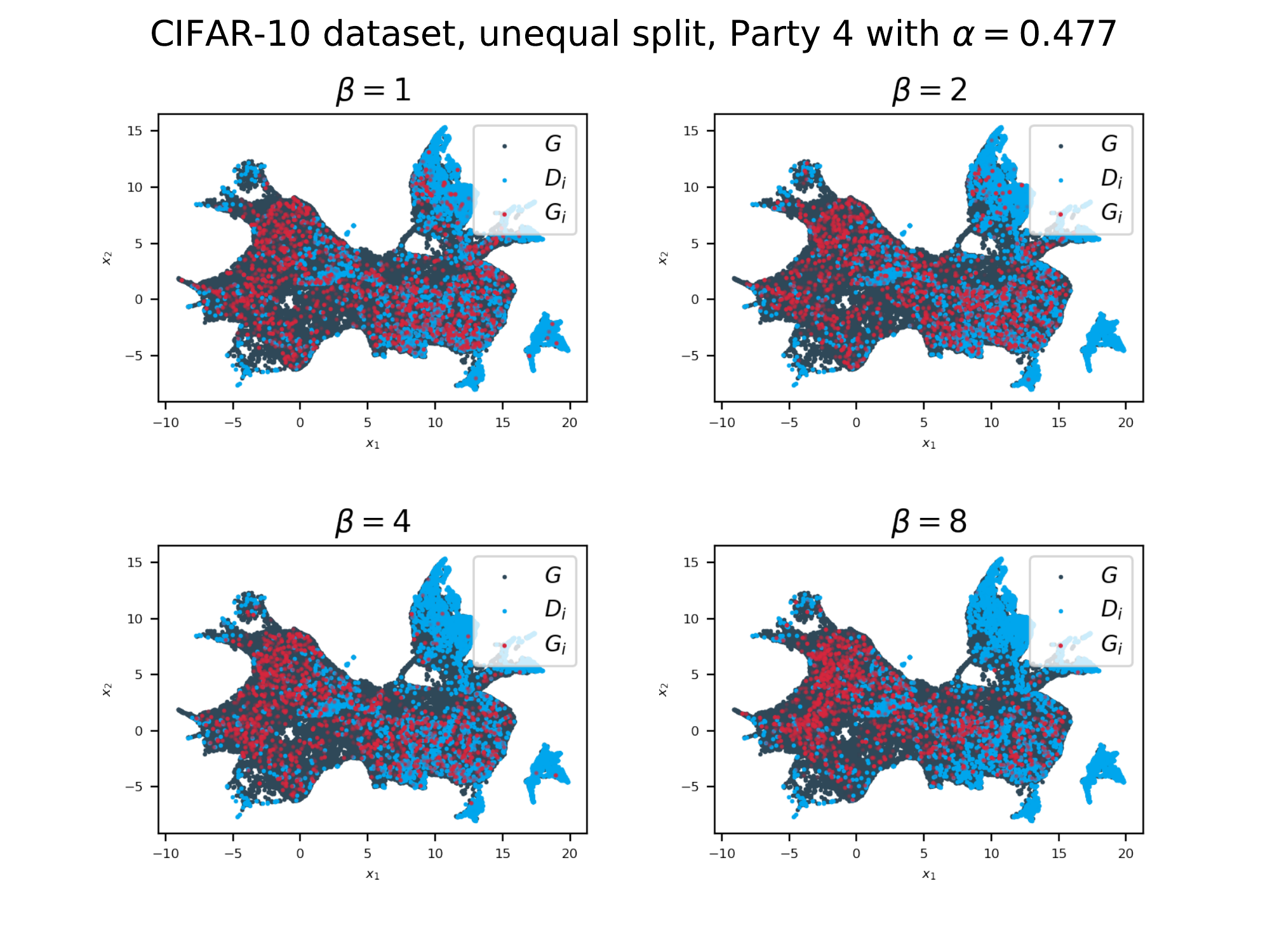}
\end{figure}
\begin{figure}[H]
    \centering
    \includegraphics[width=0.8\textwidth]{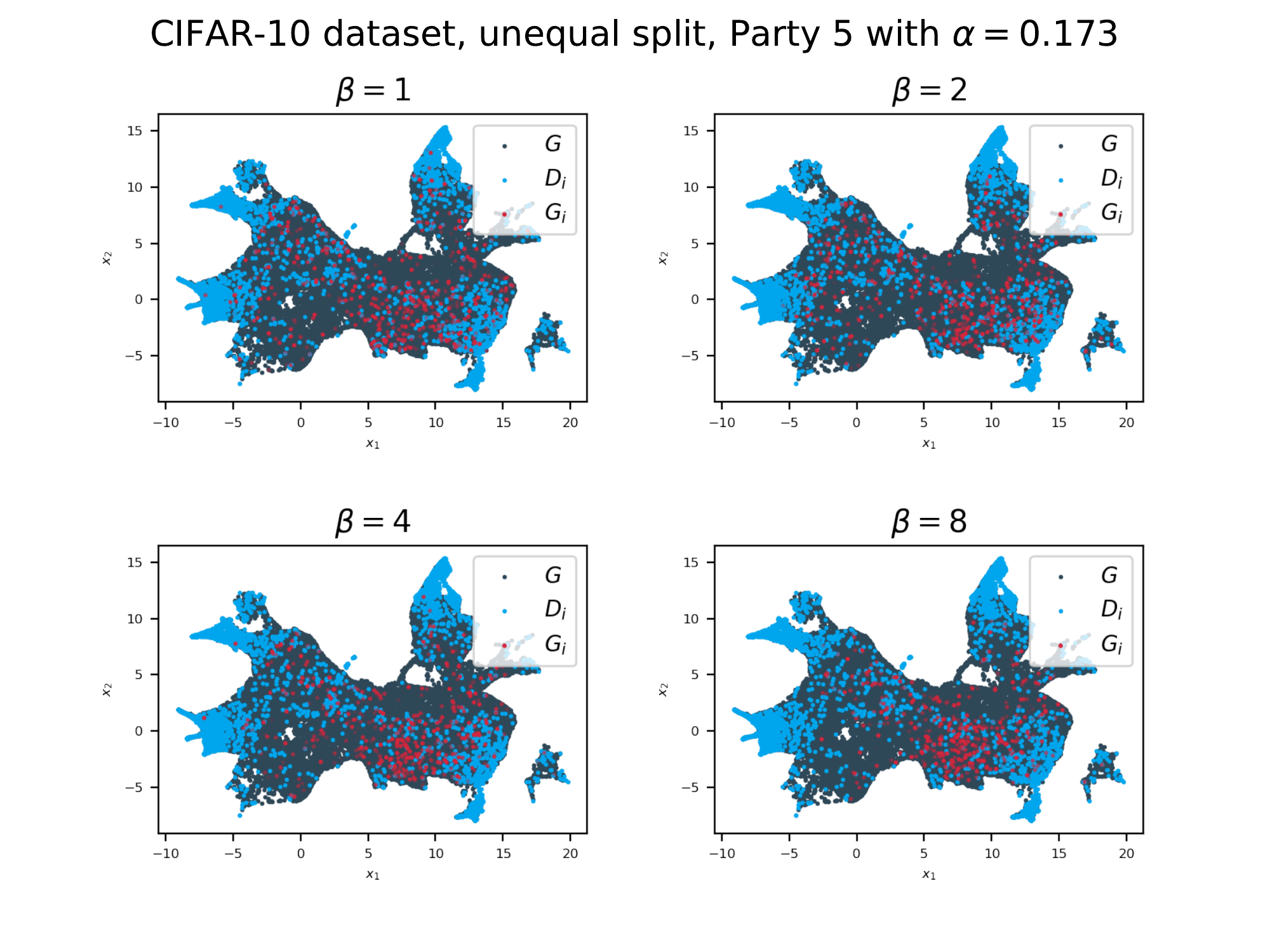}
\end{figure}

\end{document}